\newcommand{\la}{\langle}
\newcommand{\ra}{\rangle}
\newenvironment{nscenter}
{\parskip=5pt\par\nopagebreak\centering}
{\par\noindent\ignorespacesafterend}
\begin{document}

\title{\huge Towards Understanding Transformers in Learning Random Walks}

\author
{
Wei Shi\thanks{School of Computing \& Data Science, The University of Hong Kong. Email: {shiwei1997@connect.hku.hk}}
\qquad Yuan Cao\thanks{School of Computing \& Data Science, The University of Hong Kong. Email: {yuancao@.hku.hk}}
}

\date{}

\maketitle

\begin{abstract}
Transformers have proven highly effective across various applications, especially in handling sequential data such as natural languages and time series. However, transformer models often lack clear interpretability, and the success of transformers has not been well understood in theory. In this paper, we study the capability and interpretability of transformers in learning a family of classic statistical models, namely random walks on circles. We theoretically demonstrate that, after training with gradient descent, a one-layer transformer model can achieve optimal accuracy in predicting random walks. Importantly, our analysis reveals that the trained model is interpretable: the trained softmax attention serves as a token selector, focusing on the direct parent state; subsequently, the value matrix executes a one-step probability transition to predict the location of the next state based on this parent state. We also show that certain edge cases not covered by our theory are indeed failure cases, demonstrating that our theoretical conditions are tight. By investigating these success and failure cases, it is revealed that gradient descent with small initialization may fail or struggle to converge to a good solution in certain simple tasks even beyond random walks. Experiments are conducted to support our theoretical findings.
\end{abstract}


\section{Introduction}\label{sec:introduction}
In recent years, transformers \citep{vaswani2017attention} have revolutionized many fields such as natural language processing \citep{achiam2023gpt,radford2019language,vaswani2017attention}, computer vision \citep{dosovitskiy2020image,rao2021dynamicvit}, reinforcement learning \citep{chen2021decision,janner2021offline,jumper2021highly}, and have rapidly emerged as a key component in state-of-the-art deep learning models due to their ability to capture complex dependencies in data. While transformers exhibit remarkable practical performance, the underlying mechanisms of transformers are still not well understood due to their complex architecture.

In order to theoretically understand transformers, a number of recent works have investigated their capability in learning from sequential data that follows certain classic statistical models. 
Specifically, \citet{cao2025transformers,nichanitransformers} studied sequential data with an underlying causal graph, and theoretically showed how transformers can encode the causal structure with the self-attention mechanism for in-context learning. \citet{chen2024unveiling,edelman2024evolution} considered the task of in-context learning of Markov chains, and investigated how two-layer transformers can make predictions for Markov chains according to the context. \citet{ildizself} considered Markov chain forecasting tasks and reveals a connection between a context-conditioned Markov chain and the self-attention mechanism. \citet{makkuva2025attention} characterized the loss landscape of a one-layer transformer and demonstrated the existence of global minima and bad local minima in learning Markovian data with vocabularies of size two. Although these works provided valuable insights, there remain many open questions that require further exploration. Notably, the Markov chain data studied in \citet{makkuva2025attention} can be essentially understood as a random walk over a space of only two states. Therefore, given the results in \citet{makkuva2025attention}, a natural question is when and how transformers can learn more general random walks over a larger state space.

\begin{wrapfigure}{R}{0.5\textwidth}
\begin{center}
	\vspace{-0.1in}
            {\includegraphics[width=0.37\textwidth]{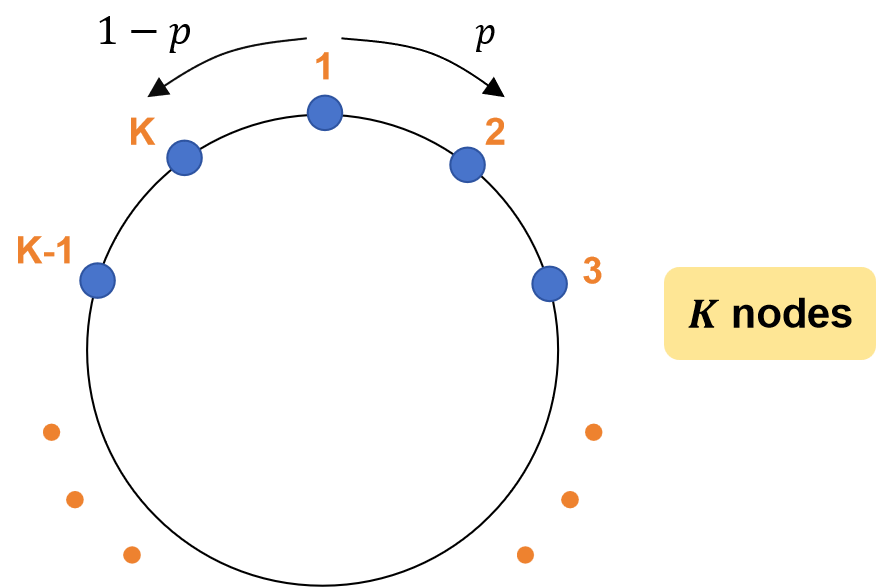}\label{subfig:walk}}
	\end{center}
	\vskip -10pt
        \caption{Illustration of random walks on circles with $K$ nodes and transition probability $p$. 
        }
	\label{fig:task}
	\vspace{-0.05in}
\end{wrapfigure}


In this paper, we consider a classic statistical model for random sequences, namely random walks on circles, and study the capability of one-layer transformers to learn from such data and make predictions. We consider a general setting that allows a large number of nodes (possible locations of the walker) on the circle, which is an extension to the two-state setting studied in \citet{makkuva2025attention}. We also consider the general setting where a walker moves on the circle clockwisely with probability $p$, and counter-clockwisely with probability $1-p$, and we build theories that cover the full range of $p\in [0,1]$. For such a classic, yet relatively general class of random sequences, our goal is to theoretically study the performance of a one-layer transformer trained by gradient descent in predicting the next location, and reveal the interpretability of the obtained transformer model.





The main contributions of this paper are  as follows:
\begin{itemize}[leftmargin = *]
    \item We theoretically demonstrate that a one-layer transformer can be trained to optimally predict the next location of a random walk with $p\in (0,1)$. Despite non-convexity, we prove that the trained model converges to the optimal prediction function with a rate $\mathcal{O}(T^{-1/2})$, where $T$ is the iteration number of gradient descent. Furthermore, we show that the trained transformer achieves optimal prediction accuracy $\max\{p,1-p\}$ after a constant number of iterations.
    
    \item Our analysis reveals that the trained transformer model is interpretable, as we precisely delineate the role of each component in the model. First, the trained softmax attention can select the ``direct parent'' token by assigning it a near-one score. Second, the trained value matrix serves as a one-step transition model that recovers the ground-truth probability transition matrix, which is applied to the ``direct parent'' token to make an optimal prediction.
    \item We also identify failure cases when $p = 0 \text{ or }1$. In these cases, we show that starting from zero initialization, the training of the one-layer transformer model with any loss function and any learning rate will always fail, resulting in a transformer model whose performance is no better than a random guess. This negative result is complementary to our positive guarantees for learning random walks with $p\in(0,1)$, and they together give a comprehensive characterization of the capability of transformers in learning random walks with $p\in [0,1]$.
    \item We provide intuitive explanations that the failure cases with $p = 0 \text{ or }1$ are  optimization failures caused by zero initialization. Notably,  similar optimization failures may also happen beyond the cases of random walks, as we can construct simple question answering tasks that also suffer from similar issues in optimization. We also empirically demonstrate that although training may still take longer, these failure cases can be resolved to a certain extent with random initialization. 
\end{itemize}


\section{Problem Setup}\label{sec:setup}

In this section, we present our problem formulations, including the random walk prediction task, the one-layer transformer model, and the training algorithm.

We study random walks on circles. Specifically, consider $K$ nodes (possible locations) that are arranged on a circle so that each node has two neighbors. Without loss of generality, we suppose that the nodes are assigned with 
node IDs $1,2,\ldots,K$ in a clockwise manner. 
A ``walk'' on the circle refers to the process where a ``walker'' moves step-by-step among the nodes of the circle. We suppose that, starting from a random initial location, at each step, the walker moves either clockwise with probability $p$ or counterclockwise with probability $1-p$, to a neighboring node of its current position, where $p\in(0,1)$ is a fixed probability. In this way, a random walk of length $N$ generates a sequence of ``states'' $s_1,\ldots s_N$, where $s_i\in [K]$ denotes the location (node ID) of the walker at the $i$-th step. We aim to address the problem of predicting the walker's next location $s_N$ based on the historical locations $s_{1},\ldots s_{N-1}$.


To better formulate this random walk prediction task, we map $s_{1},\ldots,s_{N-1}$ to embeddings $\bx_{1},\ldots,\bx_{N-1} \in \RR^{K}$. Our goal is then to train a model to predict the target $y=s_{N}$ based on $\bx_{1},\ldots,\bx_{N-1}$. In the following, we give a detailed definition and discuss some basic properties.




\textbf{Random walk on circles.} Suppose that there are $K$ nodes on a circle and the transition probability is $p$. $\bx_1,\ldots,\bx_{N-1},y$ are generated as follows:
\begin{enumerate}[leftmargin = *]
    \item Draw $s_1 \sim \text{Unif}([K])$.
    \item For $i=2,\ldots,N$, sample either $s_i = \la s_{i-1} + 1\ra_K$ with probability $p$ or $s_i = \la s_{i-1} - 1\ra_K$ with probability $1-p$.
    \item Set $\bx_i = \be_{s_i}$, $i\in[N-1]$, and $y = s_N$.
\end{enumerate}

Here, $\la s \ra_K$ is defined as the integer satisfying $\la s \ra_K \in [K]$ and $\la s \ra_K \equiv s \pmod K$.
It is clear that the sequence $\bx_1,\ldots, \bx_{N-1}, \be_y$ form a Markov chain, and $\PP(y|\bx_1,\ldots, \bx_{N-1}) = \PP(y|\bx_{N-1})$. Moreover, the transition matrix of the Markov model is $\bPi = (\pi_{i,j})_{K\times K}$, where $\pi_{i,j} = p \cdot \ind\{ i \equiv j-1 (\text{mod } K) \} + (1-p) \cdot \ind\{ i \equiv j+1 (\text{mod } K) \}$.
A visualization of $\bPi$ is given in Figure~\ref{fig:transition}. The Markov property indicates that the optimal predictor of $y$ is given by 
\begin{align*}
    f^{\mathrm{OPT}}( \bx_1,\ldots, \bx_{N-1} ) = \bPi^\top \bx_{N-1}, 
\end{align*}
and the optimal prediction accuracy any predictor can achieve is $\mathrm{OPT} = \max\{p,1-p\}$. Based on the formulation of $f^{\mathrm{OPT}}(\cdot)$, it is clear that a simple autoregressive model $f(\bX)=\bV\bx_{N-1}$ can already solve this random walk prediction task. However, the goal of this work is not to identify the optimal model for solving random walk tasks. Instead, we aim to understand and analyze the capability and interpretability of transformers when learning such classic statistical tasks from scratch. Therefore, in the following, we introduce a simple transformer model.
\begin{figure}[ht!]
	\begin{center}
 		\vspace{-0.15in}
            \subfigure[$\bPi$ with $p=0.5$]{\includegraphics[width=0.3\textwidth]{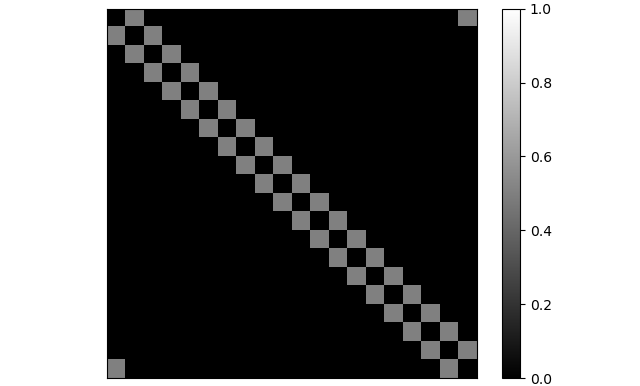}\label{subfig:Pi5}}
            ~
            \subfigure[$\bPi$ with $p=0.7$]{\includegraphics[width=0.3\textwidth]{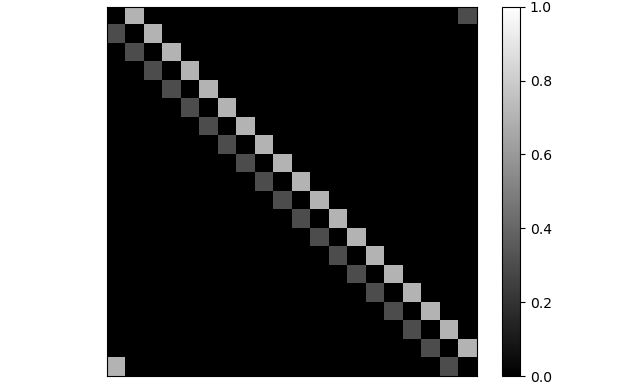}\label{subfig:Pi7}}
            ~
            \subfigure[$\bPi$ with $p=1$]{\includegraphics[width=0.3\textwidth]{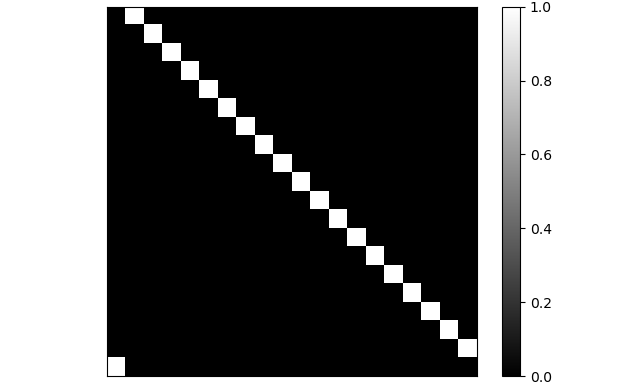}\label{subfig:Pi10}}
	\end{center}
	\vskip -12pt
        \caption{Visualization of the transition matrices $\bPi$ with $p=0.5, 0.7, 1.0$ respectively.} 
	\label{fig:transition}
	\vspace{-.15in}
\end{figure}

We consider learning the random walk prediction task with a simple one-layer transformer model. 
By naturally treating the one-hot vectors $\bx_1,\ldots,\bx_{N-1}$ as token embeddings, the task to predict the next position $y$ is a problem of next token prediction. Therefore, we define the data matrix $\bX=[\bx_1,\bx_2,\ldots,\bx_{N-1},\textbf{0}]\in\RR^{K \times N}$. 
We also employ a positional embedding matrix $\bP=[\bp_1,\bp_2,\ldots,\bp_N] \in \RR^{M \times N}$, where $M$ is the embedding dimension with $M=\Omega(N^{3/2})$ and $\bp_i \in \RR^{M}$ is defined as
\begin{align*}
\bp_i = \left[ \sin\left(\frac{i\pi}{M+1}\right), \sin\left(\frac{2i\pi}{M+1}\right), \ldots, \sin\left(\frac{Mi\pi}{M+1}\right) \right]^{\top}
\end{align*}
for $i=1,2,\ldots,N$. The positional embeddings above are inspired by the fact that  $\la \bp_i,\bp_j \ra = 0$  for all  $i \neq j$, which significantly helps to simplify our theoretical analysis (see Lemma~\ref{lemma:position_prop} in the appendix). Additionally, orthogonal positional embeddings are commonly considered in existing theoretical studies \citep{bietti2024birth,nichanitransformers,wangtransformers,zhangtransformer}.
Then, we define the matrix $\Tilde{\bX}$ by concatenating the input matrix $\bX$ and the positional embedding matrix $\bP$ as
\begin{align*}
\tilde{\bX} &= \begin{bmatrix}
    \bX\\ \bP
\end{bmatrix}
= \begin{bmatrix}
    \bx_1 & \bx_2 & \cdots & \bx_{N-1} & \mathbf{0}\\
    \bp_1 & \bp_2 & \cdots & \bp_{N-1} & \bp_{N}
\end{bmatrix}:=[\Tilde{\bx}_1,\Tilde{\bx}_2,\ldots,\Tilde{\bx}_N]
\in \RR^{(K+M) \times N}.
\end{align*}

We consider a one-layer transformer model to make a prediction on a given input matrix $\bX$. The transformer is defined as follows:
\begin{align}\label{eq:def_transformer}
f_{\theta}(\bX) = \bV\bX\cS(\tilde{\bX}^{\top} \bW \tilde{\bx}_{N}),
\end{align}
where $ \bV \in \RR^{K \times K}$, $\bW \in \RR^{(K+M) \times (K+M)}$ are the trainable parameter matrices, $\cS:\RR^N\rightarrow\RR^N$ is the softmax function defined by $[\cS(\bz)]_i = \frac{\exp(z_i)}{\sum_{j=1}^{N}\exp(z_j)}$, and $\theta=(\bV,\bW)$ denotes the collection of all the trainable parameters. In this definition, we consider a reparameterization where we use a single matrix $\bW$ to denote the product of the key and query parameter matrices in self-attention. Such kind of reparameterizations are widely considered in theoretical studies of transformer models \citep{huangcontext,ildizself,jelassi2022vision,li2024mechanics,nichanitransformers,tian2023scan,wangtransformers,zhang2024trained}. In addition, we omit the softmax scaling factor, which is mainly for simplicity. Such omission has also been considered in most of the theoretical studies \citep{d2025selective,li2023transformers,li2024transformernearestneighbor,nichanitransformers,svete2024transformers,tarzanagh2023transformers}.

Note that by \eqref{eq:def_transformer}, given any input matrix $\bX$, the transformer model outputs a $K$-dimensional vector. This follows the standard practice of $K$-class classification: for $i\in[K]$, $[f_{\theta}(\bX)]_i$ can be treated as a predicted ``score'' of the $i$-th class. More specifically, we define the prediction rule as follows.

\begin{definition}\label{def:prediction}
For any predictor $f(\bX): \RR^{K \times N} \rightarrow \RR^{K}$, the predicted label is given as
\begin{align*}
\text{Pred}(f(\bX)):=\min\left\{j\in[K]:[f(\bX)]_j=\max_{i\in[K]}\{[f(\bX)]_i\}\right\}.
\end{align*}
\end{definition}
The definition above matches the common practice to predict the label that corresponds to the entry in $f(\bX)$ with the maximum function value. It also gives a naive way to handle ties -- when $f(\bX)$ contains multiple dimensions with the same (and maximum) function value, we always predict the dimension corresponding to the smallest label. We remark that this definition to handle ties is just to exclude ambiguity, and the detailed rule to handle ties is not essential. Our result works for all reasonable methods to handle ties.

We train the transformer model defined in \eqref{eq:def_transformer} by gradient descent, minimizing the loss function
\begin{align}\label{eq:loss}
L(\theta) = \EE_{(\bX,y)}\left[\ell\left(\be_y^{\top}f_{\theta}(\bX)\right)\right],
\end{align}
where $\ell(\cdot)$ is a loss function. In terms of the specific choice of $\ell(\cdot)$, our analysis will cover learning random walks on a circle by minimizing the log-loss $\ell(z) = - \log(z + \epsilon)$, which has been considered in a series of recent works \citep{ildizself,li2024mechanics,makkuva2025attention,thrampoulidis2024implicit}.

We consider gradient descent with zero initialization $\bV^{(0)}=\textbf{0}^{K \times K}$, $\bW^{(0)}=\textbf{0}^{(K+M)\times(K+M)}$ to train the model. The update rule for the parameter matrices $\bW,\bV$ are as follows: 
\begin{align}
&\bW^{(t+1)}=\bW^{(t)}-\eta\nabla_{\bW} L(\theta^{(t)}),\quad \bV^{(t+1)}=\bV^{(t)}-\eta\nabla_{\bV} L(\theta^{(t)}),
 \label{eq:WV_update}
\end{align}
where $\eta>0$ is the learning rate and $t \geq 0$ is the iteration number. Note that the log-loss $\ell(z) = - \log(z + \epsilon)$ is well-defined and does not blow up at zero initialization due to the stability constant $\epsilon>0$. Gradient descent with appropriate learning rate further ensures that it does not blow up during training. 

\section{Main Results}\label{sec:results}
In this section, we present our main theoretical results on learning random walks with a self-attention layer.
In our result, we can choose any $T^{*} = \text{poly}(\eta,\epsilon^{-1},K,N,M)$ as the maximum admissible number of iterations, and only consider the training period $0 \leq t \leq T^{*}$. This technical assumption regarding a polynomially large maximum admissible number of iterations prevents training from becoming exponentially long and is a mild assumption since exponentially long training is impractical.

Our main results are given in the following theorem.


\begin{theorem}\label{theorem:random}
Suppose that $0 < p < 1$, $K>4$, and $N \geq C_p\cdot \mathrm{poly}(K)$ for some constant $C_p$ that only depends on $p$. Further suppose that the transformer is trained by gradient descent \eqref{eq:WV_update} to minimize the loss  \eqref{eq:loss} with $\ell(z) = - \log(z + \epsilon)$, and $\eta,\epsilon =\Theta(1)$. Then there exists $T_0 = \Theta(1)$, such that for all $T_0\leq T \leq T^*$, it holds that:
\begin{enumerate}[leftmargin = *]
    \item The trained transformer achieves optimal prediction accuracy: 
    \begin{align*}
        \PP\big[ \text{Pred}(f_{\theta^{(T)}}(\bX))=y \big] = \mathrm{OPT} = \max\{p,1-p\}.
    \end{align*}
    \item 
    The transformer converges to the optimal predictor: 
    \begin{align*}
        \left\|\frac{f_{\theta^{(T)}}(\bX)}{\|f_{\theta^{(T)}}(\bX)\|_2} - f^{\mathrm{OPT}}(\bX)\right\|_2 = \mathcal{O}\left( \frac{1}{\sqrt{T}} \right).
    \end{align*}
    \item The value matrix converges to the true transition matrix in direction: 
    \begin{align*}
        \left\| \frac{\bV^{(T)}}{\|\bV^{(T)}\|_F} - \frac{\bPi^{\top}}{\|\bPi^{\top}\|_F} \right\|_F = \mathcal{O}\left( \frac{1}{\sqrt{T}} \right).
    \end{align*}
    \item Softmax attention selects the ``direct parent'' of $y$: 
    \begin{align*}    &\big[\cS(\Tilde{\bX}^{\top}\bW^{(T)}\Tilde{\bx}_N ) \big]_{N-1} \geq 1-\exp(-\Omega(N)),\\    &\big[\cS(\Tilde{\bX}^{\top}\bW^{(T)}\Tilde{\bx}_N ) \big]_{j} \leq \exp(-\Omega(N)) \text{ for all } j\neq N-1.
    \end{align*}
\end{enumerate}
\end{theorem}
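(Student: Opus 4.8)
The plan is to track the gradient descent dynamics of $(\bV^{(t)}, \bW^{(t)})$ starting from zero initialization, exploiting the symmetry of the random walk problem to reduce the matrix dynamics to a small number of scalar ODEs/recursions. First I would compute the gradient of $L(\theta)$ at a general iterate. Because $\bW^{(0)}=\mathbf 0$, the softmax output at $t=0$ is the uniform vector $\tfrac1N\mathbf 1$, so $f_{\theta^{(0)}}(\bX) = \tfrac1N \bV^{(0)}\bX\mathbf 1 = \mathbf 0$; the first gradient step therefore moves $\bV$ in the direction $\EE[\ell'(0)\cdot(-\be_y)\,(\tfrac1N\bX\mathbf 1)^\top]$, and a short calculation using $\EE[\be_y\mid \bx_{N-1}=\be_i] = \bPi^\top\be_i$ and $s_{N-1}\sim\mathrm{Unif}([K])$ shows this is proportional to $\bPi^\top$ (up to a rank-one correction from the all-ones directions, which is annihilated by the prediction rule and by the later renormalization). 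The key structural claim, to be proved by induction on $t$, is that throughout training $\bV^{(t)} = a_t\,\bPi^\top + b_t\,\mathbf 1\mathbf 1^\top + (\text{lower-order terms})$ and that $\bW^{(t)}$ stays supported on the position–position block with a circulant/near-diagonal structure that makes $\tilde\bX^\top\bW^{(t)}\tilde\bx_N$ concentrate on coordinate $N-1$. The orthogonality of the positional embeddings (Lemma \ref{lemma:position_prop}) is what makes this block structure closed under the gradient update.

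Next I would set up the scalar recursions. Writing the attention logit gap as $\Delta_t := [\tilde\bX^\top\bW^{(t)}\tilde\bx_N]_{N-1} - \max_{j\neq N-1}[\tilde\bX^\top\bW^{(t)}\tilde\bx_N]_j$, I would show the $\bW$-gradient has positive correlation with the matrix that increases $\Delta_t$: intuitively, gradient descent on the log-loss rewards putting attention mass on the token $\bx_{N-1}$ because, conditioned on the history, $\bx_{N-1}$ is the unique token whose value-image $\bV^{(t)\top}\bx_{N-1}=a_t\bPi\bx_{N-1}+\dots$ is positively aligned with $\be_y$ in expectation (all other tokens $\bx_j$, $j<N-1$, are "stale" and contribute essentially noise that averages out by the mixing/near-uniformity of the walk once $N\gtrsim\mathrm{poly}(K)$ — this is where the hypothesis $N\geq C_p\,\mathrm{poly}(K)$ enters). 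This yields a coupled system: $a_t$ grows linearly (like $\Theta(t)$ up to saturation of the softmax), and $\Delta_t$ grows at least linearly, so after $T_0=\Theta(1)$ steps the softmax is $1-\exp(-\Omega(N))$ on coordinate $N-1$, giving item 4. Feeding item 4 back in, $f_{\theta^{(T)}}(\bX) \approx \bV^{(T)\top}\bx_{N-1} = a_T\bPi^\top\bx_{N-1} + b_T\mathbf 1 + o(a_T)$, and since $\bPi^\top\bx_{N-1}$ has its unique-if-$p\neq1/2$ or tied-top-two-if-$p=1/2$ maximum exactly at the true clockwise/counterclockwise neighbor probabilities, $\mathrm{Pred}$ returns $y$ precisely when the walk took its more-likely step — probability $\max\{p,1-p\}$ — giving item 1. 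Items 2 and 3 follow by quantifying the $o(\cdot)$ terms: the deviation of $\bV^{(t)}/\|\bV^{(t)}\|_F$ from $\bPi^\top/\|\bPi^\top\|_F$ is controlled by the ratio (lower-order growth)/(main-term growth) $=O(1/t)$ before saturation and $O(1/\sqrt t)$ after, once the remaining error terms from imperfect attention, finite $N$, and the $\epsilon$ in the loss are all collected — the stated $\mathcal O(T^{-1/2})$ being the bottleneck among these.

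The main obstacle I anticipate is establishing and maintaining the inductive structural hypothesis on $\bV^{(t)}$ and $\bW^{(t)}$ under the nonlinear softmax: the gradient of $f_\theta$ in $\bW$ involves $\mathrm{diag}(\cS)-\cS\cS^\top$ composed with $\bV^{(t)}\bX$, so one must show that the "contamination" outside the $\bPi^\top$/circulant subspace — generated at each step by the interaction of the current attention errors with the stale tokens — stays lower order relative to the main terms for the entire window $[T_0, T^*]$, including the regime where the softmax is exponentially saturated and the $\bW$-gradient is exponentially small. Handling $T^* = \mathrm{poly}(\eta,\epsilon^{-1},K,N,M)$ iterations requires the error bounds to be uniform in $t$, not just for $t=\Theta(1)$; I expect to close this with a bootstrap/continuity argument, assuming the structural bounds hold up to time $t$, showing the gradient step preserves them with room to spare, and separately arguing the pathological directions ($\mathbf 1$, and the antisymmetric part when $p=1/2$) are either invariant under the dynamics or harmless for $\mathrm{Pred}$ and for the normalized quantities in items 2–3. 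A secondary technical point is making rigorous the "stale tokens average out" heuristic: this needs a concentration bound on $\tfrac1N\sum_{j<N-1}\bx_j$ around the stationary distribution (uniform) with an $\exp(-\Omega(N/\mathrm{poly}(K)))$-type tail, using the geometric mixing of the lazy-ish walk on the $K$-cycle, which is exactly why $C_p$ must depend on $p$ (mixing time blows up as $p\to0,1$).
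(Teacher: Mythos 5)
Your high-level route coincides with the paper's: zero-initialization gradient dynamics, a structural induction writing $\bV^{(t)}$ as a growing multiple of $\bPi^{\top}$ plus benign parts, attention concentrating on token $N-1$, and mixing of powers of $\bPi$ controlling the stale tokens. However, there is a genuine gap in the quantitative core that produces items 2--3. You assert that the $\bPi^{\top}$-coefficient $a_t$ grows like $\Theta(t)$ ``up to saturation of the softmax.'' For $\ell(z)=-\log(z+\epsilon)$ this is false: the gradient magnitude is $|\ell'| = 1/(\be_y^{\top}\bV\bX\cS+\epsilon)$, which shrinks like $1/a_t$ as the output grows, so the coefficient obeys a recursion of the form $\beta^{(t+1)}\approx\beta^{(t)}+\eta/(2\beta^{(t)})$ and grows only like $\sqrt{\eta t}$ (the paper's Lemma~\ref{lemma:update}). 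The stated rate then follows because the residual part of $\bV^{(T)}$ stays bounded by a constant of order $\eta/(\epsilon K)$ plus an $\exp(-\Omega(N))$-small linear accumulation, so the ratio is $\mathcal{O}(1/\sqrt{T})$. Your account of the rate (``$O(1/t)$ before saturation and $O(1/\sqrt{t})$ after'') does not follow from your own dynamics --- linear main-term growth with bounded error would give $O(1/T)$ --- so as written the proposal asserts rather than derives items 2 and 3; the self-damping of the log-loss derivative is the missing mechanism.

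A secondary misidentification concerns item 4: you claim the logit gap ``grows at least linearly,'' whereas in the paper the gap is already $\Omega(N)$ at iteration $t=2$, coming from the scale of the positional embeddings ($\bp_N^{\top}\bp_N=(M+1)/2$ with $M=\Omega(N^{3/2})$, so a single $\bW_{22}$ update contributes $\eta M^{2}/N^{2}\gtrsim N$ times a constant margin $C_p$ obtained by comparing $[(\bV^{(1)})^{\top}\bPi^{\top}]_{1,1}$ with $[(\bV^{(1)})^{\top}(\bPi^{\top})^{N-j}]_{1,1}$; Lemmas~\ref{lemma:W2_2} and \ref{lemma:trace_bound}). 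Afterwards the gap only needs to be maintained against per-step degradations of size $\exp(-\Omega(N))$, which is exactly how the saturated-attention regime and the full window up to $T^{*}=\mathrm{poly}(\cdot)$ are handled --- the issue you flag as an obstacle. Two further small corrections: training is on the population loss, so no sample concentration of $\frac{1}{N}\sum_j\bx_j$ is needed --- the role of $N\geq C_p\,\mathrm{poly}(K)$ is through the mixing of $\bPi^{R}$ (Lemma~\ref{lemma:entry_odd}); and $\bW^{(t)}$ is not supported only on the position--position block: $\bW_{12}^{(t)}$ is nonzero but proportional to $\mathbf{1}_K\bp_N^{\top}$, hence non-discriminating, so the token selection indeed comes from $\bW_{22}^{(t)}$.
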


The first result in Theorem~\ref{theorem:random} states that the transformer trained by gradient descent for a constant number of iterations can achieve a prediction accuracy $\max\{p,1-p\}$, which matches the optimal accuracy $\mathrm{OPT}$. The second result in Theorem~\ref{theorem:random} further gives a more detailed characterization of the trained transformer, and demonstrates that the normalized model converges to the optimal prediction model $f^{\mathrm{OPT}}(\bX) = \bPi^\top \bx_{N-1}$.

\begin{figure}[ht]
	\begin{center}
            \includegraphics[width=1.0\textwidth]{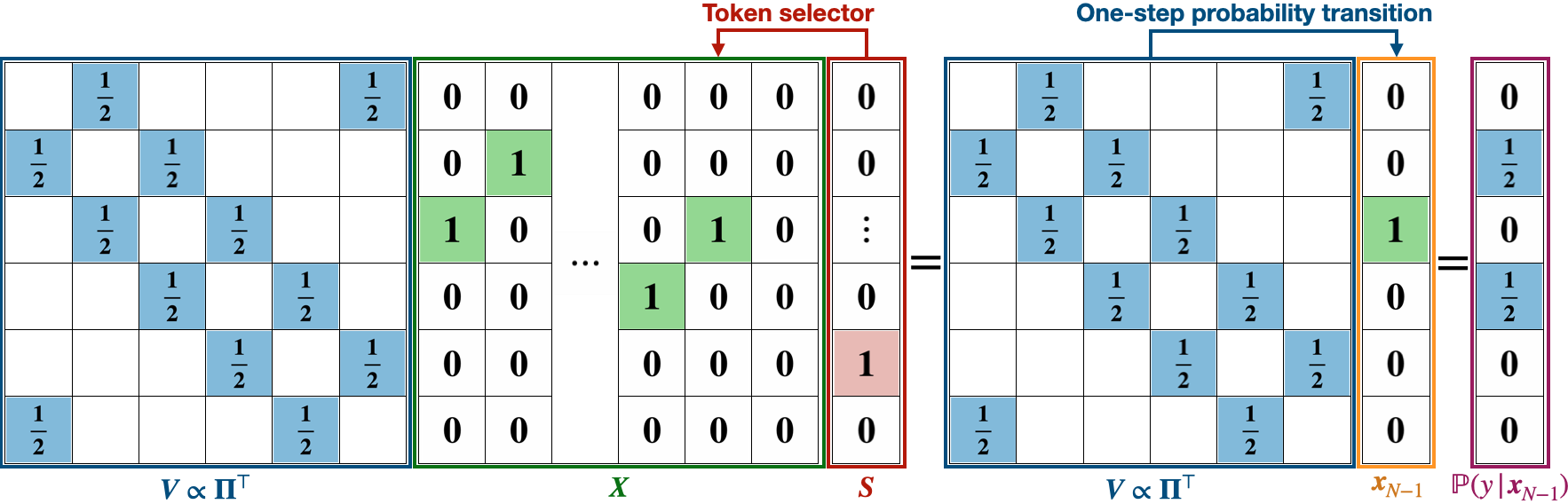}
	\end{center}
	\vskip -13pt
        \caption{Illustration of how the trained one-layer transformer performs optimal prediction in an example with $p=1/2$, $K=6$. Here we denote $\boldsymbol{S} = \cS(\Tilde{\bX}^{\top}\bW^{(T)}\Tilde{\bx}_N )$.}\label{fig:illustration_prediction}
	\vspace{-.05in}
\end{figure}

The third and fourth results in Theorem~\ref{theorem:random} further back up the first two results by a precise characterization on how the self-attention mechanism works in predicting random walks. Specifically, the third result demonstrates that in direction, \textbf{the value matrix $\bV^{(T)}$ serves as a one-step transition model} by recovering the ground-truth one-step transition matrix, and the last result indicates that \textbf{softmax attention performs optimal token selection} by assigning a near-one score to the $(N-1)$-th token (the direct parent of $y$). These two results show that, when learning to predict random walks, the one-layer transformer obtained through gradient descent is \textbf{interpretable}: the trained one-layer transformer model makes predictions by (i) selecting the correct parent token $\bx_{N-1}$ of $y$ by assigning a softmax weighting close to 1 to it, and (ii) predicting $y$ by applying a one-step transition model to $\bx_{N-1}$ through the linear mapping defined by $\bV$. An illustration is given in Figure~\ref{fig:illustration_prediction}.


A recent related work \citep{makkuva2025attention} studied how one-layer transformers can learn Markov chains with a state space of size two, and analyzed the loss landscape by identifying global minima and bad local minima. We remark that we consider a different parameterization of one-layer transformers, which makes our results not rigorously comparable. However, by studying random walks over state spaces of arbitrary sizes and establishing theoretical guarantees directly on transformers trained by gradient descent, our work explores a setting that is not covered in \citet{makkuva2025attention}.




Notably, Theorem~\ref{theorem:random} is based on the assumption that the transition probability satisfies $0<p<1$, excluding the edge cases when $p=0,1$. In Section~\ref{sec:pitfall}, we will show a negative result showing that when $p=0,1$, gradient descent with zero initialization fails to achieve good prediction accuracy. Therefore, the assumption $0<p<1$ in Theorem~\ref{theorem:random} is inevitable.




Here we informally explain how Theorem~\ref{theorem:random} is proved. The idea of the proof can be shown by investigating the first several gradient descent steps:
\begin{enumerate}[leftmargin=*, label=\textbf{Step \arabic*.}]
    \item After the first gradient descent step, due to the zero initialization and the Toeplitz property of the transition matrix $\bPi$, it can be shown that $\bV^{(1)}$ is also a Toeplitz matrix whose largest entries appear exactly on the locations of the largest entries of $\bPi^{\top}$. $\bW^{(1)}$ is still a zero matrix due to the fact that $\bV^{(0)} = \mathbf{0}$.
    \item With the same analysis, we can also show that $\bV^{(2)}$ is a Toeplitz matrix whose largest entries appear exactly on the locations of the largest entries of $\bPi^{\top}$. Moreover, the locations of the largest entries in $\bV^{(1)}$ encourage $\bW^{(2)}$ to be updated so that higher softmax weightings are put upon the ``direct parent'' token $\bx_{N-1}$ (see Lemma~\ref{lemma:W2_2} in the appendix). 
    \item The higher weighting on $\bx_{N-1}$ given by $\bW^{(2)}$ further encourages $\bV^{(3)}$ to be updated towards $\bPi^{\top}$ in direction. And $\bV^{(2)}$ obtained in Step 2 continues to encourage $\bW^{(3)}$ to place a even higher weighting on $\bx_{N-1}$ (see Lemma~\ref{lemma:update} in the appendix).
\end{enumerate}

From the three gradient descent steps discussed above, it is clear that $\bV^{(t)}$ will converge to the direction of $\bPi^{\top}$, and $\bW^{(t)}$ will consistently place a high weighting on the direct parent token $\bx_{N-1}$. This is our key intuition for proving Theorem~\ref{theorem:random}, and in our formal proof (given in the appendix), we use an induction to characterize the whole training procedure.

\section{``Deterministic Walks'' with \texorpdfstring{$p=0,1$}{p=0,1}} \label{sec:pitfall}

In Section~\ref{sec:results}, we demonstrate that a one-layer transformer can be trained to optimally predict random walks under the assumption $0<p<1$. In this section, we justify ‌the necessity of this assumption and analyze the edge cases $p=0,1$, which lead to a failure for learning random walks. Random walks with $p=0,1$ are essentially ``deterministic walks'', and for them we have the following theorem.


\begin{theorem}\label{theorem:deterministic}
Suppose that $N=rK+1$ with $r\geq 1$, and $p=0 \text{ or }1$. Further suppose that the transformer is trained by gradient descent \eqref{eq:WV_update} to minimize the loss \eqref{eq:loss}. Then for any loss function $\ell(\cdot)$, any learning rate $\eta >0$, and any $T\geq 0$, it holds that
\begin{align*}
\PP\big( \text{Pred}(f_{\theta^{(T)}}(\bX))=y \big) = \frac{1}{K}.
\end{align*}
Moreover, with probability 1, for all $T\geq 0$, it holds that
\begin{align*}
&\bV^{(T)} \propto \mathbf{1}_{K\times K}, ~~\left[\cS(\Tilde{\bX}^{\top}\bW^{(T)}\Tilde{\bx}_N)\right]_1 = \cdots = \left[\cS(\Tilde{\bX}^{\top}\bW^{(T)}\Tilde{\bx}_N)\right]_{N-1}.
\end{align*}
\end{theorem}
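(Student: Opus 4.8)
The key structural fact to exploit is that when $p = 0$ or $p = 1$, the walk is deterministic given $s_1$: if $p = 1$ then $s_i = \langle s_1 + (i-1)\rangle_K$, and if $p = 0$ then $s_i = \langle s_1 - (i-1)\rangle_K$. Since $s_1 \sim \mathrm{Unif}([K])$ and $N - 1 = rK$ is a multiple of $K$, the sequence $s_1,\ldots,s_{N-1}$ visits each of the $K$ nodes exactly $r$ times, and the target $y = s_N = \langle s_1 \pm rK \rangle_K = s_1$. The plan is to show by induction on $t$ that this symmetry is preserved by gradient descent: at every iteration, $\bV^{(t)}$ is a constant multiple of $\mathbf{1}_{K\times K}$, and $\bW^{(t)}$ induces softmax weights on $\tilde\bx_N$ that are constant across the first $N-1$ positions.

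First I would establish the base case: at $t = 0$ both matrices are zero, so $\bV^{(0)} = 0 \cdot \mathbf{1}_{K\times K}$ and the softmax of the zero vector is uniform, giving equal weights $1/N$ everywhere. For the inductive step, I would compute the two gradients $\nabla_\bV L$ and $\nabla_\bW L$ under the inductive hypothesis. The crucial averaging argument is this: because the distribution of $(\bX, y)$ is invariant under the simultaneous cyclic relabeling $s_i \mapsto \langle s_i + c\rangle_K$ for any $c$ (which permutes the rows and columns of $\bX$ and the coordinate $y$ by the same cyclic shift), and because the current model $f_{\theta^{(t)}}$ is equivariant under this relabeling when $\bV^{(t)} \propto \mathbf{1}$ and the positional-embedding part of $\bW^{(t)}$ carries the weight (the $\mathbf{1}_{K\times K}$ block is cyclic-invariant, and the positional block does not see token identities), the expected gradient $\nabla_\bV L(\theta^{(t)})$ must itself be invariant under conjugation by the cyclic-shift permutation matrix. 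A $K\times K$ matrix fixed by conjugation by the full cyclic group, \emph{and} additionally symmetric under the further structure coming from the deterministic-walk data (every node appears equally often as a ``parent'' of every node, since the parent of $y=s_1$ ranges uniformly), collapses to a multiple of $\mathbf{1}_{K\times K}$; I would need to check the averaging carefully to confirm it is the all-ones matrix and not merely a circulant. Hence $\bV^{(t+1)} = \bV^{(t)} - \eta \nabla_\bV L \propto \mathbf{1}_{K\times K}$. For $\bW$, since $\bV^{(t)}\bX = (\text{const})\mathbf{1}_K (\mathbf{1}_{N-1}^\top, 0)$ up to the zero last column, the output $f_{\theta^{(t)}}(\bX)$ is a constant vector whenever the softmax weights sum to the fixed total over the first $N-1$ coordinates — so $\ell'$ is the same scalar regardless of the attention pattern, and one checks that the resulting $\nabla_\bW L$ does not break the permutation symmetry among the first $N-1$ positions (the positions $1,\ldots,N-1$ are exchangeable in the data distribution because each carries, marginally, a uniform node that is a deterministic shift of $y$). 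Therefore $[\cS(\tilde\bX^\top \bW^{(t+1)}\tilde\bx_N)]_1 = \cdots = [\cS(\tilde\bX^\top\bW^{(t+1)}\tilde\bx_N)]_{N-1}$.

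Once the invariants $\bV^{(T)} \propto \mathbf{1}_{K\times K}$ and equal softmax weights across $1,\ldots,N-1$ are in hand, the accuracy claim follows immediately: $f_{\theta^{(T)}}(\bX) = \bV^{(T)}\bX\,\cS(\cdots) = c \cdot (\text{sum of the first }N-1\text{ softmax weights})\cdot\mathbf{1}_K$ is a constant vector, so by Definition~\ref{def:prediction} the predicted label is always $1$, independent of the input; since $y = s_1 \sim \mathrm{Unif}([K])$, we get $\PP(\mathrm{Pred} = y) = \PP(y = 1) = 1/K$. (The ``with probability 1, for all $T$'' phrasing is automatic because the invariants are deterministic consequences of the update rule, not sample-dependent.)

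The main obstacle I anticipate is the symmetry-averaging step for $\nabla_\bV L$: I must verify that the relevant group action (cyclic relabeling of node IDs) genuinely fixes the loss \emph{and} that the model is exactly equivariant under it at every step — this requires being careful that the positional embeddings $\bp_i$ are untouched by node relabeling (true, since they depend only on position $i$) and that the $\mathbf{1}_{K\times K}$ form of $\bV$ is the \emph{only} fixed point compatible with the additional structure, ruling out a general circulant. A secondary subtlety is handling the interaction between the $\bX$-block and the $\bP$-block inside $\bW$: I need the argument to show the cross terms and the $\bX$-$\bX$ block of $\nabla_\bW L$ either vanish or respect the symmetry, which should follow from $\bV^{(t)}\bX$ being rank-one with a constant column, making the backpropagated signal into $\bW$ position-symmetric — but this deserves a careful componentwise check rather than a hand-wave.
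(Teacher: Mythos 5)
Your high-level plan --- induct on $t$ to show the two invariants $\bV^{(t)}\propto\mathbf{1}_{K\times K}$ and equal softmax weights on positions $1,\ldots,N-1$, then read off that the output is a constant vector so the tie-breaking rule predicts a fixed label and the accuracy is $1/K$ --- is exactly the structure of the paper's proof (Lemmas~\ref{lemma:V1_1}, \ref{lemma:W1_1}, \ref{lemma:weight_1}). The gap is at precisely the step you flag, and it is a real one: cyclic equivariance of the data and model can only force $\EE[\nabla_{\bV}\ell(\theta^{(t)})]$ to be a circulant, and in general it \emph{is} only a circulant. For $p=1$ the first-step gradient is $\ell'(\theta^{(0)})\cdot\frac{1}{NK}\sum_{i=1}^{N-1}(\bPi_0)^{N-i}$, which for $N-1$ not a multiple of $K$ is a non-constant circulant, so $\bV^{(1)}\propto\mathbf{1}_{K\times K}$ already fails; no amount of symmetry averaging can rescue this. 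What collapses the circulant to the all-ones matrix is the hypothesis $N=rK+1$: the offsets $N-i$, $i=1,\ldots,N-1$, hit every residue mod $K$ exactly $r$ times, so $\sum_{i=1}^{N-1}(\bPi_0)^{N-i}=r\,\mathbf{1}_{K\times K}$ (equivalently, $\sum_{i=1}^{N-1}\bx_i=r\,\mathbf{1}_K$ holds deterministically). You state this counting fact in your opening paragraph but never deploy it inside the gradient computation; the paper's proof does exactly that, combining the explicit gradient formula of Lemma~\ref{lemma:gradient} with Lemma~\ref{lemma:pi}.

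For the $\bW$ update, your justification via ``exchangeability of positions $1,\ldots,N-1$'' is false as stated: the deterministic walk's joint law is not exchangeable (the pair $(\bx_1,\bx_2)$ is a shift by $+1$ while $(\bx_2,\bx_1)$ is a shift by $-1$), and marginal uniformity is insufficient because the $\bW$-gradient contains cross-position products $\bx_{i_1}\bx_{i_2}^{\top}$. What actually preserves equal logits is the explicit structure the paper propagates by induction: with $\bV^{(t)}\propto\mathbf{1}_{K\times K}$ one has $(\bV^{(t)})^{\top}\be_y\propto\mathbf{1}_K$, so token identities are erased ($\bx_j^{\top}\mathbf{1}_K=1$ for every $j$), and together with $\sum_{i=1}^{N-1}\bx_i=r\,\mathbf{1}_K$ this yields $\bW_{12}^{(t)}\propto\mathbf{1}_K\bp_N^{\top}$ and $\bW_{22}^{(t)}=\bigl(\alpha_3^{(t)}\sum_{i=1}^{N-1}\bp_i-\alpha_4^{(t)}\bp_N\bigr)\bp_N^{\top}$; equality of the logits $\bx_j^{\top}\bW_{12}^{(t)}\bp_N+\bp_j^{\top}\bW_{22}^{(t)}\bp_N$ over $j\le N-1$ then follows from the orthogonality \emph{and equal norms} of the positional embeddings (Lemma~\ref{lemma:position_prop}), a property you need but do not invoke. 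Your accuracy step and your observation that $\ell'$ is a deterministic scalar under the invariants (so it factors out of the expectation) are fine; to complete the proof you must replace the symmetry appeal with these explicit block computations, or equivalently push your own ``each node is visited exactly $r$ times'' observation through both the $\bA$ and $\bB$ blocks of Lemma~\ref{lemma:gradient}.
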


Theorem~\ref{theorem:deterministic} shows that the prediction accuracy of the trained transformer on deterministic walks is $1/K$, equal to the accuracy of a random guess. Moreover, the characterizations of the softmax scores and the value matrix $\bV^{(T)}$ further demonstrate that the transformer takes average over all tokens, and then gives the same prediction scores for all possible values of $y$. Notably, these results hold for any choice of the loss function and any learning rate, indicating that this failure case of the transformer cannot be resolved by simply adjusting these training setups.

Theorems~\ref{theorem:random} and \ref{theorem:deterministic} together provide a comprehensive characterization of the transformer's performance when learning random walks with transition probabilities $p\in[0,1]$. Based on Theorem~\ref{theorem:deterministic}, it is clear that the assumption in Theorem~\ref{theorem:random} that $0<p<1$ is necessary and cannot be further relaxed. Theorems~\ref{theorem:random} and \ref{theorem:deterministic} also point out an interesting fact, that compared to random walks with $0<p<1$, the seemingly easier task of predicting deterministic walks with $p=0,1$ may be more challenging for a transformer to learn. Here, we would like to remark that the failure of learning deterministic walks is an optimization failure, and is highly due to zero initialization. To see the reason, we can consider the two initial gradient descent steps:




\begin{enumerate}[leftmargin=*, label=\textbf{Step \arabic*.}]
    \item Since the initial softmax weightings on all tokens are the same, $\bV^{(1)}$ is essentially trained based on the averaged token $\overline{\bx} = \frac{1}{N-1} \sum_{i=1}^{N-1} \bx_i$. Importantly, we can see that
    \begin{center}
    \emph{$\overline{\bx}$ is a constant vector that does not depend on $y$.}
    \end{center}  
    This means that $\overline{\bx}$ does not provide any helpful information in predicting $y$, and is therefore ``uninformative''.
    As a result, it can be shown that all entries in $\bV^{(1)}$ are equal (see Lemma~\ref{lemma:V1_1} in the appendix). Additionally, $\bW^{(1)}$ is still a zero matrix since $\bV^{(0)} = \mathbf{0}$.
    \item With the same analysis as Step 1, we can show that all entries in $\bV^{(2)}$ are equal. Moreover, due to the fact that $\bV^{(1)}$ is proportional to the all-one matrix, it can be further shown that $\bW^{(2)}$ is updated so that the softmax weightings on all tokens $\bx_1,\ldots,\bx_{N-1}$ remain equal.
\end{enumerate}
In our formal proof, we inductively show that throughout training, the value matrix $\bV^{(t)}$ is always proportional to the all-one matrix, and the softmax weights on all tokens are always the same.

From the discussion above, we can observe that transformers struggle to learn deterministic walks due to the unbreakable symmetry in the training dynamics, arising from the ``uninformative'' token average $\overline{\bx}$ given by the zero initialization. We remark that if random initialization is used, the symmetry will be broken, and transformers may succeed in learning the optimal predictor. However, if the random initialization is too small in scale, we can still expect that the optimization for learning deterministic walks to be more challenging compared to that for random walks. While our theoretical analysis does not cover random initialization, we present empirical studies in Sections~\ref{sec:experiment} and \ref{sec:nlp} to verify this claim. We believe that theoretically analyzing the impact of random initializations can be an important future work direction.



\section{Experiment}\label{sec:experiment}

In this section, we present experimental results to support our theoretical analysis. We consider two cases: the first one is the zero initialization case that aligns with the setting used in our theoretical analysis, and the second one is the random initialization case, which helps verify our discussion about the optimization failure caused by zero initialization. In all experiments introduced in this section, we set the number of nodes $K=6$ and the length of each sequence $N=97$. We utilize the transformer model introduced in Section~\ref{sec:setup} and utilize the gradient method to train the model. The prediction accuracy is calculated based on 1000 test data.

\textbf{Zero initialization.} In this case, we set the length of the positional embedding $M=1000$, the initialization $\bV^{(0)}=\textbf{0}_{K \times K},\bW^{(0)}=\textbf{0}_{(K+M)\times(K+M)}$, and the learning rate $\eta=1$. The constant $\epsilon$ in the log-loss is set as $\epsilon=0.1$. For both tasks, we generate 1000 sequences to train the model.

Figure~\ref{fig:results_random_small} and Figure~\ref{fig:results_deterministic_1} illustrate the results of the experiment for $p=0.5$ and $p=1$ respectively: Figure~\ref{subfig:accuracy_random_1} and Figure~\ref{subfig:accuracy_deterministic_1} present the prediction accuracy; Figure~\ref{subfig:V_random_1} and Figure~\ref{subfig:V_deterministic_1} visualize the value matrix $\bV^{(T)}$ after 50 iterations; Figure~\ref{subfig:softmax_random_1} and Figure~\ref{subfig:softmax_deterministic_1} display the attention scores attached to each token after 50 iterations. To clearly observe the results, we also provide Figure~\ref{subfig:softmax_random_1_part} that represents the part of Figure~\ref{subfig:softmax_random_1}.

\begin{figure}[ht]
	\begin{center}
 		\vspace{-0.1in}
            \subfigure[Accuracy]{\includegraphics[width=0.23\textwidth]{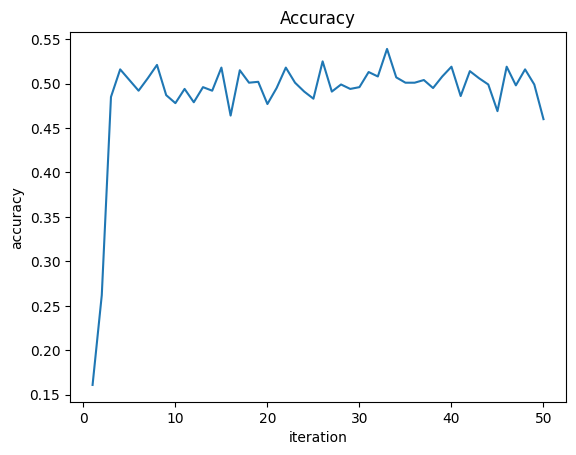}\label{subfig:accuracy_random_1}}            \subfigure[$\bV^{(T)}$]{\includegraphics[width=0.22\textwidth]{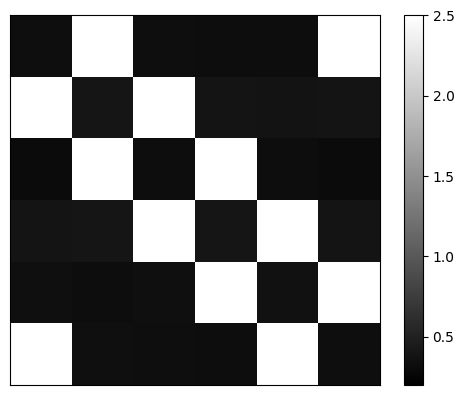}\label{subfig:V_random_1}}
            \subfigure[Average attention]{\includegraphics[width=0.23\textwidth]{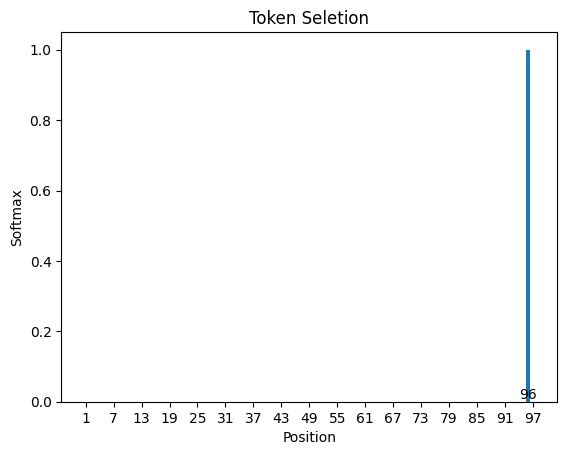}\label{subfig:softmax_random_1}}
            \subfigure[Part of the attention]{\includegraphics[width=0.23\textwidth]{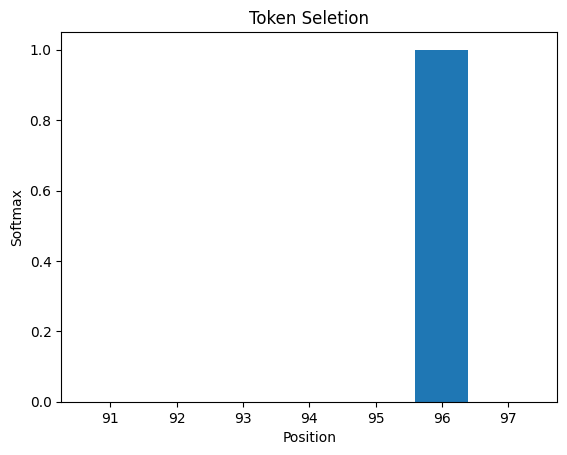}\label{subfig:softmax_random_1_part}}
	\end{center}
	\vskip -12pt
        \caption{The results of the experiments for $p=0.5$ with zero initialization: (a) is the test accuracy; (b) is the visualization of $\bV$; (c) and (d) present the average attention of the test data with $x$-axis representing the position of the token and $y$-axis representing the attention score.}\label{fig:results_random_small}
	\vspace{-.1in}
\end{figure}

We can observe that these experimental results for $p=0.5$ provide strong support for Theorem~\ref{theorem:random}. Figure~\ref{subfig:accuracy_random_1} shows that the prediction accuracy is close to the optimal accuracy (50\%) within constant iterations. Figure~\ref{subfig:V_random_1} indicates that $\bV^{(T)}$ can recover the transition matrix $\bPi^{\top}$ as shown in Figure~\ref{subfig:Pi5}. Figure~\ref{subfig:softmax_random_1} presents that the $(N-1)$-th attention score is the highest and close to 1, indicating that the self-attention layer is able to select the true parent token. All of these experimental results demonstrate the performance of transformers in learning random walks.

\begin{figure}[ht]
	\begin{center}
 		\vspace{-0.1in}
            \subfigure[Accuracy]{\includegraphics[width=0.32\textwidth]{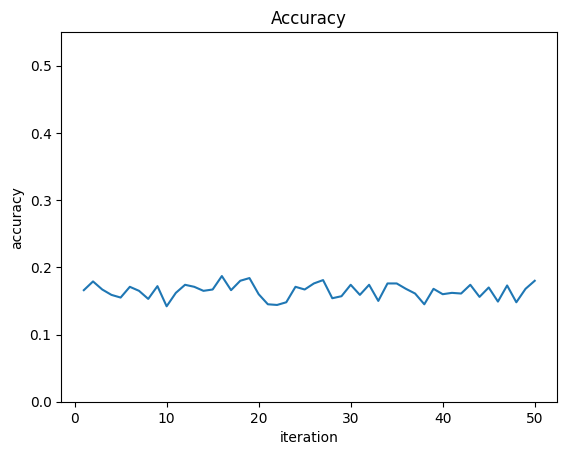}\label{subfig:accuracy_deterministic_1}}
            \subfigure[$\bV^{(T)}$]{\includegraphics[width=0.3\textwidth]{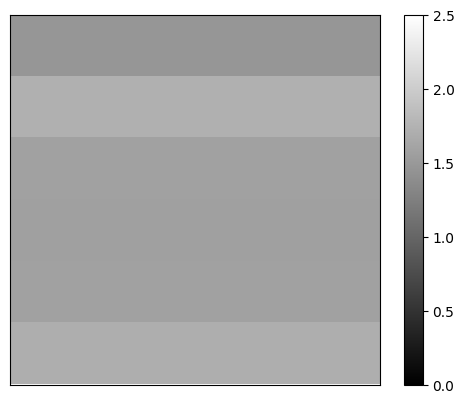}\label{subfig:V_deterministic_1}}
            \subfigure[Average attention]{\includegraphics[width=0.32\textwidth]{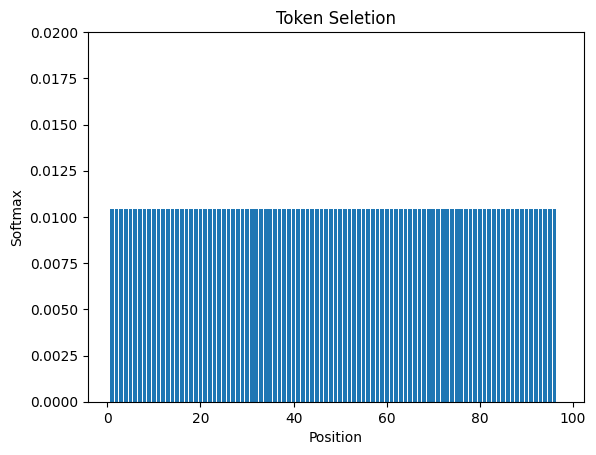}\label{subfig:softmax_deterministic_1}}
            
	\end{center}
	\vskip -12pt
        \caption{Experiments for $p=1$ with zero initialization. (a) gives the prediction accuracy along training. (b) is the visualization of $\bV$. (c) is the average attention of the test data with $x$-axis representing the position of the token and $y$-axis representing the attention score.}
	\label{fig:results_deterministic_1}
	\vspace{-.1in}
\end{figure}

In addition, we can find that the experimental outcomes for $p=1$ match the theoretical results stated in Theorem~\ref{theorem:deterministic}. We obtain an accuracy close to 0.167 from Figure~\ref{subfig:accuracy_deterministic_1}, which suggests that the prediction accuracy for learning deterministic walks is approximately equal to $1/K$, far away from the optimal accuracy (100\%) and no better than a random guess. Figure~\ref{subfig:V_deterministic_1} indicates that $\bV^{(T)}$ is approximately proportional to $\textbf{1}_{K \times K}$. Figure~\ref{subfig:softmax_deterministic_1} shows that the attention scores attached to all tokens are identical, which proves that the self-attention layer cannot select any of the tokens when learning deterministic walks. These experimental results demonstrate that the self-attention mechanism struggles in learning deterministic walks with $p= 0,1$.

\textbf{Random initialization.} In this case, we set the length of the positional embedding $M=1000$, the initialization $\bV_{ij}^{(0)}, \bW_{ij}^{(0)} \sim N(0,\sigma^2)$ with $\sigma=0.01$, and the learning rate $\eta=0.01$. The constant $\epsilon$ in the log-loss is set as $\epsilon=0.1$. For both tasks, we generate 1000 sequences to train the model. To ensure numerical stability, we normalize $\tilde{\bx}_i$'s to unit length in the softmax attention.

Figure~\ref{fig:random_randominitial} illustrates the results of the experiment for $p=0.5$ and $p=1$. Figure~\ref{subfig:accuracy_RandomWalk} and Figure~\ref{subfig:accuracy_DeterministicWalk} show the prediction accuracy within 1000 iterations, respectively. In Figure~\ref{subfig:KL_RandomWalk} and Figure~\ref{subfig:KL_DeterministicWalk}, we first normalize the output of the trained transformer model to get a $K$-dimensional vector, which can be regarded as the prediction distribution of $K$ locations. The KL-divergence between this prediction distribution and the true distribution of $y|\bx_{N-1}$ is illustrated in Figure~\ref{subfig:KL_RandomWalk} and Figure~\ref{subfig:KL_DeterministicWalk}.

\begin{figure}[ht]
	\begin{center}
 		\vspace{-0.1in}
            \subfigure[Accuracy]{\includegraphics[width=0.23\textwidth]{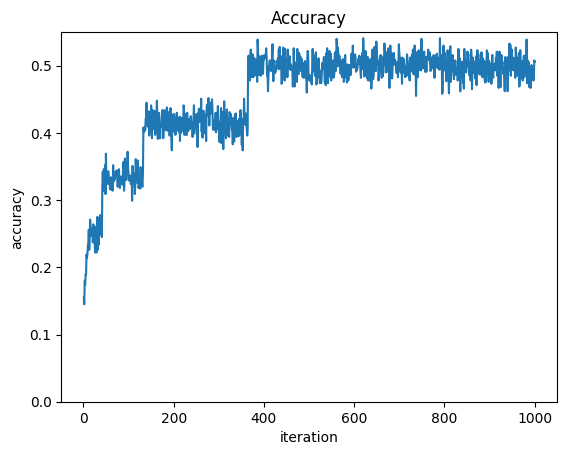}\label{subfig:accuracy_RandomWalk}}
            \subfigure[KL-Divergence]{\includegraphics[width=0.23\textwidth]{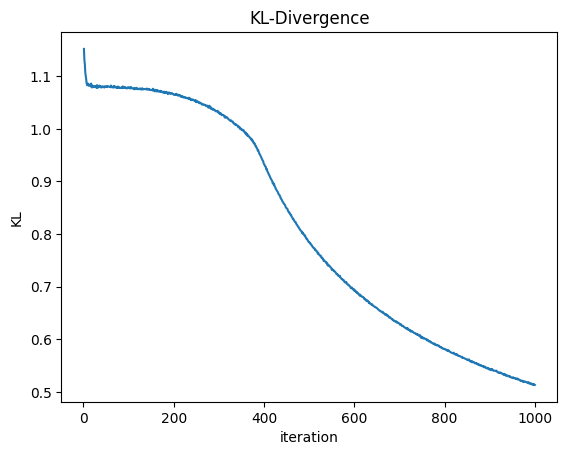}\label{subfig:KL_RandomWalk}}
            \subfigure[Accuracy]{\includegraphics[width=0.23\textwidth]{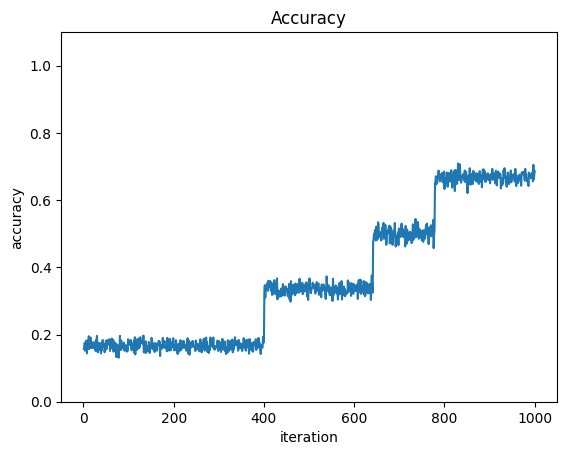}\label{subfig:accuracy_DeterministicWalk}}
            \subfigure[KL-Divergence]{\includegraphics[width=0.23\textwidth]{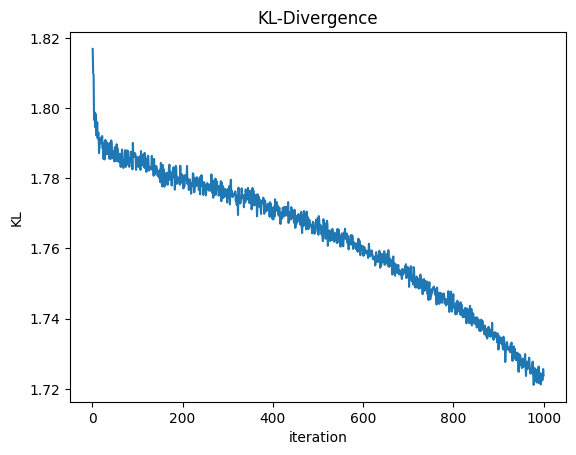}\label{subfig:KL_DeterministicWalk}}
	\end{center}
	\vskip -12pt
        \caption{The results of the synthetic experiment with random initialization: (a) and (b) correspond to the experiment for $p=0.5$; (c) and (d) correspond to the experiment for $p=1$. (a) and (c) present the prediction accuracy. In (b) and (d), we first normalize the output of the trained transformer model to get a $K$-dimensional vector, representing the prediction distribution of $K$ locations. Then, we display the KL-divergence between this prediction distribution and the true distribution of $y|\bx_{N-1}$.}
	\label{fig:random_randominitial}
	\vspace{-0.1in}
\end{figure}

Figure~\ref{subfig:accuracy_RandomWalk} clearly shows that in the experiment for $p=0.5$, the accuracy is close to the optimal accuracy (50\%) after around 400 iterations. However, as shown in Figure~\ref{subfig:accuracy_DeterministicWalk}, for $p=1$, the prediction accuracy cannot reach the optimal accuracy (100\%) within 1000 iterations. Based on the plots of KL-divergence, we can also see that the transformer learns the true prediction distribution of random walks much faster than learning that of deterministic walks. Note that these results are for training with random initialization, and hence the results do not perfectly match our theory for zero initialization in Section~\ref{sec:results}. 
However, the experiment results still show that the optimization task is more challenging, even with small random initialization.

\section{Beyond Random Walks} \label{sec:nlp}
In Section~\ref{sec:pitfall}, we intuitively explain that one-layer transformers may suffer from optimization issues when learning deterministic walks with $p = 0,1$ due to the fact that zero initialization produces a token average which is ``uninformative''.
 In this section,  we briefly discuss other tasks beyond random walks where transformers with zero/small initialization may face similar optimization challenges.

We construct two question answering tasks. The detailed descriptions are given as follows.

\textbf{Task 1.} We consider a simple question answering task, where possible input questions are of the form:

\begin{nscenter}
     \textit{Based on the list `apple, orange, apple, apple, orange', which type of fruit appears most frequently?}
\end{nscenter}
Here, the list stated in the question can be any combination of `apple' and `orange' with a fixed length of 5. Therefore, there are a total of $32$ possible questions the model may see, and each of these questions occurs with probability $1/32$. Ignoring punctuation marks, each input sample is assumed to be 16 words involving the list and other words in the inquiry sentence. The correct response (the 'label' for classification) is the fruit that appears most frequently in the list. For example, for the question \textit{``Based on the list `apple, orange, apple, apple, orange', which type of fruit appears most frequently?''}, the correct response is \textit{apple}.

\textbf{Task 2.} We again consider a simple question answering task with only two possible questions:
\begin{nscenter}
    \textit{Based on the sentence `I prefer an apple to an orange', which type of fruit do I prefer?}\\
    \textit{Based on the sentence `I prefer an orange to an apple', which type of fruit do I prefer?}
\end{nscenter}
Here, each of the two questions above occurs with probability $1/2$.  Similar to Task 1, we ignore the punctuation marks, and the input is the 18 words in the sentence. The correct response (the ``label'' for classification) is \textit{apple} for the first question above, and \textit{orange} for the second question above.

Combining all the words appearing in two tasks, we attain a vocabulary with a length of 19: \{`apple', `orange', `Based', `on', `the', `which', `type', `of', `fruit', `list', `appears', `most', `frequently', `sentence', `I', `prefer', `an', `to', `do'\}. We embed this sequence as a matrix $\bE=[\be_{1},\be_{2},...,\be_{19}]\in\RR^{19 \times 19}$. Then, we know that the length of the vocabulary $K$ and the length of each input sequence $N$ are set as $(K,N)=(19,17),(19,19)$ for Task 1 and Task 2 respectively.

In the experiments, we consider a similar transformer model as we introduced in our theoretical analysis. To train the model, we consider Gaussian random initialization  $\bV_{ij}^{(0)},\bW_{ij}^{(0)} \sim N(0,\sigma^2)$ with $\sigma=0.01$, and we use gradient descent with learning rate $\eta=0.1$ to train the model. The constant $\epsilon$ in the log-loss is set as $\epsilon=0.1$. Both the training and test datasets contain 1000 samples.

\begin{figure}[ht]
	\begin{center}
 		\vspace{-0.1in}
            \subfigure[Accuracy]{\includegraphics[width=0.23\textwidth]{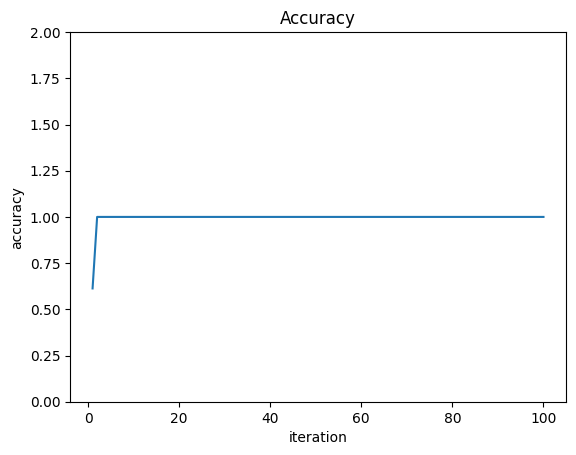}\label{subfig:accuracy_random_real}}
            \subfigure[KL-Divergence]{\includegraphics[width=0.23\textwidth]{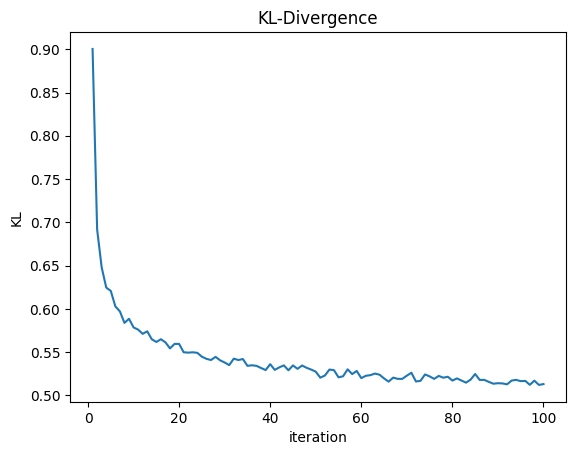}\label{subfig:KL_random_real}}
            \subfigure[Accuracy]{\includegraphics[width=0.23\textwidth]{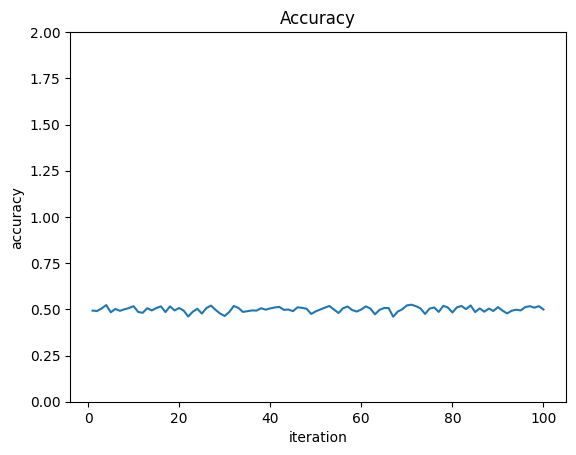}\label{subfig:accuracy_deterministic_real}}
            \subfigure[KL-Divergence]{\includegraphics[width=0.23\textwidth]{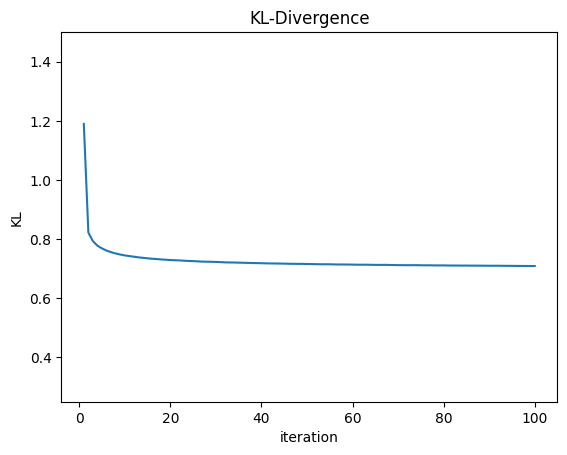}\label{subfig:KL_deterministic_real}}
	\end{center}
	\vskip -12pt
        \caption{The results of the experiment for Task 1 and Task 2: (a) and (b) correspond to the experiment for Task 1; (c) and (d) correspond to the experiment for Task 2.}
	\label{fig:real}
	\vspace{-.1in}
\end{figure}

Figure~\ref{fig:real} shows the experiment results for Task 1 and Task 2. Figure~\ref{subfig:accuracy_random_real} and Figure~\ref{subfig:accuracy_deterministic_real} present the test accuracy. In Figure~\ref{subfig:KL_random_real} and Figure~\ref{subfig:KL_deterministic_real}, we first normalize the output of the trained transformer model to get a $K$-dimensional vector, representing the prediction distribution of $K$ words. Then, we report the KL-divergence between this prediction distribution and the true distribution of $y|\bx_1,\bx_2,...,\bx_{N-1}$ in Figure~\ref{subfig:KL_random_real} and Figure~\ref{subfig:KL_deterministic_real}. The experiment results show a clear difference between the performances of the transformer model in the two tasks. In Task 1, the trained transformer model can successfully approach the optimal accuracy (100\%) within 100 iterations. However, in Task 2, the test accuracy remains around 50\% within $100$ iterations. 



The results can be explained following the discussion in Section~\ref{sec:pitfall}. Specifically, in Task 1, the average of the word embeddings $\overline{\bx}$ in a question can help the model to find the correct response, while in Task 2, the two questions give the \textit{same} average of word embeddings $\overline{\bx}$, and therefore, it causes inefficient optimization. 

The results for these two tasks demonstrate that our theories and explanations for random walks can also guide the construction of various other learning tasks and predict the performance of a transformer model in these tasks. This confirms the validity of our theories and explanations and highlights the insights provided by our study.

\section{Conclusion}

This paper investigates the capability of a one-layer transformer model to learn random walks on circles. We demonstrate that transformers successfully learn such walks for the transition probability $0<p<1$, achieving the optimal prediction accuracy. We also show that the trained model is interpretable: the softmax attention mechanism effectively selects the correct ``parent'' token, while the value matrix recovers a one-step transition model that applies to the selected ``parent'' token for optimal prediction. In addition, we identify that the edge cases ($p=0,1$) are failure cases, thereby proving the necessity of the assumption $0<p<1$. Motivated by the analysis of success and failure in learning random walks, we design simple question answering tasks that exhibit similar optimization challenges, showing the broader applicability of our analysis to other tasks beyond random walks. We also provide experimental results to validate our theoretical findings.

In future works, it is important to theoretically study the impact of random initialization in learning random walks. Moreover, an interesting future work direction is to extend the results and study the performance of deeper transformer architectures, which may require more advanced theoretical tools. Moreover, extending the finding to more complicated learning tasks, such as random sequences generated by general Markov chains or Bayesian networks, is also an important future work direction. 

\section*{Acknowledgments}
We thank the anonymous reviewers and area chairs for their valuable and constructive comments. Yuan Cao is partially supported by NSFC 12301657 and Hong Kong ECS award 27308624.


\appendix
\onecolumn

\section{Additional Related Work}
In this section, we give an overview of some additional related works.

\noindent\textbf{Token selection.} Our work reveals that a one-layer transformer can learn to perform the optimal token selection by focusing on the direct parent in a random walk. 
A line of recent works has studied the token selection of the self-attention mechanism from different perspectives. \citet{tarzanagh2023transformers,ataee2023max} propose an equivalence between the optimization dynamics of one self-attention layer and an SVM problem and prove the global convergence under certain assumptions. \citet{li2024mechanics} shows that when training a self-attention layer, the priority in token selection is determined by a directed graph extracted from the training data. 
\citet{wangtransformers} demonstrates that transformer models can learn the sparse token selection task effectively while fully connected networks fail in the worst case. \citet{li2024transformernearestneighbor} shows that a self-attention layer can be trained to perform proper token selection so that the model acts as a one-nearest neighbor classifier in context. More recently, \citet{zhang2025transformer} shows that one-layer transformer can learn to perform group-sparse linear classification.

\noindent\textbf{Next-token prediction.} \citet{thrampoulidis2024implicit} explores the implicit bias of next-token prediction employing a related SVM formulation. \citet{lurethinking} demonstrates that transformers fail to solve the Partially Observable Markov Decision Processes problem (POMDP) even with sufficient data. \citet{he2025law} observes a phenomenon of next-token prediction in LLM that each layer contributes equally to enhancing the prediction accuracy. \citet{tian2023scan} studies the SGD training dynamics of a transformer with one self-attention layer and one decoder layer for next-token prediction, restricted to some specific assumptions like no positional encoding, long input sequences, and the fact that the decoder layer learns faster than the self-attention layer.

\noindent\textbf{Training dynamics of transformers.} \citet{mahankalione,zhang2024trained} investigate the training dynamics of in-context learning in transformers with a single self-attention layer trained through gradient flow on linear regression tasks. \citet{huangcontext} solves in-context linear regression with the orthogonal input data by gradient descent on a single softmax attention layer. \citet{jelassi2022vision} demonstrates that the position-position block of a single attention layer in a vision transformer can encode spatial structure by dealing with a binary classification task. \citet{tianjoma} delves into the training process of transformers with multi-layers by analyzing the dynamics of the MLP layers. \citet{bietti2024birth} analyzes a synthetic in-context learning task and emphasizes the significance of weight matrices as associative memories. \citet{abbe2024transformers} shows incremental learning dynamics in transformers with diagonal attention matrices.

\section{Gradient Calculation}

Recall that the population loss is 
\begin{align*}
L(\theta) = \EE[\ell(\theta)] = \EE[-\log(\be_y^{\top}f_{\theta}(\bX)+\epsilon)] = \EE[-\log(\be_y^{\top}\bV\bX\cS(\tilde{\bX}^{\top}\bW\tilde{\bx}_{N})+\epsilon)].
\end{align*}
The following lemma calculates the gradients of the population function. 
\begin{lemma}\label{lemma:gradient}
The gradients regarding $\bV$ and $\bW$ are
\begin{align*}
    \nabla_{\bV} \ell(\theta) &= \ell' \cdot \be_y\sum_{i=1}^{N-1}\cS_i\bx_i^{\top} = -\frac{1}{\be_y^{\top}\bV\bX\cS+\epsilon} \cdot \be_y\sum_{i=1}^{N-1}\cS_i\bx_i^{\top},\\
    \nabla_{\bW} \ell(\theta) &= \ell' \cdot 
    \begin{bmatrix}
    \textbf{0} & \left(\sum_{i=1}^{N-1}\cS_i\bx_i\bx_i^{\top} - \sum_{i=1}^{N-1}\cS_i\bx_i\cdot\sum_{i=1}^{N-1}\cS_i\bx_i^{\top}\right)\bV^{\top}\be_y\bp_N^{\top}\\
    \textbf{0} & \left(\sum_{i=1}^{N-1}\cS_i\bp_i\bx_i^{\top} - \sum_{i=1}^{N}\cS_i\bp_i\cdot\sum_{i=1}^{N-1}\cS_i\bx_i^{\top}\right)\bV^{\top}\be_y\bp_N^{\top}
    \end{bmatrix} \\
    &= -\frac{1}{\be_y^{\top}\bV\bX\cS+\epsilon} \cdot \begin{bmatrix}
    \textbf{0} & \left(\sum_{i=1}^{N-1}\cS_i\bx_i\bx_i^{\top} - \sum_{i=1}^{N-1}\cS_i\bx_i\cdot\sum_{i=1}^{N-1}\cS_i\bx_i^{\top}\right)\bV^{\top}\be_y\bp_N^{\top}\\
    \textbf{0} & \left(\sum_{i=1}^{N-1}\cS_i\bp_i\bx_i^{\top} - \sum_{i=1}^{N}\cS_i\bp_i\cdot\sum_{i=1}^{N-1}\cS_i\bx_i^{\top}\right)\bV^{\top}\be_y\bp_N^{\top}
    \end{bmatrix},
\end{align*}
where $\cS=\cS(\tilde{\bX}^{\top}\bW\tilde{\bx}_{N})$, and $\cS_i$ is the $i$-th element of $\cS$.
\end{lemma}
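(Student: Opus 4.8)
The plan is to compute the two gradients directly from the chain rule, exploiting the structure $f_\theta(\bX) = \bV\bX\,\cS(\tilde{\bX}^\top \bW \tilde{\bx}_N)$ and the fact that $\ell(\theta) = -\log(\be_y^\top f_\theta(\bX) + \epsilon)$, so that $\ell'(\theta) = -1/(\be_y^\top \bV\bX\cS + \epsilon)$ is a scalar multiplier common to both gradients. I would write $g(\theta) := \be_y^\top \bV\bX\cS$ for the scalar inside the loss and reduce everything to computing $\nabla_\bV g$ and $\nabla_\bW g$, then multiply by $\ell'$ at the end.

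For $\nabla_\bV g$: treating $\cS$ as fixed with respect to $\bV$ (it does not depend on $\bV$), we have $g = \be_y^\top \bV (\bX\cS) = \mathrm{tr}(\bV (\bX\cS)\be_y^\top)$, so $\nabla_\bV g = \be_y (\bX\cS)^\top = \be_y \sum_{i=1}^{N-1}\cS_i \bx_i^\top$, using $\bX\cS = \sum_{i=1}^{N-1}\cS_i\bx_i$ since the $N$-th column of $\bX$ is zero. Multiplying by $\ell'$ gives the stated formula for $\nabla_\bV\ell$.

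For $\nabla_\bW g$: here the only $\bW$-dependence is through $\cS = \cS(\bz)$ with $\bz = \tilde{\bX}^\top \bW \tilde{\bx}_N$. I would first recall the softmax Jacobian: $\frac{\partial \cS}{\partial \bz} = \mathrm{diag}(\cS) - \cS\cS^\top$. Writing $g = \bu^\top \cS$ with $\bu^\top = \be_y^\top \bV\bX$ (so $u_i = \be_y^\top \bV \bx_i$ for $i\in[N-1]$ and $u_N = 0$), we get $\nabla_\bz g = (\mathrm{diag}(\cS) - \cS\cS^\top)\bu = \sum_i \cS_i u_i \be_i - (\sum_i \cS_i u_i)\cS$. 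Then since $\bz = \tilde{\bX}^\top \bW \tilde{\bx}_N$ is linear in $\bW$, we have $\nabla_\bW g = \tilde{\bX}\,(\nabla_\bz g)\,\tilde{\bx}_N^\top$. Substituting $\tilde{\bx}_N = [\mathbf{0}; \bp_N]$ explains why only the last block-column (the one multiplying $\bp_N^\top$) is nonzero, and the factor $\tilde{\bX}(\nabla_\bz g)$ splits into its $\bX$-block $\sum_i \cS_i \bx_i (u_i - \sum_j \cS_j u_j)$ and its $\bP$-block $\sum_i \cS_i \bp_i (u_i - \sum_j \cS_j u_j)$. Finally I would substitute $u_i = \be_y^\top \bV \bx_i$, i.e. $\sum_i \cS_i \bx_i u_i = (\sum_i \cS_i \bx_i\bx_i^\top)\bV^\top\be_y$ and $(\sum_j \cS_j u_j)\sum_i\cS_i\bx_i = (\sum_i\cS_i\bx_i)(\sum_j\cS_j\bx_j^\top)\bV^\top\be_y$, with the analogous identities for the $\bp_i$ block (noting the stated asymmetry where the $\bP$-block uses $\sum_{i=1}^N$ in the second sum because $\bp_N\neq\mathbf{0}$ while $\bx_N=\mathbf{0}$), and multiply through by $\ell'$.

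The only mild subtlety — and the step most worth stating carefully — is the interchange of expectation and gradient to pass from $\nabla\ell(\theta)$ to $\nabla L(\theta) = \EE[\nabla\ell(\theta)]$, which is justified by dominated convergence since $\ell'$ is bounded for $\epsilon>0$ and the data distribution has finite support; however, as stated the lemma only asks for $\nabla\ell(\theta)$, so this is not even needed here. The genuinely delicate bookkeeping is just getting the softmax Jacobian contraction and the block structure of $\tilde{\bX}$ right; everything else is routine matrix calculus. I expect no real obstacle — the ``hard part'' is purely organizational: keeping the index ranges $\sum_{i=1}^{N-1}$ versus $\sum_{i=1}^{N}$ straight between the $\bX$-block and the $\bP$-block, which is dictated by $\bx_N = \mathbf{0}$ but $\bp_N \neq \mathbf{0}$.
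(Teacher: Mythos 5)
Your proposal is correct and follows essentially the same route as the paper: the $\nabla_{\bV}$ part by direct differentiation of $\be_y^{\top}\bV\bX\cS$ with $\cS$ fixed, and the $\nabla_{\bW}$ part via the softmax Jacobian $\mathrm{diag}(\cS)-\cS\cS^{\top}$ contracted as $\tilde{\bX}[\mathrm{diag}(\cS)-\cS\cS^{\top}]\bX^{\top}\bV^{\top}\be_y\tilde{\bx}_N^{\top}$, with the zero block-column and the $\bp_N^{\top}$ factor coming from $\tilde{\bx}_N=[\mathbf{0};\bp_N]$. Your handling of the $\sum_{i=1}^{N-1}$ versus $\sum_{i=1}^{N}$ index ranges (driven by $\bx_N=\mathbf{0}$ but $\bp_N\neq\mathbf{0}$) matches the paper's statement exactly.
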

\begin{proof}[\textbf{Proof of Lemma~\ref{lemma:gradient}}]
For $\bV$, we have
\begin{align*}
\nabla_{\bV} \ell(\theta) &= -\frac{1}{\be_y^{\top}f_{\theta}(\bX)+\epsilon} \cdot \frac{\partial \be_y^{\top}\bV\bX\cS}{\partial \bV}\\
&= -\frac{1}{\be_y^{\top}\bV\bX\cS+\epsilon} \cdot \be_y\cS^{\top}\bX^{\top}\\
&= -\frac{1}{\be_y^{\top}\bV\bX\cS+\epsilon} \cdot \be_y\sum_{i=1}^{N-1}\cS_i\bx_i^{\top}.
\end{align*}
For $\bW$, we have
\begin{align*}
\nabla_{\bW} \ell(\theta) &= -\frac{1}{\be_y^{\top}f_{\theta}(\bX)+\epsilon} \cdot \frac{\partial \be_y^{\top}\bV\bX\cS(\tilde{\bX}^{\top}\bW\tilde{\bx}_{N})}{\partial \bW}\\
&= -\frac{1}{\be_y^{\top}\bV\bX\cS+\epsilon} \cdot \tilde{\bX}\cS'(\tilde{\bX}^{\top}\bW\tilde{\bx}_{N})\bX^{\top}\bV^{\top}\be_{y}\tilde{\bx}_{N}^{\top}\\
&= -\frac{1}{\be_y^{\top}\bV\bX\cS+\epsilon} \cdot \tilde{\bX}[\text{diag}(\cS)-\cS\cS^{\top}]\bX^{\top}\bV^{\top}\be_{y}\tilde{\bx}_{N}^{\top}\\
&= -\frac{1}{\be_y^{\top}\bV\bX\cS+\epsilon} \cdot \begin{bmatrix}
    \sum_{i=1}^{N-1}\cS_i\bx_i\bx_i^{\top} - \sum_{i=1}^{N-1}\cS_i\bx_i\cdot\sum_{i=1}^{N-1}\cS_i\bx_i^{\top}\\
    \sum_{i=1}^{N}\cS_i\bp_i\bx_i^{\top} - \sum_{i=1}^{N}\cS_i\bp_i\cdot\sum_{i=1}^{N-1}\cS_i\bx_i^{\top}
\end{bmatrix} \cdot \begin{bmatrix}
    \textbf{0} & \bV^{\top}\be_y\bp_N^{\top}
\end{bmatrix}\\
&= -\frac{1}{\be_y^{\top}\bV\bX\cS+\epsilon} \cdot \begin{bmatrix}
    \textbf{0} & (\sum_{i=1}^{N-1}\cS_i\bx_i\bx_i^{\top} - \sum_{i=1}^{N-1}\cS_i\bx_i\cdot\sum_{i=1}^{N-1}\cS_i\bx_i^{\top})\bV^{\top}\be_y\bp_N^{\top}\\
    \textbf{0} & (\sum_{i=1}^{N-1}\cS_i\bp_i\bx_i^{\top} - \sum_{i=1}^{N}\cS_i\bp_i\cdot\sum_{i=1}^{N-1}\cS_i\bx_i^{\top})\bV^{\top}\be_y\bp_N^{\top}
\end{bmatrix},
\end{align*}
where we use the fact that $\cS'(\Tilde{\bX}^{\top}\bW\Tilde{\bx}_N)=[\text{diag}(\cS)-\cS\cS^{\top}]$ and 
$$\text{diag}(\cS):=\begin{bmatrix}
    \cS_1 & & &\\
    & \cS_2 & &\\
    & & \ddots &\\
    & & & \cS_N
\end{bmatrix}.$$
\end{proof}
To simplify the notation, we denote
\begin{align*}
    &\bA := (\sum_{i=1}^{N-1}\cS_i\bx_i\bx_i^{\top} - \sum_{i=1}^{N-1}\cS_i\bx_i\cdot\sum_{i=1}^{N-1}\cS_i\bx_i^{\top})\bV^{\top}\be_y\bp_N^{\top},\\
    &\bB := (\sum_{i=1}^{N-1}\cS_i\bp_i\bx_i^{\top} - \sum_{i=1}^{N}\cS_i\bp_i\cdot\sum_{i=1}^{N-1}\cS_i\bx_i^{\top})\bV^{\top}\be_y\bp_N^{\top}.
\end{align*}
With this notations, by Lemma~\ref{lemma:gradient}, we have 
\begin{align}\label{eq:denote_AB}
    \nabla_{\bW} \ell(\theta) = -\frac{1}{\be_y^{\top}\bV\bX\cS+\epsilon} \cdot \begin{bmatrix}
    \textbf{0} & \bA\\
    \textbf{0} & \bB
    \end{bmatrix}.
\end{align}
We also denote $\bA^{(t)},\bB^{(t)}$ the corresponding matrices at the $t$-th iteration of gradient descent.
Moreover, we can observe that $\bW=\begin{bmatrix}
    \bW_{11} & \bW_{12}\\
    \bW_{21} & \bW_{22}
\end{bmatrix}$, where $\bW_{11}\in\RR^{K \times K}$, $\bW_{12}\in\RR^{K \times M}$, $\bW_{21}\in\RR^{M \times K}$, and $\bW_{22}\in\RR^{M \times M}$. By \eqref{eq:denote_AB}, we know that $\bW_{11}^{(t)}=\textbf{0}_{K \times K}$ and $\bW_{21}^{(t)}=\textbf{0}_{M \times K}$ for all $t \geq 1$.

By the definition of the random walks on circles, we can write the transition matrix $\bPi$ as $\bPi=p\bPi_{0}^{\top}+(1-p)\bPi_0$, where
\begin{align*}
\bPi_0=\begin{bmatrix}
    0 & & & & 1 \\
    1 & 0 & & & \\
    & 1 & 0 & & \\
    & & \ddots & \ddots & \\
    & & & 1 & 0 
\end{bmatrix}.
\end{align*}

\section{Proof of Theorem~\ref{theorem:random}}
In this section, we analyze random walks with the transition probability $p$ satisfying $0<p<1$. Without loss of generality, we assume $\frac{1}{2} \leq p < 1$. We consider gradient descent starting from zero initialization. The following lemma presents the result of the first iteration.

\begin{lemma}\label{lemma:V1_2}
Under the same conditions as Theorem~\ref{theorem:random}, it holds that
\begin{align*}
\bV^{(1)} = \frac{\eta}{\epsilon NK}\sum_{i=1}^{N-1}(\bPi^{\top})^{N-i} ~\text{and}~ \bW^{(1)}=\textbf{0}_{(K+M)\times(K+M)}.
\end{align*}
\end{lemma}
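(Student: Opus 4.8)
The plan is to compute the gradients at zero initialization explicitly using Lemma~\ref{lemma:gradient}, and then take a single gradient descent step. First I would observe that at $t=0$ we have $\bV^{(0)}=\mathbf{0}$ and $\bW^{(0)}=\mathbf{0}$, so the softmax input $\tilde{\bX}^\top \bW^{(0)}\tilde{\bx}_N$ is the zero vector, which means $\cS^{(0)} = \cS(\mathbf{0}) = \frac{1}{N}\mathbf{1}_N$, i.e.\ $\cS_i^{(0)} = 1/N$ for all $i\in[N]$. Also $\be_y^\top \bV^{(0)}\bX\cS^{(0)} = 0$, so the prefactor $-1/(\be_y^\top\bV^{(0)}\bX\cS^{(0)}+\epsilon) = -1/\epsilon$. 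For $\bW$: in the expression for $\nabla_{\bW}\ell(\theta^{(0)})$, every term contains the factor $\bV^{\top}\be_y$, which is $\mathbf{0}$ since $\bV^{(0)}=\mathbf{0}$; hence $\nabla_{\bW}L(\theta^{(0)}) = \mathbf{0}$ and $\bW^{(1)} = \bW^{(0)} - \eta\nabla_{\bW}L(\theta^{(0)}) = \mathbf{0}$, establishing the second claim.

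Next I would handle $\bV$. From Lemma~\ref{lemma:gradient}, $\nabla_{\bV}\ell(\theta^{(0)}) = -\frac{1}{\epsilon}\be_y\sum_{i=1}^{N-1}\frac{1}{N}\bx_i^\top = -\frac{1}{\epsilon N}\be_y\sum_{i=1}^{N-1}\bx_i^\top = -\frac{1}{\epsilon N}\be_y\big(\sum_{i=1}^{N-1}\bx_i\big)^\top$. Taking the expectation over the data, $\nabla_{\bV}L(\theta^{(0)}) = -\frac{1}{\epsilon N}\EE\big[\be_y\big(\sum_{i=1}^{N-1}\bx_i\big)^\top\big] = -\frac{1}{\epsilon N}\sum_{i=1}^{N-1}\EE[\be_y\bx_i^\top] = -\frac{1}{\epsilon N}\sum_{i=1}^{N-1}\EE[\be_{s_N}\be_{s_i}^\top]$. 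The key computation is then identifying $\EE[\be_{s_N}\be_{s_i}^\top]$. Since $\be_{s_N}\be_{s_i}^\top$ has a single $1$ in position $(s_N, s_i)$, its expectation is the matrix whose $(k,\ell)$ entry is $\PP(s_N = k, s_i = \ell)$. Using $\PP(s_N=k\mid s_i=\ell) = [(\bPi^{N-i})]_{\ell,k}$ (the $(N-i)$-step transition probability) and $\PP(s_i=\ell) = 1/K$ (the uniform distribution is stationary for this doubly-stochastic circulant chain, or directly since $s_1\sim\mathrm{Unif}([K])$ and $\bPi$ preserves it), we get $\EE[\be_{s_N}\be_{s_i}^\top] = \frac{1}{K}(\bPi^{N-i})^\top = \frac{1}{K}(\bPi^\top)^{N-i}$. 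I would state the fact that $\mathrm{Unif}([K])$ is stationary as a short sub-argument: each column of $\bPi$ sums to $p + (1-p) = 1$, so $\mathbf{1}^\top\bPi = \mathbf{1}^\top$, hence $\bPi^\top$ fixes the uniform vector.

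Combining, $\nabla_{\bV}L(\theta^{(0)}) = -\frac{1}{\epsilon N K}\sum_{i=1}^{N-1}(\bPi^\top)^{N-i}$, and therefore $\bV^{(1)} = \bV^{(0)} - \eta\nabla_{\bV}L(\theta^{(0)}) = \frac{\eta}{\epsilon N K}\sum_{i=1}^{N-1}(\bPi^\top)^{N-i}$, which is exactly the claimed formula.

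I do not expect a serious obstacle here; this is essentially a direct computation. The only point requiring a little care is justifying $\PP(s_i = \ell) = 1/K$ for every $i$ (needs the stationarity observation above) and correctly tracking the transpose: $\PP(s_N = k \mid s_i = \ell)$ equals the $(\ell, k)$ entry of $\bPi^{N-i}$ by definition of the transition matrix convention used in the paper ($\pi_{i,j}$ is the probability of going from $i$ to $j$), which becomes the $(k,\ell)$ entry of $(\bPi^\top)^{N-i} = (\bPi^{N-i})^\top$. One should also double-check the index shift in $\sum_{i=1}^{N-1}(\bPi^\top)^{N-i}$: as $i$ ranges over $1,\dots,N-1$, the exponent $N-i$ ranges over $N-1,\dots,1$, so the sum is $\sum_{m=1}^{N-1}(\bPi^\top)^m$, matching the stated expression.
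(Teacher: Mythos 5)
Your proposal is correct and follows essentially the same route as the paper's proof: both evaluate the gradient from Lemma~\ref{lemma:gradient} at zero initialization (uniform softmax weights $1/N$, prefactor $-1/\epsilon$), use the uniform marginal of $s_i$ together with the $(N-i)$-step transition to obtain $\EE[\be_y\bx_i^\top]=\tfrac{1}{K}(\bPi^\top)^{N-i}$, and note that $\bV^{(0)}=\mathbf{0}$ forces $\nabla_{\bW}L(\theta^{(0)})=\mathbf{0}$. The only cosmetic difference is that the paper writes the expectation as $\EE[(\bPi^{\top})^{N-i}\bx_i\bx_i^{\top}]$ and uses $\EE[\bx_i\bx_i^{\top}]=\tfrac{1}{K}\Ib_K$, whereas you compute the joint law of $(s_N,s_i)$ directly and spell out the stationarity of the uniform distribution, which the paper leaves implicit.
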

\begin{proof}[\textbf{Proof of Lemma~\ref{lemma:V1_2}}]
By Lemma~\ref{lemma:gradient}, we have
\begin{align*}
\EE[\nabla_{\bV} \ell(\theta^{(0)})] &= -\frac{1}{\epsilon N} \sum_{i=1}^{N-1}\EE[\be_y \bx_i^{\top}]\\
&= -\frac{1}{\epsilon N} \sum_{i=1}^{N-1}\EE[(\bPi^{\top})^{N-i}\bx_i\bx_i^{\top}]\\
&= -\frac{1}{\epsilon NK}\sum_{i=1}^{N-1}(\bPi^{\top})^{N-i}
\end{align*}
where the first equation is by the initialization of $\bV^{(0)}$ and $\bW^{(0)}$, the second equation is by the sampling method, and the third equation is by $\EE[\bx_i\bx_i^{\top}]=\frac{1}{K}\Ib_K$ for $i \in [N-1]$ since $\bx_i$ is uniformly distributed in $\bE$. 
Thus, by the update, we can get
\begin{align*}
\bV^{(1)} &= \bV^{(0)}-\eta\EE[\nabla_{\bV} \ell(\theta^{(0)})]\\
&= \frac{\eta}{\epsilon NK}\sum_{i=1}^{N-1}(\bPi^{\top})^{N-i}.
\end{align*}
Since $\bV^{(0)}=\textbf{0}_{K \times K}$ and $\bW^{(0)}=\textbf{0}_{(K+M) \times (K+M)}$, we can get $\EE[\nabla_{\bW} \ell(\theta^{(0)})] = \textbf{0}_{(K+M) \times (K+M)}$. Thus, 
\begin{align*}
\bW^{(1)} = \bW^{(0)}-\eta\EE[\nabla_{\bW} \ell(\theta^{(0)})] = \textbf{0}_{(K+M) \times (K+M)}.
\end{align*}
\end{proof}

Lemma~\ref{lemma:V1_2} gives explicit calculations of $\bV^{(1)}$. Based on these, we can further derive some properties of $\bV^{(1)}$, given in Lemma~\ref{lemma:pi2_prop} below. In this lemma, for all matrix indices, we consider simplified notations following the rule that if an index $i$ is not in the set $\{1,\ldots,K\}$, we treat it as $i' \in \{1,\ldots,K\}$ with $i' \equiv i \pmod K$. 



\begin{lemma}\label{lemma:pi2_prop}
Under the same conditions as Theorem~\ref{theorem:random}, it holds that $[\bV^{(1)}]_{i_1,j_2}=[\bV^{(1)}]_{i_2,j_2}$ for $i_1-j_1 \equiv i_2-j_2 \pmod K$.
\end{lemma}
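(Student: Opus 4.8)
The plan is to prove that $\bV^{(1)}$ is a \emph{circulant-type (Toeplitz-on-the-circle) matrix}, i.e.\ its entries depend only on the index difference $i-j$ modulo $K$. The cleanest route is to exploit the structure already exposed in Lemma~\ref{lemma:V1_2}: $\bV^{(1)} = \frac{\eta}{\epsilon NK}\sum_{i=1}^{N-1}(\bPi^{\top})^{N-i}$, so up to scaling $\bV^{(1)}$ is a polynomial in $\bPi^{\top}$ (equivalently in $\bPi$). Hence it suffices to show that any power $\bPi^{m}$ has the claimed property, and that the property is preserved under addition and scalar multiplication.

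First I would recall the decomposition $\bPi = p\bPi_0^{\top} + (1-p)\bPi_0$ given right before this section, where $\bPi_0$ is the cyclic shift (permutation) matrix $[\bPi_0]_{i,j} = \ind\{i \equiv j+1 \pmod K\}$. The key algebraic fact is that $\bPi_0$ is the matrix of the cyclic group action, so every matrix in the algebra generated by $\bPi_0$ and $\bPi_0^{\top}$ is circulant. Concretely, I would introduce the definition: a matrix $\bM\in\RR^{K\times K}$ is \emph{circulant} if $[\bM]_{i,j}$ depends only on $\la i - j\ra_K$, and note that circulant matrices form an algebra (closed under sums, scalar multiples, and products) because they are exactly the matrices diagonalized by the discrete Fourier transform (or, more elementarily, because $\bM$ is circulant iff $\bPi_0 \bM \bPi_0^{\top} = \bM$, and this conjugation-invariance is obviously preserved under sums and products). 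Since $\bPi = p\bPi_0^{\top} + (1-p)\bPi_0$ is circulant, so is $\bPi^{\top}$, so is each $(\bPi^{\top})^{N-i}$, and so is the sum $\sum_{i=1}^{N-1}(\bPi^{\top})^{N-i}$; scaling by $\frac{\eta}{\epsilon NK}$ does not change this. Then $[\bV^{(1)}]_{i_1,j_1} = [\bV^{(1)}]_{i_2,j_2}$ whenever $i_1 - j_1 \equiv i_2 - j_2 \pmod K$ is precisely the circulant property, which finishes the proof. (I note the statement as written has a minor typo — it compares $[\bV^{(1)}]_{i_1,j_2}$ and $[\bV^{(1)}]_{i_2,j_2}$ with hypothesis $i_1-j_1\equiv i_2-j_2$; the intended claim is the circulant identity $[\bV^{(1)}]_{i_1,j_1}=[\bV^{(1)}]_{i_2,j_2}$, which is what I would prove.)

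An alternative, more self-contained route that avoids invoking the DFT: prove directly by induction on $m$ that $[\bPi_0^{m}]_{i,j} = \ind\{i \equiv j+m \pmod K\}$ and more generally that any word in $\bPi_0,\bPi_0^{\top}$ of net length $\ell$ contributes only to entries with $i-j\equiv \ell\pmod K$; expanding $(\bPi^{\top})^{m} = (p\bPi_0 + (1-p)\bPi_0^{\top})^{m}$ by the multinomial theorem, the $(i,j)$ entry is a sum over words, and each word's contribution depends only on how many $\bPi_0$'s versus $\bPi_0^{\top}$'s it uses — a quantity that, together with the constraint of landing at $(i,j)$, depends only on $\la i-j\ra_K$. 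Summing over $m = 1,\dots,N-1$ preserves this. I would likely present this elementary version since it keeps the appendix self-contained and matches the ``Toeplitz property'' language used in the proof sketch of Theorem~\ref{theorem:random}.

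The main obstacle is essentially bookkeeping rather than conceptual: one must be careful with the mod-$K$ index conventions (the lemma explicitly adopts the convention that out-of-range indices are reduced mod $K$), and with the fact that $\bPi$ mixes $\bPi_0$ and $\bPi_0^{\top}$ so that a single power $(\bPi^{\top})^m$ is a genuine sum of many shift matrices with different net displacements — one needs that \emph{each} term respects the circulant structure, not just the leading one. Framing everything through the conjugation-invariance characterization ($\bM$ circulant $\iff \bPi_0\bM\bPi_0^{-1}=\bM$, using $\bPi_0^{-1}=\bPi_0^{\top}$ since $\bPi_0$ is an orthogonal permutation) makes this a two-line argument and sidesteps the index-chasing entirely, so that is the presentation I would favor in the final write-up.
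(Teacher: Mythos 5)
Your proposal is correct, and it rests on the same reduction the paper uses: by Lemma~\ref{lemma:V1_2}, $\bV^{(1)}$ is a scalar multiple of $\sum_{i=1}^{N-1}(\bPi^{\top})^{N-i}$, so everything comes down to showing that powers of the transition matrix have the circulant property and that this property survives summation, scaling, and transposition. Where you differ is in how the power step is handled: the paper proves by induction on $R$ that $[\bPi^{R}]_{i_1,j_1}=[\bPi^{R}]_{i_2,j_2}$ whenever $i_1-j_1\equiv i_2-j_2 \pmod K$, using the one-step recursion $[\bPi^{R+1}]_{i,j}=p\,[\bPi^{R}]_{i,j-1}+(1-p)\,[\bPi^{R}]_{i,j+1}$ directly from the definition of $\bPi$, and never needs the $\bPi_0$-decomposition; you instead invoke $\bPi=p\bPi_0^{\top}+(1-p)\bPi_0$ together with the characterization that a matrix $\bA$ is circulant iff $\bPi_0\bA\bPi_0^{\top}=\bA$ (using $\bPi_0^{-1}=\bPi_0^{\top}$, which the paper records in Lemma~\ref{lemma:pi}), so closure under sums and products makes the conclusion immediate. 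Your route is slightly more structural and eliminates the index bookkeeping; the paper's induction is more elementary and mirrors your second, word-expansion alternative in spirit. You are also right that the statement contains an index typo ($[\bV^{(1)}]_{i_1,j_2}$ should read $[\bV^{(1)}]_{i_1,j_1}$): the paper's own proof establishes exactly the intended circulant identity, which is also what your argument proves. No gaps.
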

\begin{proof}[\textbf{Proof of Lemma~\ref{lemma:pi2_prop}}]

We use induction to prove that for any $R\in\NN$, $[\bPi^R]_{i_1,j_2}=[\bPi^R]_{i_2,j_2}$ for $i_1-j_1 \equiv i_2-j_2 \pmod K$. The result is obvious at $R=1$. Suppose that the results hold for $\bPi^R$. We aim to prove the results hold for $\bPi^{R+1}$. By the definition of $\bPi$, for any $i,j$, we obtain that  
\begin{align}
[\bPi^{R+1}]_{i,j} = p \cdot [\bPi^{R}]_{i,j-1} + (1-p) \cdot [\bPi^{R}]_{i,j+1}. \label{eq:piR_update}
\end{align}
By induction, for any $i_1,j_1,i_2,j_2$ satisfying $i_1-j_1 \equiv i_2-j_2 \pmod K$, we have $[\bPi^{R}]_{i_1,j_1-1}=[\bPi^{R}]_{i_2,j_2-1}$ and $[\bPi^{R}]_{i_1,j_1+1}=[\bPi^{R}]_{i_2,j_2+1}$. Thus, by \eqref{eq:piR_update}, we can easily get $[\bPi^{R+1}]_{i_1,j_1}=[\bPi^{R+1}]_{i_2,j_2}$, which completes the induction. 

From Lemma~\ref{lemma:V1_2}, we know $(\bV^{(1)})^{\top} = \frac{\eta}{\epsilon NK}\sum_{i=1}^{N-1}\bPi^{N-i}$ holds the result. Therefore, $\bV^{(1)}$ also has the same property.
\end{proof}

\begin{lemma}\label{lemma:Vmax}
Under the same conditions as Theorem~\ref{theorem:random}, it holds that $[\bV^{(1)}]_{2,1}=\|\bV^{(1)}\|_{\max}$.
\end{lemma}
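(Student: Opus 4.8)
The plan is to exploit the explicit formula $\bV^{(1)} = \frac{\eta}{\epsilon NK}\sum_{i=1}^{N-1}(\bPi^{\top})^{N-i}$ from Lemma~\ref{lemma:V1_2} together with the circulant structure established in Lemma~\ref{lemma:pi2_prop}. Since $(\bV^{(1)})^\top = \frac{\eta}{\epsilon NK}\sum_{R=1}^{N-1}\bPi^R$, and by Lemma~\ref{lemma:pi2_prop} each power $\bPi^R$ is a circulant matrix, the entry $[\bV^{(1)}]_{i,j}$ depends only on $i-j \bmod K$. Writing $c_d := [(\bV^{(1)})^\top]_{i,i+d}$ for the common value along the ``$d$-th diagonal'' (indices mod $K$), the claim $[\bV^{(1)}]_{2,1} = \|\bV^{(1)}\|_{\max}$ reduces to showing that the diagonal corresponding to $i-j\equiv 1 \pmod K$ carries the largest entry, i.e. $c_d$ is maximized at the shift $d$ matching the position of the largest off-diagonal entry of $\bPi^\top$. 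Recall $\bPi^\top = p\bPi_0 + (1-p)\bPi_0^\top$, and under the convention $\tfrac12 \le p < 1$ the dominant entry of $\bPi^\top$ sits at the $(2,1)$ position (the ``clockwise parent'' location).

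The key steps, in order, would be: (i) identify $\bPi^R$ with the distribution of an $R$-step $\pm 1$ random walk on $\ZZ_K$: $[\bPi^R]_{i,j} = \PP(\text{walk from } i \text{ reaches } j \text{ in } R \text{ steps}) = \sum_{k} \binom{R}{k} p^{?}(1-p)^{?}$ over the appropriate parity class, so that $[(\bPi^{\top})^R]_{i,j} = [\bPi^R]_{j,i}$ is the probability of going from $j$ to $i$; (ii) sum over $R = 1,\dots,N-1$ to get that $c_d$ equals (up to the positive constant $\frac{\eta}{\epsilon NK}$) the expected number of visits within $N-1$ steps to the node at offset $d$, started from a fixed node, biased by $p$; (iii) argue that because $p \ge \tfrac12$, among all offsets $d \in \{-(K-1)/2,\dots\}$ reachable in one step, the offset $d=1$ (the node reached by a single clockwise move) accumulates the most weight. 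For the one-step contribution this is immediate: $[\bPi^\top]_{2,1} = p \ge 1-p = [\bPi^\top]_{K,1}$, and all other first-step entries are zero. The real content is controlling the contributions from $R \ge 2$: one must show that these later-time contributions cannot overturn the ordering, which should follow from a coupling/monotonicity argument showing $\PP(\text{at offset } d \text{ after } R \text{ steps})$ is, summed over $R$, monotone in $d$ as $d$ moves from the ``counterclockwise extreme'' toward offset $1$, using $p \ge \tfrac12$ and that $N$ is large enough (the $N \ge C_p \cdot \mathrm{poly}(K)$ hypothesis) that transient boundary effects near small $R$ dominate the comparison, or alternatively that stationarity is approached and the stationary distribution is uniform so late terms contribute equally to every diagonal and thus do not change the argmax.

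I expect the main obstacle to be step (iii): rigorously proving that the aggregated weight $\sum_{R=1}^{N-1}[\bPi^R]_{i, i+d}$ is strictly maximized at $d$ corresponding to the $(2,1)$ entry, rather than merely showing the $R=1$ term is maximized there. Because $\bPi$ is not symmetric when $p \ne \tfrac12$, the powers $\bPi^R$ have a drift, and one has to ensure the drift pushes mass in the ``right'' direction consistently; a clean way to handle this is to use the circulant/Fourier diagonalization of $\bPi$ — writing $c_d = \frac{\eta}{\epsilon NK}\sum_{R=1}^{N-1}\frac{1}{K}\sum_{\ell=0}^{K-1}\lambda_\ell^R \omega^{-\ell d}$ with $\lambda_\ell = p\omega^\ell + (1-p)\omega^{-\ell}$, $\omega = e^{2\pi i/K}$ — and then bounding the non-principal eigenvalue contributions using $|\lambda_\ell| < 1$ for $\ell \ne 0$ so that for $N$ large the sum is dominated by the principal term plus a controlled correction whose argmax over $d$ is pinned down by the $R=1,2$ terms. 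I would also need the mild combinatorial fact, used implicitly, that $K > 4$ ensures the offsets $1$ and $-1$ are distinct from $2$ and $-2$ so there is no accidental coincidence of diagonals. Once the argmax over $d \in \ZZ_K$ is located at $d \equiv 1$, translating back via Lemma~\ref{lemma:pi2_prop} gives $[\bV^{(1)}]_{2,1} = \|\bV^{(1)}\|_{\max}$ as claimed.
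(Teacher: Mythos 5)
Your overall strategy is the same as the paper's: start from the explicit formula in Lemma~\ref{lemma:V1_2}, use the circulant structure (Lemma~\ref{lemma:pi2_prop}) to reduce the claim to comparing constant diagonals indexed by the offset $d \equiv i-j \pmod K$, and argue that the low powers of $\bPi^{\top}$ determine the location of the maximum while the high powers are nearly uniform by the eigenvalue bounds of Lemmas~\ref{lemma:entry_odd} and~\ref{lemma:entry_even}. The problem is that your step (iii) --- the step you yourself flag as the main obstacle --- is genuinely open in your write-up, and the fallback you offer does not close it. The termwise bound $\big|[\bPi^R]_{i,j}-\tfrac{1}{K}\big|\le \exp\big(-8p(1-p)R/K^2\big)$ yields, after summing over all $R\ge 2$ (or $R \ge R_0$ for any constant $R_0$), a cumulative deviation from uniformity of order $K^2/\big(p(1-p)\big)$ on each diagonal, whereas the advantage that the offset-$1$ diagonal gains from the $R=1,2$ terms is only $O(1)$. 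Hence ``late terms contribute (almost) equally to every diagonal and so do not change the argmax'' is not a valid argument as stated: the accumulated non-uniformity can in principle swamp the early-term gap, and one must exploit the sign and structure of these deviations, not merely their magnitude.

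Closing this gap requires working with the resummed quantity $\sum_{R=1}^{N-1}(\bPi^{\top})^{R}$ rather than term by term in $R$: your Fourier route would have to analyze the resummed correction $\sum_{\ell\neq 0}\omega^{-\ell d}\,\lambda_\ell\big(1-\lambda_\ell^{N-1}\big)/(1-\lambda_\ell)$ (in your notation) and show it is maximized at the offset of the $(2,1)$ entry --- equivalently, via the fundamental-matrix identity, that for $p\ge 1/2$ the clockwise neighbor has the smallest expected hitting time --- or else run an induction in the horizon $N$ that propagates a quantitative gap for the partial sums, which is the spirit of the paper's Lemma~\ref{lemma:trace_bound}. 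The paper's own proof of the present lemma instead resums the series into $\big(\bPi^{\top}-(\bPi^{\top})^{N}\big)$ multiplied by an expansion in powers of the cyclic shift $\bPi_{0}^{\top}$ with geometrically decaying coefficients $((1-p)/p)^{k}$, so the dominance of the $(2,1)$ location is obtained geometrically in the shift index rather than in the time index $R$ --- precisely the device missing from your sketch. As written, your proposal identifies the correct reduction and intuition but does not yet contain a proof of the decisive comparison.
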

\begin{proof}[\textbf{Proof of Lemma~\ref{lemma:Vmax}}]
Without loss of generality, we assume $\frac{1}{2} \leq p <1$. From Lemma~\ref{lemma:V1_2}, we know $\bV^{(1)} = \frac{\eta}{\epsilon NK}\sum_{i=1}^{N-1}(\bPi^{\top})^{N-i}$. We can observe that since $p<1$, with increasing $R$, $(\bPi^{\top})^{R}$ will be closer to $\frac{1}{K}\textbf{1}\textbf{1}^{\top}$, as stated in Lemma~\ref{lemma:entry_odd} and \ref{lemma:entry_even}. Thus, the location of the largest entries in $\bV^{(1)}$ is mainly determined by the first several terms. In $\bPi^{\top}$, there are two locations with non-zero values $p,1-p$. In $(\bPi^{\top})^{2}$, there are three locations with non-zero values $p^2,2p(1-p),(1-p)^2$. We can easily check that $p$ is larger than $p^2$ and $2p(1-p)$. Thus, we know the location of the value $p$ in $\bPi^{\top}$ is also the largest in $\bV^{(1)}$, i.e. $[\bV^{(1)}]_{2,1}$. In detail, we can first get that
\begin{align*}
    \sum_{i=0}^{N-1}(\bPi^{\top})^{N-i}&=(\bPi^{\top}-(\bPi^{\top})^{N}) (\bI - \bPi^{\top})^{-1}
\end{align*}

matrix of interest:
\begin{align*}
    \be_1^\top \sum_{i=0}^{N-2}(\bPi^{\top})^{i} (\bPi^{\top}\bPi \be_1). 
\end{align*}
\begin{align*}
    \be_1^\top \sum_{i=0}^{N-2}(\bPi^{\top})^{i} (\bPi^{\top}\bPi^{j} \be_1). 
\end{align*}

Denote
\begin{align*}
    \bGamma(N) = \sum_{i=0}^{N-2}(\bPi^{\top})^{i}. 
\end{align*}
Claim: for all $N$, the diagonal entries in $\bGamma(N)$ are larger than or equal to the other entries.

\begin{align*}
    \bGamma(N) = \bPi^\top \bGamma(N-1) + \bI.
\end{align*}

Induction hypothesis: 
\begin{align*}
    \bGamma(N)_{1,1}  \geq \bGamma(N)_{i,j} + {c}_p. 
\end{align*}

Suppose the induction hypothesis holds for $\bGamma(N-1)$. 

For $i,j$ with $|i-j|\geq 1$,
\begin{align*}
    \bGamma(N)_{ij} = p\cdot \bGamma(N-1)_{i,j-1} + (1-p)\cdot ( \bGamma(N-1) )_{i,j+1} \leq ( \bGamma(N-1) )_{1,1} + c_p.
\end{align*}


\begin{align*}
    \bGamma(N)_{1,1} 
    &\geq \bGamma(N-1)_{1,1}
\end{align*}


$\sum_{i=1}^{N-1}(\bPi^{\top})^{N-i}=(\bPi^{\top}-(\bPi^{\top})^{N})C_{p}\sum_{k=0}^{\infty}\left(\frac{1-p}{p}\right)^{k}(\bPi_{0}^{\top})^{k}$, where $C_p$ is a positive constant regarding $p$. 
Considering the terms $\bPi^{\top}(\bPi_{0}^{\top})^{k}$ with $0 \leq k \leq K-1$, we can easily observe that the coefficient of $\bPi^{\top}(\bPi_{0}^{\top})^{0}$ is much larger than that of other terms. Since $\bPi_{0}^{\top}$ is a cyclic shift matrix and the entries in $(\bPi^{\top})^{N}$ are almost same (Lemma~\ref{lemma:entry_odd} and \ref{lemma:entry_even}), the location of the largest entry in $\bPi^{\top}$ is also that in $\bV^{(1)}$. Therefore, it holds that $[\bV^{(1)}]_{2,1}=\|\bV^{(1)}\|_{\max}$.
\end{proof}

Further, the following two lemmas provide some properties of the weights for the second iteration.

\begin{lemma}\label{lemma:V2_2}
Under the same conditions as Theorem~\ref{theorem:random}, it holds that $\|\bV^{(2)}\|_{\max} \leq \frac{\eta}{\epsilon K}+2\epsilon K^2$.
\end{lemma}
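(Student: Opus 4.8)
The plan is to bound $\|\bV^{(2)}\|_{\max}$ directly from the gradient descent update $\bV^{(2)} = \bV^{(1)} - \eta\, \EE[\nabla_{\bV}\ell(\theta^{(1)})]$, using Lemma~\ref{lemma:V1_2} to control the two pieces separately. First I would recall from Lemma~\ref{lemma:gradient} that
\[
\EE[\nabla_{\bV}\ell(\theta^{(1)})] = -\EE\!\left[\frac{1}{\be_y^{\top}\bV^{(1)}\bX\cS^{(1)}+\epsilon}\,\be_y\sum_{i=1}^{N-1}\cS^{(1)}_i\bx_i^{\top}\right],
\]
and that by Lemma~\ref{lemma:V1_2} we have $\bW^{(1)}=\mathbf{0}$, so $\cS^{(1)} = \cS(\tilde{\bX}^{\top}\bW^{(1)}\tilde{\bx}_N)$ is the uniform distribution $\cS^{(1)}_i = 1/N$ for all $i\in[N]$. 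This collapses the attention-weighted sum of tokens to $\frac{1}{N}\sum_{i=1}^{N-1}\bx_i$, a vector with nonnegative entries summing to $(N-1)/N \le 1$. Hence each entry of $\EE[\nabla_{\bV}\ell(\theta^{(1)})]$ is a convex-combination-type average of terms of the form $-\frac{1}{\be_y^{\top}\bV^{(1)}\bX\cS^{(1)}+\epsilon}\cdot[\be_y]_k\cdot[\frac{1}{N}\sum_i\bx_i]_l$.

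The key quantitative input is a two-sided bound on the denominator $\be_y^{\top}\bV^{(1)}\bX\cS^{(1)}+\epsilon$. Since $\bV^{(1)} = \frac{\eta}{\epsilon NK}\sum_{i=1}^{N-1}(\bPi^{\top})^{N-i}$ has all entries nonnegative (powers of a stochastic-type matrix), $\bV^{(1)}\bX\cS^{(1)}$ has nonnegative entries, so the denominator is at least $\epsilon$; this gives $\|\EE[\nabla_{\bV}\ell(\theta^{(1)})]\|_{\max} \le \frac{1}{\epsilon}\cdot\frac{1}{N}\cdot(N-1) \le \frac{1}{\epsilon}$ — but this is too crude. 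To get the sharper $\frac{\eta}{\epsilon K}+2\epsilon K^2$ bound after multiplying by $\eta$, I would instead bound the denominator from below by something of order $\frac{\eta}{\epsilon K N}\cdot(\text{stuff})$ coming from $\bV^{(1)}$ being of that magnitude, forcing the gradient entries to be of order $\frac{1}{N}\cdot\frac{\epsilon KN}{\eta}\cdot\frac{N-1}{N} = O(\epsilon K)$, whence $\|\bV^{(1)}\|_{\max} + \eta\|\nabla\|_{\max} = O(\frac{\eta}{\epsilon NK}\cdot N) + O(\eta\epsilon K) = \frac{\eta}{\epsilon K} + O(\eta \epsilon K)$. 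The precise constants ($2\epsilon K^2$) will come from carefully tracking that $\be_y^{\top}\bV^{(1)}\bX\cS^{(1)}$ equals $\frac{1}{N}\sum_{i=1}^{N-1}\be_y^{\top}\bV^{(1)}\bx_i = \frac{1}{N}\sum_i [\bV^{(1)}]_{y,s_i}$, and using that the row sums of $\sum_{i=1}^{N-1}(\bPi^{\top})^{N-i}$ are exactly $N-1$ together with the near-uniformity of high powers of $\bPi^{\top}$ (Lemmas~\ref{lemma:entry_odd}, \ref{lemma:entry_even}) to show this inner product concentrates around $\frac{\eta(N-1)}{\epsilon NK^2}$, which is the dominant term in the denominator relative to $\epsilon$ when $N \ge C_p\cdot\mathrm{poly}(K)$.

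Putting it together: write $[\bV^{(2)}]_{k,l} = [\bV^{(1)}]_{k,l} - \eta[\EE\nabla_{\bV}\ell(\theta^{(1)})]_{k,l}$, bound $|[\bV^{(1)}]_{k,l}| \le \|\bV^{(1)}\|_{\max} \le \frac{\eta(N-1)}{\epsilon NK} \le \frac{\eta}{\epsilon K}$ (direct from Lemma~\ref{lemma:V1_2} since each $(\bPi^{\top})^{N-i}$ has $\max$-entry at most $1$ — actually at most $\max\{p,1-p\}$, and there are $N-1$ terms, giving $\frac{\eta(N-1)}{\epsilon NK} \le \frac{\eta}{\epsilon K}$), and bound the gradient contribution $\eta|[\EE\nabla_{\bV}\ell(\theta^{(1)})]_{k,l}| \le 2\epsilon K^2$ using the denominator lower bound and the fact that $[\be_y]_k \le 1$, $[\frac{1}{N}\sum_i\bx_i]_l \le \frac{N-1}{N} \le 1$. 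By the triangle inequality, $\|\bV^{(2)}\|_{\max} \le \frac{\eta}{\epsilon K} + 2\epsilon K^2$.

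The main obstacle I anticipate is establishing the right lower bound on the denominator $\be_y^{\top}\bV^{(1)}\bX\cS^{(1)}+\epsilon$ with the correct dependence on $K$ and $N$: a naive bound of $\epsilon$ is too weak, and one needs to exploit both the structure of $\bV^{(1)}$ (all row sums equal to $\frac{\eta(N-1)}{\epsilon NK}$ by the stochasticity of $\bPi^{\top}$) and the near-uniformity of $\bPi^{\top}$'s high powers to show the inner product $\frac{1}{N}\sum_{i=1}^{N-1}[\bV^{(1)}]_{y,s_i}$ is of order $\frac{\eta}{\epsilon K^2}$ rather than possibly much smaller. The appearance of $\epsilon K^2$ (rather than $\epsilon K$) in the target bound suggests the authors are being somewhat loose here and only need a lower bound of order $\frac{\eta}{\epsilon K^2 N}\cdot N = \frac{\eta}{\epsilon K^2}$ on the denominator, which is more forgiving; I would aim for that and not chase the sharpest constant.
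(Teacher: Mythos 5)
Your proposal is correct and follows essentially the same route as the paper: split $\bV^{(2)}=\bV^{(1)}-\eta\,\EE[\nabla_{\bV}\ell(\theta^{(1)})]$, use that $\bW^{(1)}=\mathbf{0}$ makes the softmax uniform, bound $\|\bV^{(1)}\|_{\max}\leq \eta/(\epsilon K)$, and lower-bound the denominator by $\tfrac{N-1}{N}\min_{i,j}[\bV^{(1)}]_{i,j}\gtrsim \eta/(2\epsilon K^2)$ via the near-uniformity of powers of $\bPi$ (Lemmas~\ref{lemma:entry_odd}, \ref{lemma:entry_even}), which is exactly where the $2\epsilon K^2$ term comes from. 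Your brief intermediate estimate suggesting an $O(\epsilon K)$ gradient contribution is an arithmetic slip, but you correct it yourself by targeting the denominator lower bound of order $\eta/(\epsilon K^2)$, matching the paper's argument.
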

\begin{proof}[\textbf{Proof of Lemma~\ref{lemma:V2_2}}]
First, we have
\begin{align*}
\bV^{(2)} &= \bV^{(1)}-\eta\EE[\nabla_{\bV} \ell(\theta^{(1)})] = \bV^{(1)} + \eta\EE\left[ \frac{\be_y\sum_{i=1}^{N-1}\cS_i^{(1)}\bx_i^{\top}}{\be_y^{\top}\bV^{(1)}\sum_{i=1}^{N-1}\cS_i^{(1)}\bx_i+\epsilon} \right].
\end{align*}
Thus,
\begin{align*}
\|\bV^{(2)}\|_{\max} &\leq \left\|\bV^{(1)}\right\|_{\max} + \left\| 
\eta\EE\left[ \frac{\frac{1}{N}\be_y\sum_{i=1}^{N-1}\bx_i^{\top}}{\frac{1}{N}\be_y^{\top}\bV^{(1)}\sum_{i=1}^{N-1}\bx_i+\epsilon} \right] \right\|_{\max}\\
&\leq \left\|\bV^{(1)}\right\|_{\max} + \frac{\eta}{\frac{N-1}{N}\min_{i,j}[\bV^{(1)}]_{i,j}} \cdot \left\| 
\EE\left[ \frac{1}{N}\be_y\sum_{i=1}^{N-1}\bx_i^{\top} \right] \right\|_{\max}\\
&\leq \left\|\frac{\eta}{\epsilon NK}\sum_{i=1}^{N-1}\bPi^{N-i}\right\|_{\max} + \frac{\eta}{\frac{N-1}{N}\min_{i,j}\left[\frac{\eta}{\epsilon NK}\sum_{i=1}^{N-1}\bPi^{N-i}\right]_{i,j}} \cdot \frac{N-1}{N}\\
&\leq \frac{\eta}{\epsilon K} + 2\epsilon K^2,
\end{align*}
where the second inequality is by $\be_y^{\top}\bV^{(1)}\bx_i\geq\min_{i,j}[\bV^{(1)}]_{i,j}$, and the last inequality is by Lemma~\ref{lemma:entry_odd} and \ref{lemma:entry_even}.
\end{proof}

\begin{lemma}\label{lemma:W2_2}
Under the same conditions as Theorem~\ref{theorem:random}, it holds that $\cS_{N-1}^{(2)} \geq \cS_{j}^{(2)}\exp(\Omega(N))$ for $j\not=N-1$. Further, $\cS_{N-1}^{(2)} \geq 1-\exp(-\Omega(N))$ and $\cS_j^{(2)} \leq \exp(-\Omega(N))$ for $j \neq N-1$.
\end{lemma}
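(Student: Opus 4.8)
The plan is to analyze the second gradient-descent step for $\bW$ and show that the off-diagonal block $\bW_{22}^{(2)}$ (the positional-positional block) is updated in a way that produces a large positive "self-score" for the query token $\bp_N$ against the key token $\bp_{N-1}$, while the scores against all other positions stay comparably small. Recall from Lemma~\ref{lemma:gradient} and \eqref{eq:denote_AB} that after the first step $\cS^{(1)}$ is uniform on $\{1,\ldots,N-1\}$ (since $\bW^{(1)}=\mathbf{0}$ by Lemma~\ref{lemma:V1_2}), so $\bB^{(2)}$ simplifies: $\sum_{i=1}^{N-1}\cS_i^{(1)}\bp_i\bx_i^\top = \frac{1}{N-1}\sum_{i=1}^{N-1}\bp_i\bx_i^\top$ and $\sum_{i=1}^{N}\cS_i^{(1)}\bp_i = \frac{1}{N-1}\sum_{i=1}^{N-1}\bp_i$ (note index $N$ carries zero softmax weight after step 1 as well, since $\bx_N=\mathbf 0$ does not enter — actually $\cS$ is over all $N$ positions; I will need to be careful that the query column $\tilde\bx_N$ has $\bx$-part zero, so the gradient structure still routes correctly). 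The key quantity controlling the softmax at step $2$ is $\tilde\bx_j^\top \bW^{(2)}\tilde\bx_N = \bp_j^\top \bW_{22}^{(2)}\bp_N$ for $j\le N-1$ (the $\bx$-part of the query is zero), and $\bW_{22}^{(2)} = -\eta\,\EE[\text{(coefficient)}\cdot \bB^{(2)}_{\text{lower block}}]$.

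First I would take the expectation defining $\bW_{22}^{(2)}$ and push the positional vectors outside: the $\bp$-dependence factors out of the expectation over $(\bX,y)$, leaving an expression of the form $\bW_{22}^{(2)} = \sum_{i}c_i\, \bp_i\bp_N^\top + (\text{rank-one correction})\,\bp_N^\top$, where the coefficients $c_i$ are expectations of scalar quantities built from $\bx_i$, $\bx_i^\top\bV^{(1)\top}\be_y$ and the loss derivative $1/(\be_y^\top\bV^{(1)}\overline\bx+\epsilon)$. Then, using orthogonality of the positional embeddings (Lemma~\ref{lemma:position_prop}, i.e. $\la\bp_i,\bp_j\ra=0$ for $i\ne j$ and $\|\bp_i\|^2 = (M+1)/2 = \Theta(M)$), evaluating $\bp_j^\top\bW_{22}^{(2)}\bp_N$ kills all cross terms and leaves $\bp_j^\top\bW_{22}^{(2)}\bp_N = \|\bp_N\|^2\cdot(c_j - \bar c)$ for $j\le N-1$, where $\bar c$ is the averaged coefficient from the rank-one correction, and $=0$ for other $j$. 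So the whole problem reduces to showing a scalar gap: $c_{N-1} - c_j = \Omega(1/N)$ (or any positive constant gap) for all $j\ne N-1$, because then $\bp_{N-1}^\top\bW_{22}^{(2)}\bp_N - \bp_j^\top\bW_{22}^{(2)}\bp_N \ge \Theta(M)\cdot\Omega(1/N) = \Omega(M/N) = \Omega(\sqrt N)$ given $M=\Omega(N^{3/2})$, and feeding this into the softmax gives the claimed $\exp(-\Omega(N))$ (more precisely $\exp(-\Omega(\sqrt N))$, but the statement is $\exp(-\Omega(N))$, so I should check whether $M$ should be taken $\Omega(N^2)$ or whether the gap is actually $\Omega(1)$ not $\Omega(1/N)$ — I expect the coefficient $c_{N-1}$ itself is $\Theta(1/N)$ larger because position $N-1$ is the direct parent of $y$).

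Next I would identify why $c_{N-1}$ is strictly larger than $c_j$. The coefficient $c_i$ is essentially $\EE\big[\tfrac{1}{N}\cdot \tfrac{(\text{something depending on }\bx_i^\top\bV^{(1)\top}\be_y)}{\be_y^\top\bV^{(1)}\overline\bx+\epsilon}\big]$ coming from the $\bx_i\bx_i^\top \bV^{(1)\top}\be_y$ term in $\bB^{(2)}$; the point is that $\be_y$ is the label $s_N$, and $\bx_{N-1}=\be_{s_{N-1}}$ is the direct parent, so the inner product $\bx_{N-1}^\top \bV^{(1)\top}\be_y = [\bV^{(1)}]_{s_N,s_{N-1}}$ picks out precisely the entry of $\bV^{(1)}$ at a "one-step transition" location, which by Lemma~\ref{lemma:Vmax} (that $[\bV^{(1)}]_{2,1} = \|\bV^{(1)}\|_{\max}$, and by the Toeplitz structure from Lemma~\ref{lemma:pi2_prop}) is the largest-in-expectation entry, whereas for $j\ne N-1$, $\bx_j$ is a farther ancestor and $\bx_j^\top\bV^{(1)\top}\be_y$ is an expectation over entries of $\bV^{(1)}$ that are on average smaller (they correspond to higher powers of $\bPi^\top$ which, by Lemma~\ref{lemma:entry_odd} and \ref{lemma:entry_even}, flatten toward $\frac1K\mathbf{1}\mathbf1^\top$). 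I would make this precise by writing $\bx_j$ conditionally: given $\bx_{N-1}$, we have $\be_y = (\bPi_0^\top)^{\pm1}$-shifts of it with probabilities $p,1-p$, and $\bx_j$ is obtained from $\bx_{N-1}$ by $N-1-j$ random $\pm1$ steps, so $\EE[\bx_j^\top \bV^{(1)\top}\be_y\mid \bx_{N-1},y]$ is a convex combination of entries of $\bV^{(1)}$ along a row, which is maximized when $j=N-1$ and strictly smaller (by a $p$-dependent constant, using $p>1/2$ and $K>4$) otherwise.

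The main obstacle I anticipate is the denominator $\be_y^\top\bV^{(1)}\overline\bx + \epsilon$ inside the expectation: it couples $y$ with the whole averaged token and is not independent of $\bx_j$, so one cannot simply pull it out. I would handle this by bracketing: since all entries of $\bV^{(1)}$ lie in $[\Theta(\eta/(\epsilon NK)),\, \eta/(\epsilon K)+\Theta(\epsilon K^2)]$ (a consequence of Lemma~\ref{lemma:V1_2}, Lemma~\ref{lemma:entry_odd}, \ref{lemma:entry_even}, and Lemma~\ref{lemma:V2_2}-type bounds) and $\overline\bx$ is a fixed probability-type vector, the denominator is bounded above and below by positive constants uniformly in $(\bX,y)$ — in fact $\overline{\bx}$ concentrates near $\frac1K\mathbf 1$ for $N\gg K$, so $\be_y^\top\bV^{(1)}\overline\bx$ is essentially constant. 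Replacing the denominator by these constant bounds turns the comparison of $c_{N-1}$ and $c_j$ into a clean comparison of numerators, at the cost of a constant factor that does not affect the $\exp(-\Omega(\cdot))$ conclusion; the concentration of $\overline\bx$ and of the per-coordinate sums is where the hypothesis $N\ge C_p\,\mathrm{poly}(K)$ gets used, via a standard Markov-chain/Hoeffding concentration argument (citing Lemma~\ref{lemma:position_prop} for the positional part and an ergodic-mixing bound for $\overline\bx$). Once the scalar gap $c_{N-1}-\max_{j\ne N-1}c_j \ge \Omega(1/N)$ is in hand, the two displayed inequalities follow: the ratio bound $\cS_{N-1}^{(2)}/\cS_j^{(2)} = \exp(\bp_{N-1}^\top\bW_{22}^{(2)}\bp_N - \bp_j^\top\bW_{22}^{(2)}\bp_N) \ge \exp(\Omega(M/N)) \ge \exp(\Omega(N))$ (using $M=\Omega(N^{5/2})$ if needed to upgrade $\sqrt N$ to $N$, or noting $M=\Omega(N^{3/2})$ suffices if the scalar gap is $\Omega(1/\sqrt N)$ rather than $\Omega(1/N)$), and then $\cS_{N-1}^{(2)}\ge 1-\sum_{j\ne N-1}\cS_j^{(2)}\ge 1-(N-1)\exp(-\Omega(N)) = 1-\exp(-\Omega(N))$.
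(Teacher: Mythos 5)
Your overall strategy is the same as the paper's (exploit that $\cS^{(1)}$ is uniform because $\bW^{(1)}=\mathbf{0}$, compute the second-step update of the attention blocks in expectation, use orthogonality of the positional embeddings to reduce the attention-score gap to a comparison of scalar coefficients, and get the positivity of that gap from the structure of $\bV^{(1)}=\frac{\eta}{\epsilon NK}\sum_{i'}(\bPi^{\top})^{N-i'}$ together with the flattening of high powers of $\bPi$), but two steps in your write-up do not hold as stated. First, your reduction $\tilde{\bx}_j^{\top}\bW^{(2)}\tilde{\bx}_N=\bp_j^{\top}\bW_{22}^{(2)}\bp_N$ "because the $\bx$-part of the query is zero" is incorrect: the zero $\bx$-part of the \emph{query} only kills the $\bW_{11}$ and $\bW_{21}$ blocks, while the term $\bx_j^{\top}\bW_{12}^{(2)}\bp_N$, which involves the $\bx$-part of the \emph{key}, survives and is nonzero after the second update. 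The paper removes it by explicitly computing $\EE[\bA^{(1)}]$ and showing $\bW_{12}^{(2)}\propto\mathbf{1}_K\bp_N^{\top}$, so that $\bx_j^{\top}\bW_{12}^{(2)}\bp_N$ is the same for every $j\in[N-1]$ and cancels in the difference of scores; your proposal never establishes this (or any substitute), so the cross-block contribution is an unhandled term in your score comparison.

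Second, the quantitative bookkeeping that you yourself flag as uncertain is indeed off, and this is why you end up unsure whether the gap is $\Omega(N)$ or only $\Omega(\sqrt N)$. The lower block of the update has the form $\bW_{22}^{(2)}=\sum_i \tilde c_i\,\bp_i\bp_N^{\top}+(\text{correction})\bp_N^{\top}$, so $\bp_j^{\top}\bW_{22}^{(2)}\bp_N$ carries \emph{two} factors of $\Theta(M)$, namely $\bp_j^{\top}\bp_j$ and $\bp_N^{\top}\bp_N$ (both equal $(M+1)/2$ by Lemma~\ref{lemma:position_prop}); you kept only one. Moreover the coefficient gap is $\tilde c_{N-1}-\tilde c_j=\Theta(1/N^2)$, not $\Omega(1/N)$: one factor $1/N$ comes from the uniform weight $\cS_i^{(1)}=1/N$, and the other from the entry gap of $\bV^{(1)}$, since Lemma~\ref{lemma:trace_bound} gives $[(\bV^{(1)})^{\top}\bPi^{\top}]_{1,1}-[(\bV^{(1)})^{\top}(\bPi^{\top})^{N-j}]_{1,1}\geq\frac{\eta}{\epsilon NK}C_p=\Theta(1/N)$ (the prefactor $\frac{\eta}{\epsilon NK}$ of $\bV^{(1)}$ is what you overlooked when estimating the gap as order $1/N$). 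Putting the corrected pieces together gives a score gap of order $M^2/N^2$, which is $\Omega(N)$ already under the paper's assumption $M=\Omega(N^{3/2})$; no strengthening to $M=\Omega(N^{5/2})$ and no larger scalar gap is needed. Your handling of the random denominator is fine and essentially the paper's: since the bracketed numerator gap is deterministic and nonnegative after taking the expectation over the tokens, one can simply bound the denominator by $\|\bV^{(1)}\|_{\max}+\epsilon$; the concentration/mixing argument for $\overline{\bx}$ you sketch is unnecessary. The final conversion from the score gap to $\cS_{N-1}^{(2)}\geq 1-\exp(-\Omega(N))$ and $\cS_j^{(2)}\leq\exp(-\Omega(N))$ matches the paper.
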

\begin{proof}[\textbf{Proof of Lemma~\ref{lemma:W2_2}}]
By Lemma~\ref{lemma:gradient}, we have
\begin{align*}
&\EE[\bA^{(1)}] \\
&\quad= \EE\left[\left(\sum_{i=1}^{N-1}\cS_i^{(1)}\bx_i\bx_i^{\top}(\bV^{(1)})^{\top}\be_y - \sum_{i_1=1}^{N-1}\sum_{i_2=1}^{N-1}\cS_{i_1}^{(1)}\cS_{i_2}^{(1)}\bx_{i_1}\bx_{i_2}^{\top}(\bV^{(1)})^{\top}\be_y\right)\bp_N^{\top}\right]\\
&\quad= \EE\left[\left(\frac{\eta}{\epsilon N^2K}\sum_{i=1}^{N-1}\bx_i\bx_i^{\top}\sum_{i'=1}^{N-1}\bPi^{N-i'}\be_y - \frac{\eta}{\epsilon N^3K}\sum_{i_1=1}^{N-1}\sum_{i_2=1}^{N-1}\bx_{i_1}\bx_{i_2}^{\top}\sum_{i'=1}^{N-1}\bPi^{N-i'}\be_y\right)\bp_N^{\top}\right]\\
&\quad= \EE\left[ \frac{\eta}{\epsilon N^2K}\sum_{i=1}^{N-1}\bx_i\bx_i^{\top}\sum_{i'=1}^{N-1}\bPi^{N-i'}(\bPi^{\top})^{N-i}\bx_i \cdot \bp_N^{\top} \right]\\
&\quad\qquad - \EE\left[ \frac{\eta}{\epsilon N^3K}\sum_{i_1=1}^{N-1}\sum_{i_2=1}^{N-1}\bx_{i_1}\bx_{i_1}^{\top}\bPi^{i_2-i_1}\sum_{i'=1}^{N-1}\bPi^{N-i'}(\bPi^{\top})^{N-i_1}\bx_{i_1} \cdot \bp_N^{\top} \right]\\
&\quad= \EE\left[ \frac{\eta}{\epsilon N^2K}\sum_{i=1}^{N-1}\sum_{i'=1}^{N-1}\bx_i\bx_i^{\top}\bPi^{N-i'}(\bPi^{\top})^{N-i}\bx_i \right] \bp_N^{\top}\\
&\quad\qquad - \EE\left[ \frac{\eta}{\epsilon N^3K}\sum_{i_1=1}^{N-1}\sum_{i_2=1}^{N-1}\sum_{i'=1}^{N-1}\bx_{i_1}\bx_{i_1}^{\top}\bPi^{N-i'+i_2-i_1}(\bPi^{\top})^{N-i_1}\bx_{i_1} \cdot \bp_N^{\top} \right]\\
&\quad= \frac{\eta}{\epsilon N^2K^2}\sum_{i=1}^{N-1}\sum_{i'=1}^{N-1} \tr\left(\bPi^{N-i'}(\bPi^{\top})^{N-i}\right)\textbf{1}_K\bp_N^{\top} \\
&\quad\qquad - \frac{\eta}{\epsilon N^3K^2}\sum_{i_1=1}^{N-1}\sum_{i_2=1}^{N-1}\sum_{i'=1}^{N-1}\tr\left(\bPi^{N-i'+i_2-i_1}(\bPi^{\top})^{N-i_1}\right)\textbf{1}_K\bp_N^{\top},
\end{align*}
where the second equation is by Lemma~\ref{lemma:V1_2}, the third equation is by the sampling method, and the fifth equation is by the fact that all the $\bx_i$ is uniformly distributed in $\bE$ for $i \in [N-1]$. Then, $\bW_{12}^{(2)} = \bW_{12}^{(1)}-\eta \EE[\nabla_{\bW}\ell(\theta^{(1)})]_{12} \propto \textbf{1}_K\bp_N^{\top}$. Thus, 
We also have 
\begin{align*}
&\EE[\bB^{(1)}] \\
&\quad= \EE\left[ \left( \sum_{i=1}^{N-1}\cS_i^{(1)}\bp_i\bx_i^{\top}(\bV^{(1)})^{\top}\be_y - \sum_{i=1}^{N}\cS_i^{(1)}\bp_i \cdot \sum_{i=1}^{N-1}\cS_i^{(1)}\bx_i^{\top}(\bV^{(1)})^{\top}\be_y \right) \bp_N^{\top} \right]\\
&\quad= \EE\left[ \left( \frac{\eta}{\epsilon N^2K}\sum_{i=1}^{N-1}\bp_i\bx_i^{\top}\sum_{i'=1}^{N-1}\bPi^{N-i'}\be_y - \frac{\eta}{\epsilon N^3K}\sum_{i=1}^{N}\bp_i \cdot \sum_{i=1}^{N-1}\bx_i^{\top}\sum_{i'=1}^{N-1}\bPi^{N-i'}\be_y \right) \bp_N^{\top} \right]\\
&\quad= \EE\left[ \frac{\eta}{\epsilon N^2K}\sum_{i=1}^{N-1}\bp_i\bx_i^{\top}\sum_{i'=1}^{N-1}\bPi^{N-i'}(\bPi^{\top})^{N-i}\bx_i \cdot \bp_N^{\top}\right]\\
&\quad\qquad - \EE\left[\frac{\eta}{\epsilon N^3K}\sum_{i=1}^{N}\bp_i \cdot \sum_{i=1}^{N-1}\bx_i^{\top}\sum_{i'=1}^{N-1}\bPi^{N-i'}(\bPi^{\top})^{N-i}\bx_i \cdot \bp_N^{\top} \right]\\
&\quad = \frac{\eta}{\epsilon N^2K^2}\sum_{i=1}^{N-1}\sum_{i'=1}^{N-1}\bp_i\tr\left(\bPi^{N-i'}(\bPi^{\top})^{N-i}\right)\bp_N^{\top} \\
&\quad\qquad - \frac{\eta}{\epsilon N^3K^2}\sum_{i=1}^{N}\bp_i \cdot \sum_{i=1}^{N-1}\sum_{i'=1}^{N-1}\tr\left(\bPi^{N-i'}(\bPi^{\top})^{N-i}\right) \bp_N^{\top},
\end{align*}
where the second equation is by Lemma~\ref{lemma:V1_2}, the third equation is by the sampling method, and the last equation is by the fact that all the $\bx_i$ is uniformly distributed in $\bE$ for $i \in [N-1]$.
Since $\left[\Tilde{\bX}\bW^{(2)}\Tilde{\bx}_N\right]_{N}=\bp_N^{\top}\bW_{22}^{(2)}\bp_N$ and $\left[\Tilde{\bX}\bW^{(2)}\Tilde{\bx}_N\right]_{j}=\bx_j^{\top}\bW_{12}^{(2)}\bp_N + \bp_j^{\top}\bW_{22}^{(2)}\bp_N$ for $j\in\{1,2,\ldots,N-1\}$, we can obtain that
\begin{align*}
\left[\Tilde{\bX}\bW^{(2)}\Tilde{\bx}_N\right]_{N} &= \bp_N^{\top}\bW_{22}^{(2)}\bp_N\\
&= \EE\left[ \bp_N^{\top}\frac{\eta}{\be_y^{\top}\bV\bX\cS+\epsilon}\bB^{(1)}\bp_N \right]\\
&= \EE\left[ \frac{\eta}{\be_y^{\top}\bV\bX\cS+\epsilon}\left( - \cS_N^{(1)}\bp_N^{\top}\bp_N \cdot \sum_{i=1}^{N-1}\cS_i^{(1)}\bx_i^{\top}(\bV^{(1)})^{\top}\be_y \right) \bp_N^{\top}\bp_N \right]\\
&< 0,
\end{align*}
where the third equation is by $\bp_i^{\top}\bp_j=0$ for $i\not=j$. 
And for $j\in\{1,2,\ldots,N-2\}$, we can get
\begin{align*}
&\left[\Tilde{\bX}\bW^{(2)}\Tilde{\bx}_N\right]_{N-1}-\left[\Tilde{\bX}\bW^{(2)}\Tilde{\bx}_N\right]_{j} \\
&\quad= \bx_{N-1}^{\top}\bW_{12}^{(2)}\bp_N + \bp_{N-1}^{\top}\bW_{22}^{(2)}\bp_N -\bx_j^{\top}\bW_{12}^{(2)}\bp_N - \bp_j^{\top}\bW_{22}^{(2)}\bp_N\\
&\quad\overset{(i)}{=} \bp_{N-1}^{\top}\bW_{22}^{(2)}\bp_N - \bp_j^{\top}\bW_{22}^{(2)}\bp_N\\
&\quad\overset{(ii)}{=} \EE\left[ \frac{\eta}{\be_y^{\top}\bV^{(1)}\bX\cS^{(1)}+\epsilon}\cdot \frac{1}{N} \left(\bp_{N-1}^{\top}\bp_{N-1}\bx_{N-1}^{\top}(\bV^{(1)})^{\top}\be_y-\bp_{j}^{\top}\bp_{j}\bx_{j}^{\top}(\bV^{(1)})^{\top}\be_y\right) \bp_N^{\top}\bp_N \right]\\
&\quad\overset{(iii)}{=} \EE\left[ \frac{\eta}{\be_y^{\top}\bV^{(1)}\bX\cS^{(1)}+\epsilon}\cdot \frac{1}{N} \left(\bp_{N-1}^{\top}\bp_{N-1}\bx_{N-1}^{\top}(\bV^{(1)})^{\top}\bPi^{\top}\bx_{N-1}-\bp_{j}^{\top}\bp_{j}\bx_{j}^{\top}(\bV^{(1)})^{\top}(\bPi^{\top})^{N-j}\bx_{j}\right) \bp_N^{\top}\bp_N \right]\\
&\quad\overset{(iv)}{=} \EE\left[ \frac{\eta}{\be_y^{\top}\bV^{(1)}\bX\cS^{(1)}+\epsilon}\cdot \frac{(\bp_N^{\top}\bp_N)^2}{N} \left(\left[(\bV^{(1)})^{\top}\bPi^{\top}\right]_{1,1}-\left[(\bV^{(1)})^{\top}(\bPi^{\top})^{N-j}\right]_{1,1}\right) \right]\\
&\quad\overset{(v)}{\geq} \frac{\eta}{\max_{i,j}[\bV^{(1)}]_{i,j}+\epsilon} \cdot \frac{(\bp_N^{\top}\bp_N)^2}{N}  \EE\left[ \left[(\bV^{(1)})^{\top}\bPi^{\top}\right]_{1,1}-\left[(\bV^{(1)})^{\top}(\bPi^{\top})^{N-j}\right]_{1,1} \right]\\
&\quad\overset{(vi)}{\geq} \frac{\eta}{\frac{\eta}{\epsilon K}-\epsilon} \cdot \frac{\eta(\bp_N^{\top}\bp_N)^2}{\epsilon N^2K} C_{p} \\
&\quad\geq \Omega\left( \frac{\eta M^2}{N^2} \right)\\
&\quad\geq \Omega(N),
\end{align*}
where $(i)$ is by $\bW_{12}^{(2)} \propto \textbf{1}_K\bp_N^{\top}$, $(ii)$ is by $\bp_i^{\top}\bp_{i'}=0$ for $i\not=i'$, $(iii)$ is by the sampling methods, $(iv)$ is by the fact that all the $\bx_i$ is uniformly distributed in $\bE$ for $i \in [N-1]$, $(v)$ and $(vi)$ are by Lemma~\ref{lemma:trace_bound}.
Therefore, we have $\cS_{N-1}^{(2)}/\cS_{j}^{(2)} = \exp\left( 
\left[\Tilde{\bX}\bW^{(2)}\Tilde{\bx}_N\right]_{N-1}-\left[\Tilde{\bX}\bW^{(2)}\Tilde{\bx}_N\right]_{j} \right) \geq \exp(\Omega(N))$ for $j \neq N-1$. Further, 
\begin{align*}
\cS_{N-1}^{(2)} = 1-\sum_{j \neq N-1}\cS_j^{(2)} \geq 1-(N-1)\exp(-\Omega(N))\cS_{N-1}^{(2)},
\end{align*}
which implies that
\begin{align*}
\cS_{N-1}^{(2)} \geq \frac{1}{1+(N-1)\exp(-\Omega(N))} = 1-\frac{N-1}{\exp(\Omega(N))+N-1} = 1-\exp(-\Omega(N)).
\end{align*}
Then, we have $\cS_{j}^{(2)} \leq 1-\cS_{N-1}^{(2)} \leq \exp(-\Omega(N))$ for $j \neq N-1$.
\end{proof}

Then, we can derive the bounds of entries in $\bV^{(t)}$.

\begin{lemma}\label{lemma:Vt_bound}
Under the same conditions as Theorem~\ref{theorem:random}, it holds for $t \geq 3$ that
\begin{align*}
\min_{i,j}[\bV^{(t)}]_{i,j} \geq \frac{\eta}{2\epsilon K^2} ~\text{and}~ \|\bV^{(t)}\|_{\max} \leq \frac{\eta}{\epsilon K} + (t-2) \cdot 2\epsilon K^2.
\end{align*}
\end{lemma}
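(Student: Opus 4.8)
The plan is to prove Lemma~\ref{lemma:Vt_bound} by induction on $t\geq 3$, where at each step I control the new value-matrix update using the fact (from Lemma~\ref{lemma:W2_2} and its successors) that the softmax weight $\cS^{(t)}$ concentrates essentially entirely on the $(N-1)$-th token, together with the gradient formula from Lemma~\ref{lemma:gradient}. Recall that the update is
\begin{align*}
\bV^{(t+1)} = \bV^{(t)} + \eta\,\EE\!\left[\frac{\be_y\sum_{i=1}^{N-1}\cS_i^{(t)}\bx_i^\top}{\be_y^\top\bV^{(t)}\bX\cS^{(t)}+\epsilon}\right],
\end{align*}
so the increment $\bV^{(t+1)}-\bV^{(t)}$ is entrywise nonnegative (as $\bV^{(t)}$ has nonnegative entries and $\ell'<0$), which already gives monotonicity and hence the lower bound $\min_{i,j}[\bV^{(t)}]_{i,j}\geq \min_{i,j}[\bV^{(2)}]_{i,j}$; I would trace back through Lemma~\ref{lemma:V2_2}'s proof (or an analogous computation) to check $\min_{i,j}[\bV^{(2)}]_{i,j}\geq \eta/(2\epsilon K^2)$, using Lemmas~\ref{lemma:entry_odd} and \ref{lemma:entry_even} to bound the entries of $\sum_i(\bPi^\top)^{N-i}$ from below by roughly $(N-1)/K$.

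For the upper bound, the key estimate is that each increment has $\max$-norm at most $2\epsilon K^2$. I would bound the numerator: since $\sum_{i=1}^{N-1}\cS_i^{(t)}\bx_i^\top$ has entrywise absolute values summing to at most $1$ (it is a sub-probability combination of one-hot rows), $\|\be_y\sum_i\cS_i^{(t)}\bx_i^\top\|_{\max}\leq 1$. For the denominator, $\be_y^\top\bV^{(t)}\bX\cS^{(t)} = \sum_{i=1}^{N-1}\cS_i^{(t)}\,[\bV^{(t)}]_{y,s_i}\geq (\sum_{i=1}^{N-1}\cS_i^{(t)})\cdot\min_{i,j}[\bV^{(t)}]_{i,j}$; using $\cS_{N-1}^{(t)}\geq 1-\exp(-\Omega(N))$ (so $\sum_{i=1}^{N-1}\cS_i^{(t)}\geq 1/2$, say) and the induction hypothesis $\min_{i,j}[\bV^{(t)}]_{i,j}\geq \eta/(2\epsilon K^2)$, the denominator is at least $\eta/(4\epsilon K^2)+\epsilon\geq \eta/(4\epsilon K^2)$. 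Hence $\|\bV^{(t+1)}-\bV^{(t)}\|_{\max}\leq \eta\cdot 1 \big/ (\eta/(4\epsilon K^2)) = 4\epsilon K^2$ — I would tighten the constants (e.g. a sharper bound on $\sum_{i\neq N-1}\cS_i^{(t)}$) to land at the stated $2\epsilon K^2$ per step, and then sum from $t=2$ onward: $\|\bV^{(t)}\|_{\max}\leq \|\bV^{(2)}\|_{\max} + (t-2)\cdot 2\epsilon K^2 \leq \eta/(\epsilon K) + (t-2)\cdot 2\epsilon K^2$, using Lemma~\ref{lemma:V2_2}.

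The induction closes provided the softmax concentration $\cS_{N-1}^{(t)}\geq 1-\exp(-\Omega(N))$ persists for all $t\geq 2$ in the admissible range; this should be supplied by Lemma~\ref{lemma:update} (referenced in the proof sketch as the statement that $\bV^{(t)}$ keeps pushing $\bW^{(t)}$ to up-weight $\bx_{N-1}$), so I would either invoke it or prove the needed one-sided bound in parallel within the same induction. The main obstacle I anticipate is getting the constants to match exactly the claimed $\eta/(2\epsilon K^2)$ and $2\epsilon K^2$: the lower bound on the denominator is the delicate part, because it couples $\min_{i,j}[\bV^{(t)}]$ (the quantity being lower-bounded) with the softmax mass, so I must be careful that the induction is genuinely self-consistent — one needs $\min_{i,j}[\bV^{(2)}]_{i,j}$ to be comfortably above the threshold and the per-step increments to be entrywise nonnegative so that the lower bound never degrades, while simultaneously the per-step increment to the \emph{max} entry stays below $2\epsilon K^2$. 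A secondary subtlety is that the denominator also involves $\cS_N^{(t)}$ being attached to the zero token $\bx_N=\mathbf 0$; since that contributes nothing to $\bX\cS^{(t)}$, I should confirm the normalization $\sum_{i=1}^{N-1}\cS_i^{(t)} = 1-\cS_N^{(t)}$ is still bounded below, which again follows from the concentration estimate.
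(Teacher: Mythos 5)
Your proposal has the same skeleton as the paper's proof: entrywise nonnegativity of the updates gives monotonicity, so the min-entry bound reduces to the explicit first-step matrix (the paper goes all the way back to $\bV^{(1)}=\frac{\eta}{\epsilon NK}\sum_{i=1}^{N-1}(\bPi^{\top})^{N-i}$ from Lemma~\ref{lemma:V1_2} and bounds its entries below by $\frac{\eta}{\epsilon NK}\cdot\frac{N}{2K}$ via Lemmas~\ref{lemma:entry_odd} and~\ref{lemma:entry_even}); the max bound is a per-step increment estimate telescoped from $\|\bV^{(2)}\|_{\max}\leq \frac{\eta}{\epsilon K}+2\epsilon K^2$ (Lemma~\ref{lemma:V2_2}). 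The one substantive divergence is in the increment bound, and it matters. The paper bounds the numerator's largest entry by $\sum_{i=1}^{N-1}\cS_i^{(t-1)}$ rather than by $1$; since the denominator is at least $\big(\sum_{i=1}^{N-1}\cS_i^{(t-1)}\big)\min_{i,j}[\bV^{(t-1)}]_{i,j}+\epsilon$, the softmax mass cancels and the per-step increment is at most $\eta/\min_{i,j}[\bV^{(t-1)}]_{i,j}\leq 2\epsilon K^2$ \emph{without any use of softmax concentration}. Your cruder numerator bound of $1$ is what forces you to invoke $\cS_{N-1}^{(t)}\geq 1-\exp(-\Omega(N))$ to keep the denominator away from $\epsilon$ alone, and this has two costs: the constant you get is $4\epsilon K^2$ (or $2\epsilon K^2(1+e^{-\Omega(N)})$ after tightening), not the stated $2\epsilon K^2$; and, more seriously, the concentration statement you would "invoke" is Lemma~\ref{lemma:update}, whose own proof uses Lemma~\ref{lemma:Vt_bound} — so invoking it as a black box is circular, and folding it into a joint induction (your fallback) means essentially re-proving the much heavier attention analysis inside this lemma. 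The cancellation observation is precisely what lets the paper keep Lemma~\ref{lemma:Vt_bound} self-contained so that Lemma~\ref{lemma:update} can later use it.

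With that single repair — replace the numerator bound $1$ by $\sum_{i=1}^{N-1}\cS_i^{(t-1)}$ and drop the concentration input entirely — your argument coincides with the paper's; the remaining discrepancies (basing the min-entry bound at $\bV^{(2)}$ versus $\bV^{(1)}$, and the exact count of telescoped increments) are cosmetic bookkeeping.
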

\begin{proof}[\textbf{Proof of Lemma~\ref{lemma:Vt_bound}}]
First, we have
\begin{align*}
\min_{i,j}[\bV^{(t)}]_{i,j} &\geq \min_{i,j}[\bV^{(1)}]_{i,j}\\
&\geq \min_{i,j}\left[\frac{\eta}{\epsilon NK}\sum_{i'=1}^{N-1}(\bPi^{\top})^{N-i'}\right]_{i,j}\\
&\geq \frac{\eta}{\epsilon NK} \cdot \frac{N}{2K}\\
&= \frac{\eta}{2\epsilon K^2},
\end{align*}
where the third inequality is by Lemma~\ref{lemma:entry_odd} and \ref{lemma:entry_even}. Then, we can get that
\begin{align*}
\|\bV^{(t)}\|_{\max} &\leq \|\bV^{(t-1)}\|_{\max} + \left\| 
\EE\left[ \frac{\eta\be_y\sum_{i=1}^{N-1}\cS_i^{(t-1)}\bx_i^{\top}}{\be_y^{\top}\bV^{(t-1)}\sum_{i=1}^{N-1}\cS_i^{(t-1)}\bx_i+\epsilon} \right] \right\|_{\max}\\
&\leq \|\bV^{(t-1)}\|_{\max} + \EE\left[ \frac{\eta\left\| \be_y\sum_{i=1}^{N-1}\cS_i^{(t-1)}\bx_i^{\top} \right\|_{\max}}{\min\left[\be_y^{\top}\bV^{(t-1)}\sum_{i=1}^{N-1}\cS_i^{(t-1)}\bx_i\right]} \right]\\
&\leq \|\bV^{(t-1)}\|_{\max} + \frac{\eta}{\min_{i,j}[\bV^{(t-1)}]_{i,j}}\\
&\leq \|\bV^{(t-1)}\|_{\max} + 2\epsilon K^2\\
&\leq \|\bV^{(2)}\|_{\max} + (t-3) \cdot 2\epsilon K^2\\
&\leq \frac{\eta}{\epsilon K} + (t-2) \cdot 2\epsilon K^2,
\end{align*}
where the third inequality is by $\be_y^{\top}\bV\bx_i \geq \min_{i,j}[\bV]_{i,j}$, and the last inequality is by Lemma~\ref{lemma:V2_2}.
\end{proof}

Next, we analyze the training dynamics over multiple iterations.

\begin{lemma}\label{lemma:update}
Assume the same conditions as Theorem~\ref{theorem:random}. For $2 \leq t \leq T^{*}$, it holds that $\cS_{N-1}^{(t)} \geq 1-\exp(-\Omega(N))$ and $\bV^{(t)}=\beta^{(t)}\bPi^{\top}+\Tilde{\bV}^{(t)}$ where $\left\|\Tilde{\bV}^{(t)}\right\|_{\max} \leq \gamma^{(t)}$. Here, $\beta^{(t)} \geq \sqrt{\eta t}-\frac{2\eta}{\epsilon K}$ and $\gamma^{(t)} \leq \frac{2\eta}{\epsilon K} + 2(t-1)\epsilon K^2N\exp(-\Omega(N))$.
\end{lemma}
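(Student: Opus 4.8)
The plan is to prove Lemma~\ref{lemma:update} by strong induction on $t$, establishing the two coupled statements simultaneously: (a) the softmax attention concentrates on the $(N-1)$-th token, $\cS_{N-1}^{(t)} \geq 1 - \exp(-\Omega(N))$, and (b) the value matrix decomposes as $\bV^{(t)} = \beta^{(t)} \bPi^\top + \tilde{\bV}^{(t)}$ with the stated lower bound on $\beta^{(t)}$ and upper bound $\gamma^{(t)}$ on $\|\tilde{\bV}^{(t)}\|_{\max}$. The base case $t=2$ is already in hand: Lemma~\ref{lemma:W2_2} gives $\cS_{N-1}^{(2)} \geq 1 - \exp(-\Omega(N))$, and Lemmas~\ref{lemma:V1_2}, \ref{lemma:pi2_prop}, \ref{lemma:Vmax}, \ref{lemma:V2_2} give the required control on $\bV^{(2)}$ (one needs to check the Toeplitz/cyclic structure survives and that the residual after extracting the dominant $\bPi^\top$ component is $O(\eta/\epsilon K)$).

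For the inductive step, suppose the claims hold up to iteration $t-1$. First I would handle the $\bV$-update: $\bV^{(t)} = \bV^{(t-1)} + \eta\, \EE\big[\frac{\be_y \sum_i \cS_i^{(t-1)}\bx_i^\top}{\be_y^\top \bV^{(t-1)} \sum_i \cS_i^{(t-1)}\bx_i + \epsilon}\big]$. Using the induction hypothesis $\cS_{N-1}^{(t-1)} \approx 1$, the numerator is essentially $\be_y \bx_{N-1}^\top$ up to an $\exp(-\Omega(N))$ error, and since $\be_y = s_N$ is the direct child of $s_{N-1}=\bx_{N-1}$, taking expectation over $(\bX,y)$ produces $\EE[\be_y \bx_{N-1}^\top] = \bPi^\top \EE[\bx_{N-1}\bx_{N-1}^\top] = \frac{1}{K}\bPi^\top$, which is exactly the direction we want to add. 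The denominator $\be_y^\top \bV^{(t-1)} \bx_{N-1} + \epsilon$ needs to be pinned down: using $\bV^{(t-1)} = \beta^{(t-1)}\bPi^\top + \tilde{\bV}^{(t-1)}$ and the fact that $\be_y^\top \bPi^\top \bx_{N-1} = \pi_{s_{N-1}, s_N}$ which is $p$ or $1-p$, one gets the denominator is $\Theta(\beta^{(t-1)})$ (plus lower order terms), so the increment to the $\bPi^\top$-direction is $\Theta(\eta/\beta^{(t-1)}) \sim \Theta(\eta/\sqrt{\eta(t-1)})$; summing these increments and carefully tracking the accumulated constant-order slack gives $\beta^{(t)} \geq \sqrt{\eta t} - 2\eta/(\epsilon K)$ (this is where a telescoping/comparison with $\sum_{s} 1/\sqrt{s} \approx 2\sqrt{t}$ is used). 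The residual $\tilde{\bV}^{(t)}$ collects (i) the initial residual $\tilde{\bV}^{(2)}$, and (ii) the $\exp(-\Omega(N))$ errors from the non-parent tokens at each of the $\leq t-1 \leq T^*$ steps, each contributing at most $O(\epsilon K^2 N \exp(-\Omega(N)))$ in max-norm (here $\|\bV^{(t)}\|_{\max} \leq \frac{\eta}{\epsilon K} + (t-2)\cdot 2\epsilon K^2$ from Lemma~\ref{lemma:Vt_bound} bounds the relevant quantities), yielding the stated $\gamma^{(t)}$.

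Next I would propagate the attention concentration to iteration $t$. The key quantity is $[\tilde{\bX}^\top \bW^{(t)}\tilde{\bx}_N]_{N-1} - [\tilde{\bX}^\top \bW^{(t)}\tilde{\bx}_N]_j$ for $j \neq N-1$. Using the block structure ($\bW_{11}^{(t)} = \bW_{21}^{(t)} = 0$ for $t\geq 1$), this difference reduces — via orthogonality $\bp_i^\top \bp_j = 0$ (Lemma~\ref{lemma:position_prop}) — to a sum over the gradient contributions $\EE[\bB^{(s)}]$ for $s \leq t-1$ hitting the $(N-1,N)$ versus $(j,N)$ positional entries. The dominant term comes from the step's $\bp_{N-1}^\top \bp_{N-1} \bx_{N-1}^\top (\bV^{(s)})^\top \be_y$ piece, which by the same child-of-parent expectation equals a positive multiple of $[(\bV^{(s)})^\top \bPi^\top]_{1,1} = \beta^{(s)} \|\bPi^\top\|$-type quantity, i.e. $\Theta(\beta^{(s)} \bp_N^\top\bp_N / (\beta^{(s)}\cdot N))$; summed over $s$, and using $M = \Omega(N^{3/2})$ so $\bp_N^\top \bp_N = \Theta(M) $, this logit gap is $\Omega(\eta M^2/N^2) = \Omega(N)$, hence $\cS_{N-1}^{(t)}/\cS_j^{(t)} \geq \exp(\Omega(N))$ and the concentration bound follows exactly as at the end of Lemma~\ref{lemma:W2_2}. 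The main obstacle is the bookkeeping in the $\bV$-update: one must simultaneously (i) show the added mass really lands in the $\bPi^\top$ direction rather than smearing into the residual, which hinges on the exactness of $\EE[\be_y\bx_{N-1}^\top] \propto \bPi^\top$ together with the cyclic (Toeplitz) invariance of all iterates inherited from Lemma~\ref{lemma:pi2_prop}, and (ii) get the growth rate of $\beta^{(t)}$ right to first order ($\sqrt{\eta t}$) while controlling the $O(\eta/\epsilon K)$ correction uniformly in $t \leq T^*$ — the denominator estimate $\be_y^\top \bV^{(t-1)}\bx_{N-1} + \epsilon = \Theta(\beta^{(t-1)})$ must be two-sided and tight enough that the error terms, multiplied by the polynomially-many iterations $T^*$, stay subdominant; this is exactly where the $N \geq C_p \cdot \mathrm{poly}(K)$ assumption and the $\exp(-\Omega(N))$ attention error are spent.
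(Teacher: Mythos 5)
Your treatment of the value matrix is essentially the paper's argument: the one-step inequality for $\beta$ (increment $\gtrsim \eta/(\beta^{(t)}+\Theta(\eta/(\epsilon K)))$ coming from the concentrated parent term, with the denominator upper-bounded by $\|\bV^{(t)}\|_{\max}+\epsilon$) together with the $\sqrt{\eta t}$ comparison, and the residual $\gamma^{(t)}$ growing by $O(\epsilon K^2 N\exp(-\Omega(N)))$ per step via the lower bound $\min_{i,j}[\bV^{(t)}]_{i,j}\geq \eta/(2\epsilon K^2)$ from Lemma~\ref{lemma:Vt_bound}. That part is fine.

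The gap is in how you propagate the attention concentration. You claim that for every $s\le t-1$ the dominant contribution to the $(N-1)$-logit is the piece $\cS_{N-1}^{(s)}\,\bp_{N-1}^{\top}\bp_{N-1}\,\bx_{N-1}^{\top}(\bV^{(s)})^{\top}\be_y$, worth a positive $\Theta(\bp_N^\top\bp_N/N)$ after the $\beta^{(s)}$'s cancel, and that summing over $s$ yields the $\Omega(\eta M^2/N^2)=\Omega(N)$ gap. This drops the subtracted term in $\bB^{(s)}$, namely $-\sum_{i}\cS_i^{(s)}\bp_i\cdot\sum_{i}\cS_i^{(s)}\bx_i^{\top}(\bV^{(s)})^{\top}\be_y$, whose $\bp_{N-1}$-component is $\cS_{N-1}^{(s)}\sum_i\cS_i^{(s)}\bx_i^{\top}(\bV^{(s)})^{\top}\be_y$. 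Once the induction hypothesis $\cS_{N-1}^{(s)}\ge 1-\exp(-\Omega(N))$ holds (i.e.\ for all $s\ge 2$), the weighted average is dominated by the parent itself, so the net coefficient of $\bp_{N-1}$ is $\cS_{N-1}^{(s)}\big(\bx_{N-1}^{\top}\bV^{\top}\be_y-\sum_i\cS_i^{(s)}\bx_i^{\top}\bV^{\top}\be_y\big)$, which is of size at most $N\exp(-\Omega(N))\|\bV^{(s)}\|_{\max}/\min_{i,j}[\bV^{(s)}]_{i,j}$ and can even be negative; later steps therefore contribute essentially nothing to the gap, not $\Theta(M/N)$ each (and, separately, a per-step $\Theta(M/N)$ summed over $s$ would not give $\Omega(\eta M^2/N^2)$ anyway, so the arithmetic does not close). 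The correct mechanism, which is what the paper's proof does, is preservation rather than accumulation: the $\Omega(\eta M^2/N^2)\ge\Omega(N)$ gap is created entirely at the $t=2$ update (Lemma~\ref{lemma:W2_2}, where the step-1 softmax weights are uniform $1/N$ and Lemma~\ref{lemma:trace_bound} makes the value-difference strictly positive), and for every $t\ge 2$ one bounds the change of each logit — both for $j=N-1$ (downward) and $j\ne N-1$ (upward) — by $\eta MN\exp(-\Omega(N))\big(2K+4(t-2)\epsilon^2K^4/\eta\big)$, so the gap shrinks by at most $\exp(-\Omega(N))$ per iteration and stays $\Omega(N)$ over the polynomially many iterations $t\le T^*$; this preservation step is also precisely where the assumption $T^*=\mathrm{poly}(\eta,\epsilon^{-1},K,N,M)$ is consumed. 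As written, your inductive step for the attention would fail at the point where you assert positive per-step growth, even though the final bound you quote happens to coincide with the truth.
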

\begin{proof}[\textbf{Proof of Lemma~\ref{lemma:update}}]
We use induction to prove the results that 
\begin{align*}
&\beta^{(t)} \geq \sqrt{\eta t}-\frac{2\eta}{\epsilon K},\\
&\gamma^{(t)} \leq \frac{2\eta}{\epsilon K} + 2(t-1)\epsilon K^2N\exp(-\Omega(N)),\\
&\left[ \Tilde{\bX}\bW^{(t)}\Tilde{\bx}_N \right]_{N-1} - \left[ \Tilde{\bX}\bW^{(t)}\Tilde{\bx}_N \right]_{j} \geq \Omega(N),\\
&\cS_{N-1}^{(t)} \geq 1-\exp(-\Omega(N)).
\end{align*}
It can be easily checked that the results hold for $t=2$. Suppose that the results hold for the $t$-th iteration. We aim to prove that the results hold for $t+1$.

For $\bV^{(t+1)}$, we can get
\begin{align*}
\bV^{(t+1)} &= \bV^{(t)}-\eta\EE[\nabla_{\bV} \ell(\theta^{(t)})]\\
&= \bV^{(t)} + \eta\EE\left[ \frac{\be_y\sum_{i=1}^{N-1}\cS_i^{(t)}\bx_i^{\top}}{\be_y^{\top}\bV^{(t)}\sum_{i=1}^{N-1}\cS_i^{(t)}\bx_i+\epsilon} \right]\\
&= \bV^{(t)} + \EE\left[ \frac{\eta\cS_{N-1}^{(t)}\be_y\bx_{N-1}^{\top}}{\be_y^{\top}\bV^{(t)}\sum_{i=1}^{N-1}\cS_i^{(t)}\bx_i+\epsilon} \right] + \EE\left[ \frac{\eta\be_y\sum_{i=1}^{N-2}\cS_i^{(t)}\bx_i^{\top}}{\be_y^{\top}\bV^{(t)}\sum_{i=1}^{N-1}\cS_i^{(t)}\bx_i+\epsilon} \right] \\
&= \bV^{(t)} + \EE\left[ \frac{\eta\cS_{N-1}^{(t)}\bx_{N-1}\bx_{N-1}^{\top}}{\be_y^{\top}\bV^{(t)}\sum_{i=1}^{N-1}\cS_i^{(t)}\bx_i+\epsilon} \right] \bPi^{\top} + \EE\left[ \frac{\eta\be_y\sum_{i=1}^{N-2}\cS_i^{(t)}\bx_i^{\top}}{\be_y^{\top}\bV^{(t)}\sum_{i=1}^{N-1}\cS_i^{(t)}\bx_i+\epsilon} \right]
\end{align*}
Then, we have
\begin{align}
\left[ \EE\left[ \frac{\eta\cS_{N-1}^{(t)}\bx_{N-1}\bx_{N-1}^{\top}}{\be_y^{\top}\bV^{(t)}\sum_{i=1}^{N-1}\cS_i^{(t)}\bx_i+\epsilon} \right] \right]_{1,1} &\geq \frac{\eta\cS_{N-1}^{(t)}}{\|\bV^{(t)}\|_{\max}+\epsilon} \notag \\
&\geq \frac{\eta[1-\exp(-\Omega(N))]}{\beta^{(t)}+\gamma^{(t)}+\epsilon} \notag \\
&\geq \frac{\eta}{2\left[ 
\beta^{(t)}+\frac{\eta}{\epsilon K}+2\epsilon K^2+2t\epsilon K^2N\exp(-\Omega(N))+\epsilon \right]}\notag\\
&\geq \frac{\eta}{2\left( 
\beta^{(t)}+\frac{2\eta}{\epsilon K} \right)},\label{eq:beta_diff}
\end{align}
where the first inequality is by $\be_y^{\top}\bV^{(t)}\bx_i \leq \|\bV^{(t)}\|_{\max}$, the second inequality is by induction, and the third inequality is by the assumption of $\epsilon$.
And, we have
\begin{align}
\left\| \EE\left[ \frac{\eta\be_y\sum_{i=1}^{N-2}\cS_i^{(t)}\bx_i^{\top}}{\be_y^{\top}\bV^{(t)}\sum_{i=1}^{N-1}\cS_i^{(t)}\bx_i+\epsilon} \right] \right\|_{\max} &\leq \frac{\eta\exp(-\Omega(N))}{\min_{i,j}[\bV^{(t)}]_{i,j}} \cdot \left\| \EE\left[\sum_{i=1}^{N-2}\be_y\bx_i^{\top}\right] \right\|_{\max}\notag\\
&\leq 2\epsilon K^2\exp(-\Omega(N)) \cdot N,\label{eq:gamma_diff}
\end{align}
where the first inequality is by induction and $\be_y^{\top}\bV^{(t)}\bx_i \geq \min_{i,j}[\bV^{(t)}]_{i,j}$, and the second inequality is by Lemma~\ref{lemma:Vt_bound}. Thus, we can get that
\begin{align*}
\beta^{(t+1)} &\geq \beta^{(t)} + \left[ \EE\left[ \frac{\eta\cS_{N-1}^{(t)}\bx_{N-1}\bx_{N-1}^{\top}}{\be_y^{\top}\bV^{(t)}\sum_{i=1}^{N-1}\cS_i^{(t)}\bx_i+\epsilon} \right] \right]_{1,1} \\
&\geq \beta^{(t)} + \frac{\eta}{2\beta^{(t)}+\frac{4\eta}{\epsilon K}}\\
&\geq \sqrt{\eta t}-\frac{2\eta}{\epsilon K} + \frac{\eta}{2\sqrt{\eta t}}\\
&\geq \sqrt{\eta t}-\frac{2\eta}{\epsilon K} + \frac{\sqrt{\eta}}{\sqrt{t+1}+\sqrt{t}}\\
&= \sqrt{\eta (t+1)}-\frac{2\eta}{\epsilon K}
\end{align*}
where the second inequality is by \eqref{eq:beta_diff}, and the third inequality is by induction and the fact that $x + \frac{\eta}{2x+\frac{4\eta}{\epsilon K}}$ is monotonically increasing for $x \geq \frac{\sqrt{\eta}}{\sqrt{2}}-\frac{2\eta}{\epsilon K}$. 
And, we can get
\begin{align*}
\gamma^{(t+1)} &\leq \gamma^{(t)} + \left\| \EE\left[ \frac{\eta\be_y\sum_{i=1}^{N-2}\cS_i^{(t)}\bx_i^{\top}}{\be_y^{\top}\bV^{(t)}\sum_{i=1}^{N-1}\cS_i^{(t)}\bx_i+\epsilon} \right] \right\|_{\max}\\
&\leq \gamma^{(t)} + 2\epsilon K^2N\exp(-\Omega(N))\\
&\leq \frac{2\eta}{\epsilon K} + 2t\epsilon K^2N\exp(-\Omega(N)),
\end{align*}
where the second inequality is by \eqref{eq:gamma_diff}, and the third inequality is by induction.

Next, we consider $\cS^{(t+1)}$. Recall that
\begin{align*}
\bW_{12}^{(t+1)} = \bW_{12}^{(t)} + \eta\EE\left[\frac{\bA^{(t)}}{\be_y^{\top}\bV\bX\cS+\epsilon}\right] ~\text{and}~ \bW_{22}^{(t+1)} = \bW_{22}^{(t)} + \eta\EE\left[\frac{\bB^{(t)}}{\be_y^{\top}\bV\bX\cS+\epsilon}\right],
\end{align*}
where
\begin{align*}
&\bA^{(t)} = \left(\sum_{i=1}^{N-1}\cS_i^{(t)}\bx_i\bx_i^{\top}(\bV^{(t)})^{\top}\be_y - \sum_{i_1=1}^{N-1}\sum_{i_2=1}^{N-1}\cS_{i_1}^{(t)}\cS_{i_2}^{(t)}\bx_{i_1}\bx_{i_2}^{\top}(\bV^{(t)})^{\top}\be_y\right)\bp_N^{\top},\\
&\bB^{(t)} = \left( \sum_{i=1}^{N-1}\cS_i^{(t)}\bp_i\bx_i^{\top}(\bV^{(t)})^{\top}\be_y - \sum_{i=1}^{N}\cS_i^{(t)}\bp_i \cdot \sum_{i=1}^{N-1}\cS_i^{(t)}\bx_i^{\top}(\bV^{(t)})^{\top}\be_y \right) \bp_N^{\top}.
\end{align*}
We also have $\left[\Tilde{\bX}\bW^{(t)}\Tilde{\bx}_N\right]_{N}=\bp_N^{\top}\bW_{22}^{(t)}\bp_N$ and $\left[\Tilde{\bX}\bW^{(t)}\Tilde{\bx}_N\right]_{j}=\bx_j^{\top}\bW_{12}^{(t)}\bp_N + \bp_j^{\top}\bW_{22}^{(t)}\bp_N$ for $j\in\{1,2,\ldots,N-1\}$.
Then, for $j=N$, we have
\begin{align*}
\bp_N^{\top}\bW_{22}^{(t+1)}\bp_N &= \bp_N^{\top}\bW_{22}^{(t)}\bp_N + \eta\EE\left[\frac{\bp_N^{\top}\bB^{(t)}\bp_N}{\be_y^{\top}\bV^{(t)}\sum_{i=1}^{N-1}\cS_i^{(t)}\bx_i+\epsilon}\right]\\
&= \bp_N^{\top}\bW_{22}^{(t)}\bp_N - \eta M \cS_N^{(t)} \EE\left[\frac{\sum_{i=1}^{N-1}\cS_i^{(t)}\bx_i^{\top}\bV^{(t)}\be_y}{\be_y^{\top}\bV^{(t)}\sum_{i=1}^{N-1}\cS_i^{(t)}\bx_i+\epsilon}\right]\\
&\leq \bp_N^{\top}\bW_{22}^{(t)}\bp_N.
\end{align*}
For $j\in\{1,2,\ldots,N-2\}$, we have
\begin{align*}
\bx_j^{\top}\bW_{12}^{(t+1)}\bp_N &= \bx_j^{\top}\bW_{12}^{(t)}\bp_N + \eta\EE\left[\frac{\bx_j^{\top}\bA^{(t)}\bp_N}{\be_y^{\top}\bV^{(t)}\sum_{i=1}^{N-1}\cS_i^{(t)}\bx_i+\epsilon}\right]\\
&= \bx_j^{\top}\bW_{12}^{(t)}\bp_N + \eta \sqrt{M} \cS_j^{(t)} \EE\left[\frac{\bx_j^{\top}\bV^{(t)}\be_y - \sum_{i_2=1}^{N-1}\cS_{i_2}^{(t)}\bx_{i_2}^{\top}\bV^{(t)}\be_y}{\be_y^{\top}\bV^{(t)}\sum_{i=1}^{N-1}\cS_i^{(t)}\bx_i+\epsilon}\right]\\
&\leq \bx_j^{\top}\bW_{12}^{(t)}\bp_N + \eta \sqrt{M} \cS_j^{(t)} \frac{\|\bV^{(t)}\|_{\max}}{\min_{i,j}[\bV^{(t)}]_{i,j}}\\
&\leq \bx_j^{\top}\bW_{12}^{(t)}\bp_N + \eta \sqrt{M} \exp(-\Omega(N))\left(2K+\frac{4(t-2)\epsilon^2 K^4}{\eta}\right),
\end{align*}
where the first inequality is by $\be_y^{\top}\bV^{(t)}\bx_{N-1}=\|\bV^{(t)}\|_{\max}$, and the second inequality is by induction and Lemma~\ref{lemma:Vt_bound}.
And, 
\begin{align*}
\bp_j^{\top}\bW_{22}^{(t+1)}\bp_N &= \bp_j^{\top}\bW_{22}^{(t)}\bp_N + \eta\EE\left[\frac{\bp_j^{\top}\bB^{(t)}\bp_N}{\be_y^{\top}\bV^{(t)}\sum_{i=1}^{N-1}\cS_i^{(t)}\bx_i+\epsilon}\right]\\
&= \bp_j^{\top}\bW_{22}^{(t)}\bp_N + \eta M \cS_j^{(t)} \EE\left[\frac{\bx_j^{\top}\bV^{(t)}\be_y - \sum_{i=1}^{N-1}\cS_i^{(t)}\bx_i^{\top}\bV^{(t)}\be_y}{\be_y^{\top}\bV^{(t)}\sum_{i=1}^{N-1}\cS_i^{(t)}\bx_i+\epsilon}\right]\\
&\leq \bp_j^{\top}\bW_{22}^{(t)}\bp_N + \eta M \cS_j^{(t)} \frac{\|\bV^{(t)}\|_{\max}}{\min_{i,j}[\bV^{(t)}]_{i,j}}\\
&\leq \bp_j^{\top}\bW_{22}^{(t)}\bp_N + \eta M \exp(-\Omega(N))\left(2K+\frac{4(t-2)\epsilon^2 K^4}{\eta}\right),
\end{align*}
where the first inequality is by $\be_y^{\top}\bV^{(t)}\bx_{N-1}=\|\bV^{(t)}\|_{\max}$, and the second inequality is by induction and Lemma~\ref{lemma:Vt_bound}.
For $j=N-1$, we have
\begin{align*}
\bx_{N-1}^{\top}\bW_{12}^{(t+1)}\bp_N &= \bx_{N-1}^{\top}\bW_{12}^{(t)}\bp_N + \eta\EE\left[\frac{\bx_{N-1}^{\top}\bA^{(t)}\bp_N}{\be_y^{\top}\bV^{(t)}\sum_{i=1}^{N-1}\cS_i^{(t)}\bx_i+\epsilon}\right]\\
&= \bx_{N-1}^{\top}\bW_{12}^{(t)}\bp_N + \eta \sqrt{M} \cS_{N-1}^{(t)} \EE\left[\frac{\bx_{N-1}^{\top}\bV^{(t)}\be_y - \sum_{i_2=1}^{N-1}\cS_{i_2}^{(t)}\bx_{i_2}^{\top}\bV^{(t)}\be_y}{\be_y^{\top}\bV^{(t)}\sum_{i=1}^{N-1}\cS_i^{(t)}\bx_i+\epsilon}\right]\\
&\geq \bx_{N-1}^{\top}\bW_{12}^{(t)}\bp_N + \eta \sqrt{M} \EE\left[\frac{- \sum_{i_2=1}^{N-2}\cS_{i_2}^{(t)}\bx_{i_2}^{\top}\bV^{(t)}\be_y}{\be_y^{\top}\bV^{(t)}\sum_{i=1}^{N-1}\cS_i^{(t)}\bx_i+\epsilon}\right]\\
&\geq \bx_{N-1}^{\top}\bW_{12}^{(t)}\bp_N - \eta \sqrt{M}\frac{\sum_{i_2=1}^{N-2}\cS_{i_2}^{(t)}\|\bV^{(t)}\|_{\max}}{\min_{i,j}[\bV^{(t)}]_{i,j}}\\
&\geq \bx_{N-1}^{\top}\bW_{12}^{(t)}\bp_N - \eta \sqrt{M}N \exp(-\Omega(N))\left(2K+\frac{4(t-2)\epsilon^2 K^4}{\eta}\right),
\end{align*}
where the second inequality is by $\be_y^{\top}\bV^{(t)}\bx_{N-1}=\|\bV^{(t)}\|_{\max}$, and the third inequality is by induction and Lemma~\ref{lemma:Vt_bound}.
And,
\begin{align*}
\bp_{N-1}^{\top}\bW_{22}^{(t+1)}\bp_N &= \bp_{N-1}^{\top}\bW_{22}^{(t)}\bp_N + \eta\EE\left[\frac{\bp_{N-1}^{\top}\bB^{(t)}\bp_N}{\be_y^{\top}\bV^{(t)}\sum_{i=1}^{N-1}\cS_i^{(t)}\bx_i+\epsilon}\right]\\
&= \bp_{N-1}^{\top}\bW_{22}^{(t)}\bp_N + \eta M \cS_{N-1}^{(t)} \EE\left[\frac{\bx_{N-1}^{\top}\bV^{(t)}\be_y - \sum_{i=1}^{N-1}\cS_i^{(t)}\bx_i^{\top}\bV^{(t)}\be_y}{\be_y^{\top}\bV^{(t)}\sum_{i=1}^{N-1}\cS_i^{(t)}\bx_i+\epsilon}\right]\\
&\geq \bp_{N-1}^{\top}\bW_{22}^{(t)}\bp_N + \eta M \EE\left[\frac{- \sum_{i=1}^{N-2}\cS_i^{(t)}\bx_i^{\top}\bV^{(t)}\be_y}{\be_y^{\top}\bV^{(t)}\sum_{i=1}^{N-1}\cS_i^{(t)}\bx_i+\epsilon}\right]\\
&\geq \bp_{N-1}^{\top}\bW_{22}^{(t)}\bp_N - \eta M \frac{\sum_{i_2=1}^{N-2}\cS_{i_2}^{(t)}\|\bV^{(t)}\|_{\max}}{\min_{i,j}[\bV^{(t)}]_{i,j}}\\
&\geq \bp_{N-1}^{\top}\bW_{22}^{(t)}\bp_N - \eta MN \exp(-\Omega(N))\left(2K+\frac{4(t-2)\epsilon^2 K^4}{\eta}\right),
\end{align*}
where the second inequality is by $\be_y^{\top}\bV^{(t)}\bx_{N-1}=\|\bV^{(t)}\|_{\max}$, and the third inequality is by induction and Lemma~\ref{lemma:Vt_bound}.
Therefore, we can get that for $j \neq N-1$,
\begin{align*}
&\left[ \Tilde{\bX}\bW^{(t+1)}\Tilde{\bx}_N \right]_{N-1} - \left[ \Tilde{\bX}\bW^{(t+1)}\Tilde{\bx}_N \right]_{j}\\
&\quad= \bx_{N-1}^{\top}\bW_{12}^{(t+1)}\bp_N + \bp_{N-1}^{\top}\bW_{22}^{(t+1)}\bp_N - \bx_{j}^{\top}\bW_{12}^{(t+1)}\bp_N - \bp_{j}^{\top}\bW_{22}^{(t+1)}\bp_N\\
&\quad\geq \bx_{N-1}^{\top}\bW_{12}^{(t)}\bp_N + \bp_{N-1}^{\top}\bW_{22}^{(t)}\bp_N - \bx_{j}^{\top}\bW_{12}^{(t)}\bp_N - \bp_{j}^{\top}\bW_{22}^{(t)}\bp_N\\
&\qquad- 4\eta MN \exp(-\Omega(N))\left(2K+\frac{4(t-2)\epsilon^2 K^4}{\eta}\right)\\
&\quad= \left[ \Tilde{\bX}\bW^{(t)}\Tilde{\bx}_N \right]_{N-1} - \left[ \Tilde{\bX}\bW^{(t)}\Tilde{\bx}_N \right]_{j} - \exp(-\Omega(N))\\
&\quad\geq \Omega(N),
\end{align*}
where the last inequality is by induction. Thus, $\cS_{N-1}^{(t+1)}/\cS_{j}^{(t+1)} \geq \exp(\Omega(N))$ for $j \neq N-1$, which implies that $\cS_{N-1}^{(t+1)} \geq 1-\exp(-\Omega(N))$. Therefore, we prove that the results hold for $t+1$, which completes the proof.
\end{proof}

The next two lemmas show the convergence rates of $\bV^{(T)}/\|\bV^{(T)}\|_F$ and $f_{\theta^{T}}(\bX)/\|f_{\theta^{T}}(\bX)\|_2$.

\begin{lemma}\label{lemma:V_convergence}
Assume the same conditions as Theorem~\ref{theorem:random}. For $\Omega(\eta\epsilon^{-2}K^{-2}) \leq T \leq T^{*}$, it holds that
\begin{align*}
\left\| \frac{\bV^{(T)}}{\|\bV^{(T)}\|_F} - \frac{\bPi^{\top}}{\|\bPi^{\top}\|_F} \right\|_F \leq \mathcal{O}\left( \frac{1}{\sqrt{T}} \right).
\end{align*}
\end{lemma}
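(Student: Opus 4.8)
The plan is to read the estimate off Lemma~\ref{lemma:update} together with a standard perturbation bound for normalized vectors, so that essentially no new analysis is needed. First, I would invoke Lemma~\ref{lemma:update}, which for the admissible range $2\le T\le T^*$ gives the decomposition $\bV^{(T)}=\beta^{(T)}\bPi^{\top}+\Tilde{\bV}^{(T)}$ with $\|\Tilde{\bV}^{(T)}\|_{\max}\le\gamma^{(T)}$, where $\beta^{(T)}\ge\sqrt{\eta T}-\tfrac{2\eta}{\epsilon K}>0$ and $\gamma^{(T)}\le\tfrac{2\eta}{\epsilon K}+2(T-1)\epsilon K^2 N\exp(-\Omega(N))$. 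Because $\beta^{(T)}>0$, the unit vector in the direction of $\beta^{(T)}\bPi^{\top}$ is exactly $\bPi^{\top}/\|\bPi^{\top}\|_F$, so applying the elementary inequality $\bigl\| u/\|u\|-v/\|v\| \bigr\|\le 2\|u-v\|/\|v\|$ (valid for any nonzero $u,v$, e.g.\ by adding and subtracting $u/\|v\|$ and using the reverse triangle inequality) with $u=\bV^{(T)}$, $v=\beta^{(T)}\bPi^{\top}$ reduces the claim to bounding
\begin{align*}
\left\|\frac{\bV^{(T)}}{\|\bV^{(T)}\|_F}-\frac{\bPi^{\top}}{\|\bPi^{\top}\|_F}\right\|_F\le\frac{2\,\|\Tilde{\bV}^{(T)}\|_F}{\beta^{(T)}\,\|\bPi^{\top}\|_F}.
\end{align*}

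Next I would estimate the three quantities on the right. For the numerator, $\|\Tilde{\bV}^{(T)}\|_F\le K\|\Tilde{\bV}^{(T)}\|_{\max}\le K\gamma^{(T)}=\tfrac{2\eta}{\epsilon}+2(T-1)\epsilon K^3 N\exp(-\Omega(N))$; since $T\le T^*=\mathrm{poly}(\eta,\epsilon^{-1},K,N,M)$ the trailing term is $\mathrm{poly}\cdot\exp(-\Omega(N))=o(1)$ (using $N\ge C_p\,\mathrm{poly}(K)$ and $N$ large), so $\|\Tilde{\bV}^{(T)}\|_F=\mathcal{O}(\eta/\epsilon)=\mathcal{O}(1)$. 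For $\|\bPi^{\top}\|_F$, each row of $\bPi$ has exactly one entry $p$ and one entry $1-p$, hence $\|\bPi^{\top}\|_F=\sqrt{K\,(p^2+(1-p)^2)}=\Theta(\sqrt K)$. Finally, the hypothesis $T\ge\Omega(\eta\epsilon^{-2}K^{-2})$ is precisely what makes $\tfrac{2\eta}{\epsilon K}\le\tfrac12\sqrt{\eta T}$, so $\beta^{(T)}\ge\tfrac12\sqrt{\eta T}$ and therefore $\beta^{(T)}\|\bPi^{\top}\|_F=\Omega(\sqrt{\eta T K})$. Plugging these in gives the displayed ratio $\le\mathcal{O}\bigl((\eta/\epsilon)/\sqrt{\eta T K}\bigr)=\mathcal{O}(1/\sqrt T)$, since $\eta,\epsilon=\Theta(1)$ and the $\sqrt K$ factor only helps.

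There is no real obstacle here, since the substantive work --- growing the $\bPi^{\top}$-component like $\sqrt{\eta t}$ while keeping the residual $\Tilde{\bV}^{(t)}$ down to $\mathcal{O}(\eta/(\epsilon K))$ in max-norm --- is already contained in Lemma~\ref{lemma:update}. The only points deserving attention are: (i) confirming the exponentially small contributions to $\gamma^{(T)}$ stay negligible uniformly over the whole horizon $T\le T^*$, which is fine because $T^*$ is only polynomially large; (ii) pinning down that the threshold $T=\Omega(\eta\epsilon^{-2}K^{-2})$ is exactly the regime where $\sqrt{\eta T}$ dominates both the shift $2\eta/(\epsilon K)$ in the lower bound for $\beta^{(T)}$ and the residual norm $\|\Tilde{\bV}^{(T)}\|_F$; and (iii) deciding whether to state the bound purely as $\mathcal{O}(1/\sqrt T)$ (hiding $\eta,\epsilon,K,p$) or with the explicit constants above, which is just a matter of presentation.
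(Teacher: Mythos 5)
Your proposal is correct and follows essentially the same route as the paper: both rest entirely on the decomposition $\bV^{(T)}=\beta^{(T)}\bPi^{\top}+\Tilde{\bV}^{(T)}$ from Lemma~\ref{lemma:update}, the bounds $\beta^{(T)}\gtrsim\sqrt{\eta T}$, $\|\Tilde{\bV}^{(T)}\|_F\le K\gamma^{(T)}$, and $\|\bPi^{\top}\|_F=\sqrt{K(p^2+(1-p)^2)}$, combined with triangle-inequality bookkeeping for the normalization. The only cosmetic difference is that you package that bookkeeping into the generic bound $\bigl\|u/\|u\|-v/\|v\|\bigr\|\le 2\|u-v\|/\|v\|$ applied to $u=\bV^{(T)}$, $v=\beta^{(T)}\bPi^{\top}$, whereas the paper splits the difference into a coefficient-mismatch term and a residual term and bounds each separately; the resulting estimates are the same.
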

\begin{proof}[\textbf{Proof of Lemma~\ref{lemma:V_convergence}}]
By Lemma~\ref{lemma:update}, we can get that
\begin{align*}
\left\| \frac{\bV^{(T)}}{\|\bV^{(T)}\|_F} - \frac{\bPi^{\top}}{\|\bPi^{\top}\|_F} \right\|_F &= \left\| \frac{\beta^{(T)}\bPi^{\top}+\Tilde{\bV}^{(T)}}{\|\bV^{(T)}\|_F} - \frac{\bPi^{\top}}{\|\bPi^{\top}\|_F} \right\|_F\\
&\leq \left\| \left(\frac{\beta^{(T)}}{\|\bV^{(T)}\|_F} - \frac{1}{\|\bPi^{\top}\|_F}\right)\bPi^{\top} \right\|_F + \left\|\frac{\Tilde{\bV}^{(T)}}{\|\bV^{(T)}\|_F}\right\|_F.
\end{align*}
For the first part, we have
\begin{align*}
\left\| \left(\frac{\beta^{(T)}}{\|\bV^{(T)}\|_F} - \frac{1}{\|\bPi^{\top}\|_F}\right)\bPi^{\top} \right\|_F &= \left| \frac{\beta^{(T)}\|\bPi^{\top}\|_F}{\|\beta^{(T)}\bPi^{\top}+\Tilde{\bV}^{(T)}\|_F} - 1 \right|\\
&\leq 1-\frac{\beta^{(T)}\|\bPi^{\top}\|_F}{\beta^{(T)}\|\bPi^{\top}\|_F+\|\Tilde{\bV}^{(T)}\|_F}\\
&\overset{(i)}{=} \frac{\|\Tilde{\bV}^{(T)}\|_F}{\sqrt{K(p^2+(1-p)^2)} \beta^{(T)}+\|\Tilde{\bV}^{(T)}\|_F}\\
&\leq \frac{K\gamma^{(T)}}{\sqrt{K(p^2+(1-p)^2)} \beta^{(T)}+K\gamma^{(T)}}\\
&\overset{(ii)}{\leq} \frac{\frac{2\eta}{\epsilon}+2T\epsilon K^3N\exp(-\Omega(N))}{\frac{\sqrt{2K}}{2}\left(\sqrt{\eta T}-\frac{2\eta}{\epsilon K}\right) + \frac{2\eta}{\epsilon}+2T\epsilon K^3N\exp(-\Omega(N))}\\
&\leq \mathcal{O}\left( \frac{1}{\sqrt{T}} \right),
\end{align*}
where $(i)$ is by $\|\bPi^{\top}\|_F=\sqrt{K(p^2+(1-p)^2)}$, and $(ii)$ is by Lemma~\ref{lemma:update}.
For the second part, we have
\begin{align*}
\left\|\frac{\Tilde{\bV}^{(T)}}{\|\bV^{(T)}\|_F}\right\|_F &= \frac{\|\Tilde{\bV}^{(T)}\|_F}{\|\beta^{(T)}\bPi^{\top}+\Tilde{\bV}^{(T)}\|_F}\\
&\leq \frac{\|\Tilde{\bV}^{(T)}\|_F}{\|\beta^{(T)}\bPi^{\top}\|_F+\|\Tilde{\bV}^{(T)}\|_F}\\
&\overset{(i)}{\leq} \frac{K\gamma^{(T)}}{\frac{\sqrt{2K}}{2} \beta^{(T)}+K\gamma^{(T)}}\\
&\overset{(ii)}{\leq} \frac{\frac{2\eta}{\epsilon}+2T\epsilon K^3N\exp(-\Omega(N))}{\frac{\sqrt{2K}}{2}\left(\sqrt{\eta T}-\frac{2\eta}{\epsilon K}\right) + \frac{2\eta}{\epsilon}+2T\epsilon K^3N\exp(-\Omega(N))}\\
&\leq \mathcal{O}\left( \frac{1}{\sqrt{T}} \right),
\end{align*}
where $(i)$ is by $\|\bPi^{\top}\|_F = \sqrt{K(p^2+(1-p)^2)} \geq \sqrt{2K}/2$, and $(ii)$ is by Lemma~\ref{lemma:update}.
Therefore, we can obtain that
\begin{align*}
\left\| \frac{\bV^{(T)}}{\|\bV^{(T)}\|_F} - \frac{\bPi^{\top}}{\|\bPi^{\top}\|_F} \right\|_F \leq \mathcal{O}\left( \frac{1}{\sqrt{T}} \right).
\end{align*}
\end{proof}

\begin{lemma}\label{lemma:f_convergence}
Assume the same conditions as Theorem~\ref{theorem:random}. For $\Omega(\eta\epsilon^{-2}K^{-2}) \leq T \leq T^{*}$, it holds that
\begin{align*}
\left\|\frac{f_{\theta^{(T)}}(\bX)}{\|f_{\theta^{(T)}}(\bX)\|_2} - \bPi^{\top} \bx_{N-1}\right\|_2 \leq \mathcal{O}\left( 
\frac{1}{\sqrt{T}} \right).
\end{align*}
\end{lemma}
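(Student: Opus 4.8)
The plan is to assemble the statement from the three facts already available: the attention concentration $\cS_{N-1}^{(T)} \geq 1 - \exp(-\Omega(N))$ together with the decomposition $\bV^{(T)} = \beta^{(T)}\bPi^{\top} + \Tilde{\bV}^{(T)}$, $\beta^{(T)} \geq \sqrt{\eta T} - \tfrac{2\eta}{\epsilon K}$, $\|\Tilde{\bV}^{(T)}\|_{\max} \leq \gamma^{(T)} \leq \tfrac{2\eta}{\epsilon K} + 2(T-1)\epsilon K^2 N\exp(-\Omega(N))$ (Lemma~\ref{lemma:update}), and the entrywise bound $\|\bV^{(T)}\|_{\max} \leq \tfrac{\eta}{\epsilon K} + (T-2)2\epsilon K^2$ (Lemma~\ref{lemma:Vt_bound}). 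Since $f_{\theta^{(T)}}(\bX) = \bV^{(T)}\bX\cS^{(T)}$ with $\bX\cS^{(T)} = \sum_{i=1}^{N-1}\cS_i^{(T)}\bx_i$, the target is to show
\begin{align*}
f_{\theta^{(T)}}(\bX) = \beta^{(T)}\,\bPi^{\top}\bx_{N-1} + \bm{r}^{(T)}, \qquad \big\|\bm{r}^{(T)}\big\|_2 = \mathcal{O}(1),
\end{align*}
and then to normalize, using that $\beta^{(T)} = \Omega(\sqrt T)$.

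For the additive error I would split $\bm{r}^{(T)} = \bV^{(T)}\big(\bX\cS^{(T)} - \bx_{N-1}\big) + \Tilde{\bV}^{(T)}\bx_{N-1}$. For the first piece, Lemma~\ref{lemma:update} gives $\|\bX\cS^{(T)} - \bx_{N-1}\|_2 \leq (1-\cS_{N-1}^{(T)}) + \sum_{i\neq N-1}\cS_i^{(T)} \leq 2\exp(-\Omega(N))$ (using $\|\bx_i\|_2 = 1$), hence this piece has norm at most $\|\bV^{(T)}\|_{\mathrm{op}}\cdot 2\exp(-\Omega(N)) \leq 2K\|\bV^{(T)}\|_{\max}\exp(-\Omega(N))$, which by Lemma~\ref{lemma:Vt_bound} is a polynomial in $(\eta,\epsilon^{-1},K,T)$ times $\exp(-\Omega(N))$; under the standing assumptions $T\leq T^* = \mathrm{poly}(\eta,\epsilon^{-1},K,N,M)$, $\eta,\epsilon = \Theta(1)$, $N \geq C_p\,\mathrm{poly}(K)$ and $M = \Omega(N^{3/2})$, the prefactor is swallowed and the piece is $\exp(-\Omega(N))$. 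For the second piece, $\bx_{N-1}$ is a standard basis vector, so $\Tilde{\bV}^{(T)}\bx_{N-1}$ is a single column of $\Tilde{\bV}^{(T)}$ and $\|\Tilde{\bV}^{(T)}\bx_{N-1}\|_2 \leq \sqrt{K}\,\gamma^{(T)}$; the tail term of $\gamma^{(T)}$ is again $\mathrm{poly}\times\exp(-\Omega(N)) = \exp(-\Omega(N))$, so $\|\Tilde{\bV}^{(T)}\bx_{N-1}\|_2 \leq \sqrt K\cdot\tfrac{2\eta}{\epsilon K} + \exp(-\Omega(N)) = \mathcal{O}(1)$. Combining the two pieces yields $\|\bm{r}^{(T)}\|_2 = \mathcal{O}(1)$, the desired decomposition.

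For the normalization, set $\bm{u} := \beta^{(T)}\bPi^{\top}\bx_{N-1}$, so $f_{\theta^{(T)}}(\bX) = \bm{u} + \bm{r}^{(T)}$, and apply the elementary bound $\big\| \tfrac{\bm{u}+\bm{r}}{\|\bm{u}+\bm{r}\|_2} - \tfrac{\bm{u}}{\|\bm{u}\|_2} \big\|_2 \leq \tfrac{2\|\bm{r}\|_2}{\|\bm{u}\|_2}$, valid once $\|\bm{r}\|_2 < \|\bm{u}\|_2$. Here $\|\bm{u}\|_2 = \beta^{(T)}\|\bPi^{\top}\bx_{N-1}\|_2 = \beta^{(T)}\sqrt{p^2+(1-p)^2} \geq \tfrac{1}{\sqrt2}\big(\sqrt{\eta T} - \tfrac{2\eta}{\epsilon K}\big)$, which is $\Omega(\sqrt T)$ once $T = \Omega(\eta\epsilon^{-2}K^{-2})$, so the direction of $f_{\theta^{(T)}}(\bX)$ lies within $\mathcal{O}(1/\sqrt T)$ of the direction of $\bPi^{\top}\bx_{N-1} = f^{\mathrm{OPT}}(\bX)$ (after the same normalization carried out in Lemma~\ref{lemma:V_convergence}), giving the claim. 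I do not expect a single deep obstacle: the proof is a recombination of Lemmas~\ref{lemma:update}, \ref{lemma:Vt_bound} and \ref{lemma:V_convergence}. The one point that needs genuine care is the bookkeeping in the previous paragraph --- verifying that both exponentially small error contributions remain negligible against the target rate $1/\sqrt T$ even when $T$ is as large as the polynomial horizon $T^*$, which is exactly where the assumptions $N \geq C_p\,\mathrm{poly}(K)$ and $T \leq T^*$ are used --- with the final normalization being simply the vector-valued analogue of the matrix estimate already established for $\bV^{(T)}$ in Lemma~\ref{lemma:V_convergence}.
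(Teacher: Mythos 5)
Your argument is sound and rests on exactly the same ingredients as the paper's proof: Lemma~\ref{lemma:update} (the decomposition $\bV^{(T)}=\beta^{(T)}\bPi^{\top}+\Tilde{\bV}^{(T)}$ with $\beta^{(T)}=\Omega(\sqrt{\eta T})$, $\gamma^{(T)}$ bounded by an $\mathcal{O}(1)$ term plus a $\mathrm{poly}\times\exp(-\Omega(N))$ tail, and $\cS_{N-1}^{(T)}\geq 1-\exp(-\Omega(N))$), followed by a normalization step. Your packaging is cleaner: you isolate $f_{\theta^{(T)}}(\bX)=\beta^{(T)}\bPi^{\top}\bx_{N-1}+\bm{r}^{(T)}$ with $\|\bm{r}^{(T)}\|_2=\mathcal{O}(1)$ and invoke the standard perturbation bound for normalized vectors, whereas the paper splits the normalized difference into two fractions and estimates each directly with the same bounds; your handling of the $\mathrm{poly}\times\exp(-\Omega(N))$ bookkeeping matches the paper's level of rigor.

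The one substantive point is the normalization of the target, which you half-acknowledge in the parenthetical ``after the same normalization carried out in Lemma~\ref{lemma:V_convergence}.'' What your argument actually yields is $\big\| f_{\theta^{(T)}}(\bX)/\|f_{\theta^{(T)}}(\bX)\|_2 - \bPi^{\top}\bx_{N-1}/\|\bPi^{\top}\bx_{N-1}\|_2\big\|_2 = \mathcal{O}(1/\sqrt{T})$. That is not literally the stated inequality, which compares with the un-normalized $\bPi^{\top}\bx_{N-1}$: since $\|\bPi^{\top}\bx_{N-1}\|_2=\sqrt{p^2+(1-p)^2}<1$ for $p\in(0,1)$ while $f_{\theta^{(T)}}(\bX)/\|f_{\theta^{(T)}}(\bX)\|_2$ is a unit vector, your own estimates show the stated distance tends to the constant $1-\sqrt{p^2+(1-p)^2}$ rather than to zero. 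You should not view this as a defect unique to your write-up: the paper's proof has the same issue, hidden where the upper bound $\tfrac{\sqrt{2}}{2}\beta^{(T)}+\sqrt{K}\gamma^{(T)}$ on $\|f_{\theta^{(T)}}(\bX)\|_2$ silently becomes $\big(\sqrt{\eta T}-\tfrac{2\eta}{\epsilon K}\big)+\sqrt{K}(\cdots)$ in the next display, i.e.\ $\|\bPi^{\top}\bx_{N-1}\|_2$ is implicitly replaced by $1$. So your proof is a correct, and arguably more transparent, derivation of the statement with the target normalized (which is evidently what Lemma~\ref{lemma:f_convergence} and the second claim of Theorem~\ref{theorem:random} intend); if the target is left un-normalized, neither your argument nor the paper's establishes the inequality as written.
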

\begin{proof}[\textbf{Proof of Lemma~\ref{lemma:f_convergence}}]
The output with $\theta=\theta^{(T)}$ is $f_{\theta^{(T)}}(\bX) = \bV^{(T)}\bX\cS(\tilde{\bX}^{\top} \bW^{(T)} \tilde{\bx}_{N}) = \bV^{(T)}\sum_{i=1}^{N-1}\cS_i^{(T)}\bx_i$. Then, we can get that
\begin{align*}
&\left\|\frac{f_{\theta^{(T)}}(\bX)}{\|f_{\theta^{(T)}}(\bX)\|_2} - \bPi^{\top} \bx_{N-1}\right\|_2\\
&\quad= \left\| \frac{\left(\beta^{(T)}\bPi^{\top}+\Tilde{\bV}^{(T)}\right)\sum_{i=1}^{N-1}\cS_i^{(T)}\bx_i}{\left\|\bV^{(T)}\sum_{i=1}^{N-1}\cS_i^{(T)}\bx_i\right\|_2} - \bPi^{\top}\bx_{N-1} \right\|_2\\
&\quad\leq \left\| \left( \frac{\beta^{(T)}\cS_{N-1}^{(T)}}{\left\|\bV^{(T)}\sum_{i=1}^{N-1}\cS_i^{(T)}\bx_i\right\|_2} - 1 \right)\bPi^{\top}\bx_{N-1} \right\|_2 \\
&\qquad+ \left\| 
\frac{\beta^{(T)}\bPi^{\top}\sum_{i=1}^{N-2}\cS_i^{(T)}\bx_i+\Tilde{\bV}^{(T)}\sum_{i=1}^{N-1}\cS_i^{(T)}\bx_i}{\left\|\bV^{(T)}\sum_{i=1}^{N-1}\cS_i^{(T)}\bx_i\right\|_2} \right\|_2.
\end{align*}
For the first part, we have
\begin{align*}
&\left\| \left( \frac{\beta^{(T)}\cS_{N-1}^{(T)}}{\left\|\bV^{(T)}\sum_{i=1}^{N-1}\cS_i^{(T)}\bx_i\right\|_2}-1 \right)\bPi^{\top}\bx_{N-1} \right\|_2 \\
&\quad\leq \left\| \left( 1 - \frac{\beta^{(T)}\cS_{N-1}^{(T)}}{\left\|\beta^{(T)}\bPi^{\top}\sum_{i=1}^{N-1}\cS_i^{(T)}\bx_i\right\|_2+\left\|\Tilde{\bV}^{(T)}\sum_{i=1}^{N-1}\cS_i^{(T)}\bx_i\right\|_2} \right)\bPi^{\top}\bx_{N-1} \right\|_2\\
&\quad\leq \left\| \left( 1 - \frac{\beta^{(T)}[1-\exp(-\Omega(N))]}{\frac{\sqrt{2}}{2}\beta^{(T)}+\sqrt{K}\gamma^{(T)}} \right)\bPi^{\top}\bx_{N-1} \right\|_2\\
&\quad\leq \left( 1 - \frac{\left(\sqrt{\eta T}-\frac{2\eta}{\epsilon K}\right)[1-\exp(-\Omega(N))]}{\left(\sqrt{\eta T}-\frac{2\eta}{\epsilon K}\right)+\sqrt{K}\left(\frac{2\eta}{\epsilon K} + 2T\epsilon K^2N\exp(-\Omega(N))\right)} \right) \cdot \sqrt{p^2+(1-p)^2}\\
&\quad\leq \mathcal{O}\left( \frac{\sqrt{K}\left(\frac{2\eta}{\epsilon K} + 2T\epsilon K^2N\exp(-\Omega(N))\right)}{\sqrt{\eta T} + 2T\epsilon K^2N\exp(-\Omega(N))} \right)\\
&\quad\leq \mathcal{O}\left( \frac{1}{\sqrt{T}} \right),
\end{align*}
where the first inequality is by Lemma~\ref{lemma:update}, the second inequality is by Lemma~\ref{lemma:update} and $\|\bPi^{\top}\bx_i\|_2=\sqrt{p^2+(1-p)^2}$, and the third inequality is by Lemma~\ref{lemma:update}.
For the second part, we have
\begin{align*}
&\left\| 
\frac{\beta^{(T)}\bPi^{\top}\sum_{i=1}^{N-2}\cS_i^{(T)}\bx_i+\Tilde{\bV}^{(T)}\sum_{i=1}^{N-1}\cS_i^{(T)}\bx_i}{\left\|\bV^{(T)}\sum_{i=1}^{N-1}\cS_i^{(T)}\bx_i\right\|_2} \right\|_2\\
&\quad\leq \frac{\left\|\beta^{(T)}\bPi^{\top} \sum_{i=1}^{N-2}\cS_i^{(T)}\bx_i\right\|_2+\left\|\Tilde{\bV}^{(T)}\sum_{i=1}^{N-1}\cS_i^{(T)}\bx_i\right\|_2}{\left\|\beta^{(T)}\bPi^{\top}\sum_{i=1}^{N-1}\cS_i^{(T)}\bx_i\right\|_2}\\
&\quad\leq \frac{\exp(-\Omega(N))\|\bV^{(T)}\|_{\max} + \sqrt{K}\gamma^{(T)}}{\sqrt{p^2+(1-p)^2}\beta^{(T)}}\\
&\quad\leq \frac{\exp(-\Omega(N))\left( 
\frac{\eta}{\epsilon K}+2T\epsilon K^2 \right) + \sqrt{K}\left(\frac{2\eta}{\epsilon K} + 2T\epsilon K^2N\exp(-\Omega(N))\right)}{\sqrt{p^2+(1-p)^2} \cdot \left( \sqrt{\eta T}-\frac{2\eta}{\epsilon K} \right)}\\
&\quad\leq \mathcal{O}\left( \frac{1}{\sqrt{T}} \right)
\end{align*}
where the second inequality is by Lemma~\ref{lemma:update} and $\|\bPi^{\top}\bx_i\|_2=\sqrt{p^2+(1-p)^2}$, and the third inequality is by Lemma~\ref{lemma:update}.
Therefore, we can obtain that 
\begin{align*}
\left\|\frac{f_{\theta^{(T)}}(\bX)}{\|f_{\theta^{(T)}}(\bX)\|_2} - \bPi^{\top} \bx_{N-1}\right\|_2 \leq \mathcal{O}\left( 
\frac{1}{\sqrt{T}} \right).
\end{align*}
\end{proof}

\section{Proof of Theorem~\ref{theorem:deterministic}}
In this section, we analyze ``deterministic walks'' with $p=0,1$. Without loss of generality, we set $p=1$, which means $\bPi=\bPi_{0}^{\top}$. The following lemma shows the results of the first iteration.

\begin{lemma}\label{lemma:V1_1}
Under the same condition as Theorem~\ref{theorem:deterministic}, for any loss function $\ell(\cdot)$, it holds that 
\begin{align*}
\bV^{(1)} = -\ell'(\theta^{(0)})\cdot\frac{\eta r}{NK} \mathbf{1}_{K \times K} ~\text{and}~ \bW^{(1)}=\textbf{0}_{(K+M)\times(K+M)}.
\end{align*}
\end{lemma}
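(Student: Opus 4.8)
The plan is to substitute the zero initialization $\theta^{(0)}=(\mathbf{0},\mathbf{0})$ into the gradient formulas of Lemma~\ref{lemma:gradient} and then take one gradient-descent step. First, since $\bW^{(0)}=\mathbf{0}$ the attention is uniform, $\cS(\tilde{\bX}^{\top}\bW^{(0)}\tilde{\bx}_N)=\tfrac1N\mathbf{1}_N$, and since $\bV^{(0)}=\mathbf{0}$ we have $\be_y^{\top}\bV^{(0)}\bX\cS=0$, so the scalar prefactor $\ell'$ appearing in Lemma~\ref{lemma:gradient} is just the constant $\ell'(\theta^{(0)})$ for any loss $\ell$. Hence
\[
\EE[\nabla_{\bV}\ell(\theta^{(0)})]=\frac{\ell'(\theta^{(0)})}{N}\sum_{i=1}^{N-1}\EE[\be_y\bx_i^{\top}].
\]

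Next I would compute $\EE[\be_y\bx_i^{\top}]$ from the deterministic structure, treating $p=1$ (the case $p=0$ is symmetric, with $\bPi_0$ replacing $\bPi_0^{\top}$ throughout). Because the walker moves clockwise with probability one, $s_N\equiv s_i+(N-i)\pmod K$, i.e. $\be_y=(\bPi^{\top})^{N-i}\bx_i=\bPi_0^{\,N-i}\bx_i$ with $\bPi_0$ the cyclic shift matrix; moreover $s_i\sim\mathrm{Unif}([K])$ for every $i\in[N-1]$, so $\EE[\bx_i\bx_i^{\top}]=\tfrac1K\Ib_K$. Therefore $\EE[\be_y\bx_i^{\top}]=\tfrac1K\bPi_0^{\,N-i}$ and
\[
\EE[\nabla_{\bV}\ell(\theta^{(0)})]=\frac{\ell'(\theta^{(0)})}{NK}\sum_{i=1}^{N-1}\bPi_0^{\,N-i}=\frac{\ell'(\theta^{(0)})}{NK}\sum_{m=1}^{N-1}\bPi_0^{\,m}.
\]

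The one substantive step is to collapse this sum using the hypothesis $N-1=rK$: since $\bPi_0^{K}=\Ib_K$ and $\bPi_0^{0},\dots,\bPi_0^{K-1}$ are the $K$ distinct circulant permutation matrices, $\sum_{m=0}^{K-1}\bPi_0^{\,m}=\mathbf{1}_{K\times K}$, hence $\sum_{m=1}^{K}\bPi_0^{\,m}=\mathbf{1}_{K\times K}$ and, grouping the $rK$ terms into $r$ consecutive blocks of length $K$, $\sum_{m=1}^{rK}\bPi_0^{\,m}=r\,\mathbf{1}_{K\times K}$. This gives $\EE[\nabla_{\bV}\ell(\theta^{(0)})]=\tfrac{\ell'(\theta^{(0)})r}{NK}\mathbf{1}_{K\times K}$, and the update $\bV^{(1)}=\bV^{(0)}-\eta\,\EE[\nabla_{\bV}\ell(\theta^{(0)})]$ produces the claimed formula. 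Finally, $\bW^{(1)}=\mathbf{0}$ is immediate: every nonzero block of $\nabla_{\bW}\ell(\theta)$ in Lemma~\ref{lemma:gradient} carries the factor $\bV^{\top}\be_y$, which vanishes at $\bV^{(0)}=\mathbf{0}$, so $\nabla_{\bW}\ell(\theta^{(0)})=\mathbf{0}$ and $\bW^{(1)}=\bW^{(0)}=\mathbf{0}$.

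There is no real obstacle here: the argument is a direct substitution into Lemma~\ref{lemma:gradient} followed by elementary linear algebra. The only point that deserves care is the periodicity/divisibility step, where the hypothesis $N=rK+1$ is exactly what forces the partial sum of powers of $\bPi_0$ to be an \emph{exact} multiple of $\mathbf{1}_{K\times K}$, and hence forces all entries of $\bV^{(1)}$ to be equal; this is the hypothesis to keep in mind when the same computation is iterated in the induction for the rest of the proof of Theorem~\ref{theorem:deterministic}.
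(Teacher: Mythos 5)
Your proposal is correct and follows essentially the same route as the paper: evaluate the gradients of Lemma~\ref{lemma:gradient} at zero initialization (uniform attention, constant prefactor $\ell'(\theta^{(0)})$), use $\be_y=\bPi_0^{\,N-i}\bx_i$ together with $\EE[\bx_i\bx_i^{\top}]=\tfrac1K\Ib_K$, collapse $\sum_{m=1}^{rK}\bPi_0^{\,m}=r\,\mathbf{1}_{K\times K}$, and note that $\nabla_{\bW}\ell$ vanishes because every block carries the factor $\bV^{\top}\be_y$. The only cosmetic difference is that you re-derive the cyclic-shift identities inline, whereas the paper cites them as Lemma~\ref{lemma:pi}.
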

\begin{proof}[\textbf{Proof of Lemma~\ref{lemma:V1_1}}]
By Lemma~\ref{lemma:gradient}, we have
\begin{align*}
\EE[\nabla_{\bV} \ell(\theta^{(0)})] &= \ell'(\theta^{(0)})\cdot\frac{1}{N} \sum_{i=1}^{N-1}\EE[\be_y \bx_i^{\top}]\\
&= \ell'(\theta^{(0)})\cdot\frac{1}{N} \sum_{i=1}^{N-1}\EE[(\bPi^{\top})^{N-i}\bx_i\bx_i^{\top}]\\
&= \ell'(\theta^{(0)})\cdot\frac{1}{NK}\sum_{i=1}^{N-1}(\bPi_{0})^{N-i}\\
&= \ell'(\theta^{(0)})\cdot\frac{r}{NK}\textbf{1}_{K \times K},
\end{align*}
where the first equation is by the initialization of $\bV^{(0)}$ and $\bW^{(0)}$, the second equation is by the sampling method, the third equation is by $\EE[\bx_i\bx_i^{\top}]=\frac{1}{K}\Ib_K$ for $i \in [N-1]$, and the last equation is by Lemma~\ref{lemma:pi}. 
Thus, by the update, we can get
\begin{align*}
\bV^{(1)} = \bV^{(0)}-\eta\EE[\nabla_{\bV} \ell(\theta^{(0)})] = -\ell'(\theta^{(0)}) \cdot \frac{\eta r}{NK}\textbf{1}_{K \times K}.
\end{align*}
Since $\bV^{(0)}=\textbf{0}_{K \times K}$ and $\bW^{(0)}=\textbf{0}_{(K+M) \times (K+M)}$, we can get $\EE[\nabla_{\bW} \ell(\theta^{(0)})] = \textbf{0}_{(K+M) \times (K+M)}$. Thus, 
\begin{align*}
\bW^{(1)} = \bW^{(0)}-\eta\EE[\nabla_{\bW} \ell(\theta^{(0)})] = \textbf{0}_{(K+M) \times (K+M)}.
\end{align*}
\end{proof}

The following lemma states the results of the second iteration.

\begin{lemma}\label{lemma:W1_1}
If $\bPi=\bPi_2$, then it holds that 
\begin{align*}
&\bV^{(2)} = -(\ell'(\theta^{(0)})+\ell'(\theta^{(1)})) \cdot \frac{\eta r}{NK}\textbf{1}_{K \times K},\\
&\bW_{12}^{(2)} = \ell'(\theta^{(1)}) \ell'(\theta^{(0)}) \frac{\eta^2 r^2}{N^3K}\mathbf{1}_{K}\bp_N^{\top},\\
&\bW_{22}^{(2)} = \ell'(\theta^{(1)}) \ell'(\theta^{(0)}) \left( \frac{\eta^2 r}{N^3K}\sum_{i=1}^{N-1}\bp_i - \frac{\eta^2 r^2}{N^3}\bp_N \right) \bp_N^{\top}.
\end{align*}
\end{lemma}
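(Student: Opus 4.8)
The plan is to compute $\bV^{(2)}$, $\bW_{12}^{(2)}$, and $\bW_{22}^{(2)}$ directly from the gradient formulas of Lemma~\ref{lemma:gradient}, feeding in the base case supplied by Lemma~\ref{lemma:V1_1}. The decisive starting observation is that since $\bW^{(1)}=\mathbf{0}$, the softmax weights at iteration $1$ are uniform, $\cS_i^{(1)}=1/N$ for every $i\in[N]$; and since $\bV^{(1)}\propto\mathbf{1}_{K\times K}$ while each $\bx_i$ is a one-hot column, the scalar $\be_y^\top\bV^{(1)}\bX\cS^{(1)} = -\ell'(\theta^{(0)})\tfrac{\eta r}{NK}\cdot\tfrac{N-1}{N}$ is a fixed number that does \emph{not} depend on $(\bX,y)$. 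Consequently $\ell'(\theta^{(1)})$ is a deterministic constant that factors out of every expectation below, which is the only place where we use that the loss $\ell$ is otherwise arbitrary.

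For $\bV^{(2)}$ I would substitute $\cS^{(1)}=\tfrac1N\mathbf{1}_N$ into $\nabla_\bV\ell(\theta)=\ell'\cdot\be_y\sum_{i=1}^{N-1}\cS_i\bx_i^\top$ and take expectations, exactly as in the proof of Lemma~\ref{lemma:V1_1}: $\EE[\nabla_\bV\ell(\theta^{(1)})]=\ell'(\theta^{(1)})\cdot\tfrac1{NK}\sum_{i=1}^{N-1}(\bPi_0)^{N-i}$ (using $\EE[\be_y\bx_i^\top]=\tfrac1K(\bPi^\top)^{N-i}$ and $\bPi^\top=\bPi_0$ in the $p=1$ case). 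Since $N=rK+1$ and $(\bPi_0)^K=\bI$, the sum collapses to $\sum_{i=1}^{N-1}(\bPi_0)^{N-i}=r\sum_{j=0}^{K-1}(\bPi_0)^j=r\,\mathbf{1}_{K\times K}$, i.e. the identity already used as Lemma~\ref{lemma:pi}. The update rule then gives $\bV^{(2)}=\bV^{(1)}-\eta\ell'(\theta^{(1)})\tfrac{r}{NK}\mathbf{1}_{K\times K}=-(\ell'(\theta^{(0)})+\ell'(\theta^{(1)}))\tfrac{\eta r}{NK}\mathbf{1}_{K\times K}$, as claimed.

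For $\bW^{(2)}$, since $\bW^{(1)}=\mathbf{0}$ we have $\bW_{12}^{(2)}=-\eta\ell'(\theta^{(1)})\,\EE[\bA^{(1)}]$ and $\bW_{22}^{(2)}=-\eta\ell'(\theta^{(1)})\,\EE[\bB^{(1)}]$, with $\bA^{(1)},\bB^{(1)}$ as in \eqref{eq:denote_AB} evaluated at $\theta^{(1)}$. The key simplification is $(\bV^{(1)})^\top\be_y=-\ell'(\theta^{(0)})\tfrac{\eta r}{NK}\mathbf{1}_K$; applying the centered-second-moment bracket to $\mathbf{1}_K$ and using $\bx_i^\top\mathbf{1}_K=1$ produces the cancellation $\tfrac1N-\tfrac{N-1}{N^2}=\tfrac1{N^2}$. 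This yields $\bA^{(1)}=-\ell'(\theta^{(0)})\tfrac{\eta r}{N^3K}\big(\sum_{i=1}^{N-1}\bx_i\big)\bp_N^\top$ and, since in $\bB$ the ``mean'' term runs over $i=1,\dots,N$, $\bB^{(1)}=-\ell'(\theta^{(0)})\tfrac{\eta r}{N^3K}\big(\sum_{i=1}^{N-1}\bp_i-(N-1)\bp_N\big)\bp_N^\top$ (already non-random). Taking expectations with $\EE\big[\sum_{i=1}^{N-1}\bx_i\big]=\tfrac{N-1}{K}\mathbf{1}_K=r\mathbf{1}_K$, and using $N-1=rK$ to rewrite $\tfrac{\eta^2 r}{N^3K}(N-1)\bp_N=\tfrac{\eta^2 r^2}{N^3}\bp_N$, gives precisely the stated expressions for $\bW_{12}^{(2)}$ and $\bW_{22}^{(2)}$.

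The calculation is elementary, so there is no single hard obstacle; the points that require care are (i) confirming that $\ell'(\theta^{(1)})$ is genuinely a constant so it can be pulled out of the expectations — this rests entirely on $\bV^{(1)}\propto\mathbf{1}_{K\times K}$ together with $\bW^{(1)}=\mathbf{0}$; (ii) not conflating the $\sum_{i=1}^{N}$ and $\sum_{i=1}^{N-1}$ ranges appearing in $\bB$, which is exactly what separates the $\sum_{i=1}^{N-1}\bp_i$ term from the isolated $\bp_N$ term; and (iii) invoking $N=rK+1$ in both places it is needed — to collapse $\sum_{i=1}^{N-1}(\bPi_0)^{N-i}$ and to convert $(N-1)\bp_N$ into the $\eta^2 r^2/N^3$ term.
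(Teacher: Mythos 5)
Your proposal is correct and follows essentially the same route as the paper's proof: plug the uniform softmax weights $\cS_i^{(1)}=1/N$ and $\bV^{(1)}\propto\mathbf{1}_{K\times K}$ from Lemma~\ref{lemma:V1_1} into the gradient formulas of Lemma~\ref{lemma:gradient}, collapse $\sum_{i=1}^{N-1}(\bPi_0)^{N-i}=r\,\mathbf{1}_{K\times K}$ via Lemma~\ref{lemma:pi} and $N-1=rK$, and read off $\bV^{(2)}$, $\bW_{12}^{(2)}$, $\bW_{22}^{(2)}$ from the update rule. Your explicit justification that $\be_y^{\top}\bV^{(1)}\bX\cS^{(1)}$ is sample-independent, so $\ell'(\theta^{(1)})$ is a constant that factors out of the expectations, is a point the paper uses only implicitly, and your algebra (the $\tfrac1N-\tfrac{N-1}{N^2}=\tfrac1{N^2}$ cancellation and the separate $\bp_N$ term from the $\sum_{i=1}^{N}$ range in $\bB$) matches the paper's computation exactly.
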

\begin{proof}[\textbf{Proof of Lemma~\ref{lemma:W1_1}}]
By Lemma~\ref{lemma:gradient}, we have
\begin{align*}
\EE[\nabla_{\bV} \ell(\theta^{(1)})] &= \ell'(\theta^{(1)}) \cdot \frac{1}{N}\sum_{i=1}^{N-1}\EE[\be_y \bx_i^{\top}]\\
&= \ell'(\theta^{(1)}) \cdot \frac{1}{N} \sum_{i=1}^{N-1}\EE[(\bPi^{\top})^{N-i}\bx_i\bx_i^{\top}]\\
&= \ell'(\theta^{(1)}) \cdot \frac{1}{NK} \sum_{i=1}^{N-1}(\bPi_{0})^{N-i}\\
&= \ell'(\theta^{(1)}) \cdot \frac{r}{NK} \textbf{1}_{K \times K},
\end{align*}
where the second equation is by the sampling method, the third equation is by $\EE[\bx_i\bx_i^{\top}]=\frac{1}{K}\Ib_K$ for $i \in [N-1]$, and the last equation is by Lemma~\ref{lemma:pi}.
Thus, we can get
\begin{align*}
\bV^{(2)} &= \bV^{(1)}-\eta\EE[\nabla_{\bV} \ell(\theta^{(1)})]\\
&= -\ell'(\theta^{(0)}) \cdot \frac{\eta r}{NK}\textbf{1}_{K \times K} -\ell'(\theta^{(1)}) \cdot \frac{\eta r}{NK}\textbf{1}_{K \times K}.
\end{align*}
By Lemma~\ref{lemma:gradient}, we have
\begin{align*}
\EE[\bA^{(1)}] &= \EE\left[\left(\sum_{i=1}^{N-1}\cS_i^{(1)}\bx_i\bx_i^{\top}(\bV^{(1)})^{\top}\be_y - \sum_{i_1=1}^{N-1}\sum_{i_2=1}^{N-1}\cS_{i_1}^{(1)}\cS_{i_2}^{(1)}\bx_{i_1}\bx_{i_2}^{\top}(\bV^{(1)})^{\top}\be_y\right)\bp_N^{\top}\right]\\
&= \EE\left[ -\ell'(\theta^{(0)}) \left(\frac{\eta r}{N^2K}\sum_{i=1}^{N-1}\bx_i\bx_i^{\top}\mathbf{1}_{K} - \frac{\eta r}{N^3K}\sum_{i_1=1}^{N-1}\sum_{i_2=1}^{N-1}\bx_{i_1}\bx_{i_2}^{\top}\mathbf{1}_{K}\right)\bp_N^{\top} \right]\\
&= \EE\left[ -\ell'(\theta^{(0)}) \left(\frac{\eta r}{N^2K}\sum_{i=1}^{N-1}\bx_i - \frac{\eta r}{N^3K}\sum_{i_1=1}^{N-1}\sum_{i_2=1}^{N-1}\bx_{i_1} \right)\bp_N^{\top} \right]\\
&= -\ell'(\theta^{(0)}) \left( \frac{\eta r^2}{N^2K}\mathbf{1}_{K} - \frac{\eta r^2(N-1)}{N^3K}\mathbf{1}_{K} \right) \bp_N^{\top}\\
&= -\ell'(\theta^{(0)}) \frac{\eta r^2}{N^3K}\mathbf{1}_{K}\bp_N^{\top},
\end{align*}
where the second equation is by Lemma~\ref{lemma:V1_1}, and the fourth equation is by the fact that all the $\bx_i$ is uniformly distributed in $\bE$. 
We also have
\begin{align*}
\EE[\bB^{(1)}] &= \EE\left[ \left( \sum_{i=1}^{N-1}\cS_i^{(1)}\bp_i\bx_i^{\top}(\bV^{(1)})^{\top}\be_y - \sum_{i=1}^{N}\cS_i^{(1)}\bp_i \cdot \sum_{i=1}^{N-1}\cS_i^{(1)}\bx_i^{\top}(\bV^{(1)})^{\top}\be_y \right) \bp_N^{\top} \right]\\
&= \EE\left[ -\ell'(\theta^{(0)}) \left( \frac{\eta r}{N^2K}\sum_{i=1}^{N-1}\bp_i\bx_i^{\top}\textbf{1}_K - \frac{\eta r}{N^3K}\sum_{i=1}^{N}\bp_i \cdot \sum_{i=1}^{N-1}\bx_i^{\top}\textbf{1}_K \right) \bp_N^{\top} \right]\\
&= -\ell'(\theta^{(0)}) \left( \frac{\eta r}{N^2K}\sum_{i=1}^{N-1}\bp_i - \frac{\eta r(N-1)}{N^3K}\sum_{i=1}^{N}\bp_i \right) \bp_N^{\top}\\
&= -\ell'(\theta^{(0)}) \left( \frac{\eta r}{N^3K}\sum_{i=1}^{N-1}\bp_i - \frac{\eta r^2}{N^3}\bp_N \right) \bp_N^{\top},
\end{align*}
where the second equation is by Lemma~\ref{lemma:V1_1}. Thus, we can get that 
\begin{align*}
\bW_{12}^{(2)} &= \bW_{12}^{(1)} - \eta \EE[\nabla_{\bW} \ell(\theta^{(1)})]_{12}\\
&= -\EE\left[ \eta \ell'(\theta^{(1)}) \cdot \bA^{(1)}\right]\\
&= \ell'(\theta^{(1)}) \ell'(\theta^{(0)}) \frac{\eta^2 r^2}{N^3K}\mathbf{1}_{K}\bp_N^{\top},
\end{align*}
and 
\begin{align*}
\bW_{22}^{(2)} &= \bW_{22}^{(1)} - \eta \EE[\nabla_{\bW} \ell(\theta^{(1)})]_{22}\\
&= -\EE\left[ \eta \ell'(\theta^{(1)}) \cdot \bB^{(1)}\right]\\
&= \ell'(\theta^{(1)}) \ell'(\theta^{(0)}) \left( \frac{\eta^2 r}{N^3K}\sum_{i=1}^{N-1}\bp_i - \frac{\eta^2 r^2}{N^3}\bp_N \right) \bp_N^{\top}.
\end{align*}
\end{proof}

Next, we can analyze the gradient descent dynamics over multiple iterations.

\begin{lemma}\label{lemma:weight_1}
    If $\bPi=\bPi_2$, then for any $t\geq 0$ and any sequence of learning rates $\{\eta_t\}$, it holds that 
    \begin{align*}
        \bV^{(t)} \propto \mathbf{1}_{K\times K}, ~\text{ and }~ \cS_1^{(t)} = \cS_2^{(t)} = \cdots = \cS_{N-1}^{(t)}.
    \end{align*}
\end{lemma}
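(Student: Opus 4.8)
The plan is to prove both claims by a single induction on $t$, simultaneously maintaining three invariants: (a) $\bV^{(t)}=c_t\,\mathbf{1}_{K\times K}$ for some scalar $c_t$; (b) $\bW_{12}^{(t)}=\lambda_t\,\mathbf{1}_K\bp_N^\top$ for some scalar $\lambda_t$; and (c) the pre-softmax logits $\big[\tilde{\bX}^\top\bW^{(t)}\tilde{\bx}_N\big]_j$ take a common value over $j\in\{1,\dots,N-1\}$ (equivalently $\cS_1^{(t)}=\cdots=\cS_{N-1}^{(t)}$). Since $\bW_{11}^{(t)}=\mathbf{0}$ and $\bW_{21}^{(t)}=\mathbf{0}$ for all $t\ge 1$ (the remark after \eqref{eq:denote_AB}), we have $\big[\tilde{\bX}^\top\bW^{(t)}\tilde{\bx}_N\big]_j=\bx_j^\top\bW_{12}^{(t)}\bp_N+\bp_j^\top\bW_{22}^{(t)}\bp_N$ for $j\le N-1$, so (b)--(c) will be the right objects to track for the softmax. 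The base cases follow from $\bW^{(0)}=\mathbf{0}$, $\bV^{(0)}=\mathbf{0}$ together with Lemmas~\ref{lemma:V1_1} and \ref{lemma:W1_1}, which verify (a)--(c) at $t=0,1,2$.

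The observation driving the induction is that, under (a)--(c), the network output is uninformative: for every admissible $(\bX,y)$ one has $f_{\theta^{(t)}}(\bX)=\bV^{(t)}\sum_{i=1}^{N-1}\cS_i^{(t)}\bx_i=c_t\big(1-\cS_N^{(t)}\big)\mathbf{1}_K$, because $\mathbf{1}_{K\times K}\bx_i=\mathbf{1}_K$ for every one-hot $\bx_i$, and by (b)--(c) the scalar $\cS_N^{(t)}$ (as well as the common value of $\cS_i^{(t)}$, $i\le N-1$) does not depend on $(\bX,y)$. Hence $\be_y^\top f_{\theta^{(t)}}(\bX)$ is a deterministic constant, and so is $\ell_t':=\ell'\!\big(\be_y^\top f_{\theta^{(t)}}(\bX)\big)$; this is precisely what lets us pull $\ell'$ out of the expectations in Lemma~\ref{lemma:gradient} for an arbitrary loss function $\ell$.

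For the step $t\mapsto t+1$, write $a_t$ for the common value of $\cS_i^{(t)}$ over $i\le N-1$. By Lemma~\ref{lemma:gradient}, $\EE[\nabla_{\bV}\ell(\theta^{(t)})]=\ell_t'\,a_t\sum_{i=1}^{N-1}\EE[\be_y\bx_i^\top]$; since $p=1$ gives $\be_y=(\bPi^\top)^{N-i}\bx_i=\bPi_0^{N-i}\bx_i$ and $\EE[\bx_i\bx_i^\top]=\tfrac1K\Ib_K$, this equals $\tfrac{\ell_t'a_t}{K}\sum_{i=1}^{N-1}\bPi_0^{N-i}$, which by $N-1=rK$ and Lemma~\ref{lemma:pi} is a scalar multiple of $\mathbf{1}_{K\times K}$; hence (a) holds at $t+1$. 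For (b): using $\bV^{(t)\top}\be_y=c_t\mathbf{1}_K$ and $\bx_i^\top\mathbf{1}_K=1$, the matrix $\bA^{(t)}$ collapses to $c_t\cS_N^{(t)}\big(\sum_{i=1}^{N-1}\cS_i^{(t)}\bx_i\big)\bp_N^\top$, whose expectation is $\tfrac{c_t\cS_N^{(t)}a_t(N-1)}{K}\mathbf{1}_K\bp_N^\top$, so the increment to $\bW_{12}$ is again a multiple of $\mathbf{1}_K\bp_N^\top$; consequently $\bx_j^\top\bW_{12}^{(t+1)}\bp_N=\lambda_{t+1}\,\bp_N^\top\bp_N$ is independent of $j$ for one-hot $\bx_j$. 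Finally, $\bB^{(t)}$ collapses (leaving no randomness) to $c_t\cS_N^{(t)}\big(\sum_{i=1}^{N-1}\cS_i^{(t)}\bp_i-(1-\cS_N^{(t)})\bp_N\big)\bp_N^\top$, and by the orthogonality $\bp_i^\top\bp_j=\tfrac{M+1}{2}\,\delta_{ij}$ from Lemma~\ref{lemma:position_prop}, for every $j\le N-1$ the increment $\bp_j^\top\bW_{22}^{(t+1)}\bp_N-\bp_j^\top\bW_{22}^{(t)}\bp_N=-\eta_t\ell_t'\,\bp_j^\top\bB^{(t)}\bp_N$ equals the $j$-independent quantity $-\eta_t\ell_t'c_t\cS_N^{(t)}a_t\tfrac{M+1}{2}\,\bp_N^\top\bp_N$ (the $\bp_N$ term drops because $\bp_j^\top\bp_N=0$). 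Adding the two contributions, $\big[\tilde{\bX}^\top\bW^{(t+1)}\tilde{\bx}_N\big]_j$ is the same for all $j\le N-1$, so (c) holds at $t+1$; this closes the induction and in particular yields $\bV^{(t)}\propto\mathbf{1}_{K\times K}$ and $\cS_1^{(t)}=\cdots=\cS_{N-1}^{(t)}$ for all $t$.

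I expect the main difficulty to be bookkeeping rather than conceptual. One must verify that $\cS_N^{(t)}$ — and hence $\ell_t'$ — is genuinely deterministic before treating it as a constant inside an expectation; this rests on $\bW_{11}^{(t)}=\bW_{21}^{(t)}=\mathbf{0}$ together with invariants (b)--(c). One must also track that the collapses of $\bA^{(t)}$ and $\bB^{(t)}$ apply the one-hot identity $\bx_i^\top\mathbf{1}_K=1$ only to sums over $\{1,\dots,N-1\}$, so the $\sum_{i=1}^{N}$ appearing in $\bB^{(t)}$ must be split as $\sum_{i=1}^{N-1}$ plus the $i=N$ term. Apart from these routine points, the real content lies entirely in the displayed collapse $f_{\theta^{(t)}}(\bX)=c_t(1-\cS_N^{(t)})\mathbf{1}_K$ and in the cyclic-shift identity $\sum_{i=1}^{N-1}\bPi_0^{N-i}=r\,\mathbf{1}_{K\times K}$.
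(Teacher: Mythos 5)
Your proposal is correct and follows essentially the same route as the paper's proof: an induction maintaining that $\bV^{(t)}\propto\mathbf{1}_{K\times K}$ and $\bW_{12}^{(t)}\propto\mathbf{1}_K\bp_N^{\top}$, collapsing $\bA^{(t)}$ and $\bB^{(t)}$ via $\bx_i^{\top}\mathbf{1}_K=1$, and using $\sum_{i=1}^{N-1}\bPi_0^{N-i}=r\,\mathbf{1}_{K\times K}$ (Lemma~\ref{lemma:pi}) together with the orthogonality and equal norms of the positional embeddings (Lemma~\ref{lemma:position_prop}) to keep the logits over positions $1,\dots,N-1$ equal. The only cosmetic difference is that you track the scalar-valued invariant ``the $\bW_{22}$-contributions to the logits are $j$-independent'' instead of the paper's explicit form $\bW_{22}^{(t)}=(\alpha_3^{(t)}\sum_{i<N}\bp_i-\alpha_4^{(t)}\bp_N)\bp_N^{\top}$, and you make explicit the (correct) justification that $\cS_N^{(t)}$ and $\ell'$ are deterministic, which the paper uses implicitly.
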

\begin{proof}[\textbf{Proof of Lemma~\ref{lemma:weight_1}}]
We use induction to prove that for some scalar $\alpha_1^{(t)},\alpha_2^{(t)},\alpha_3^{(t)},\alpha_4^{(t)}$, it holds that for $t \geq 2$, $\bV^{(t)}=\alpha_1^{(t)}\textbf{1}_{K \times K}$, $\bW_{12}^{(t)}=\alpha_2^{(t)}\textbf{1}_K\bp_N^{\top}$, and $\bW_{22}^{(t)}=\left(\alpha_3^{(t)}\sum_{i=1}^{N-1}\bp_i-\alpha_4^{(t)}\bp_N\right)\bp_N^{\top}$. By Lemma~\ref{lemma:W1_1}, we know that the hypothesis holds for $t=2$. Suppose that the hypothesis holds for $t=t'$. We aim to prove that the hypothesis holds for $t=t'+1$. We have
\begin{align}\label{eq:softmax_induction}
\Tilde{\bX}\bW^{(t')}\Tilde{\bx}_N &= \begin{bmatrix}
    \bx_1^{\top}\bW_{12}^{(t')}\bp_N + \bp_1^{\top}\bW_{22}^{(t')}\bp_N\\
    \bx_2^{\top}\bW_{12}^{(t')}\bp_N + \bp_2^{\top}\bW_{22}^{(t')}\bp_N\\
    \vdots\\
    \bx_{N-1}^{\top}\bW_{12}^{(t')}\bp_N + \bp_{N-1}^{\top}\bW_{22}^{(t')}\bp_N\\
    \bp_N^{\top}\bW_{22}^{(t')}\bp_N
\end{bmatrix}\notag\\
&= \begin{bmatrix}
    \alpha_2^{(t')}\bp_N^{\top}\bp_N + \alpha_3^{(t')}\bp_1^{\top}\bp_1\bp_N^{\top}\bp_N\\
    \alpha_2^{(t')}\bp_N^{\top}\bp_N + \alpha_3^{(t')}\bp_2^{\top}\bp_2\bp_N^{\top}\bp_N\\
    \vdots\\
    \alpha_2^{(t')}\bp_N^{\top}\bp_N + \alpha_3^{(t')}\bp_{N-1}^{\top}\bp_{N-1}\bp_N^{\top}\bp_N\\
    -\alpha_4^{(t')}(\bp_N^{\top}\bp_N)^2
\end{bmatrix}.
\end{align}
Since $\bp_1^{\top}\bp_1=\bp_2^{\top}\bp_2=\cdots=\bp_N^{\top}\bp_N$, we have $[\Tilde{\bX}\bW^{(t')}\Tilde{\bx}_N]_1=[\Tilde{\bX}\bW^{(t')}\Tilde{\bx}_N]_2=\cdots=[\Tilde{\bX}\bW^{(t')}\Tilde{\bx}_N]_{N-1}$. Thus, we can get that $\cS_1^{(t')}=\cS_2^{(t')}=\cdots=\cS_{N-1}^{(t')}:=s^{(t')}$. Then, we have
\begin{align*}
\EE[\nabla_{\bV} \ell(\theta^{(t')})] &= \EE\left[ \ell'(\theta^{(t')}) \cdot \be_y\sum_{i=1}^{N-1}\cS_i^{(t')}\bx_i^{\top}\right]\\
&= \ell'(\theta^{(t')}) s^{(t')} \sum_{i=1}^{N-1}\EE[\be_y \bx_i^{\top}]\\
&= \ell'(\theta^{(t')}) s^{(t')} \sum_{i=1}^{N-1}\EE[(\bPi^{\top})^{N-i}\bx_i\bx_i^{\top}]\\
&= \ell'(\theta^{(t')}) \frac{s^{(t')}}{K} \sum_{i=1}^{N-1}(\bPi_{0})^{N-i}\\
&= \ell'(\theta^{(t')}) \frac{s^{(t')}r}{K} \textbf{1}_{K \times K},
\end{align*}
where the second equation is by the induction, the third equation is by the sampling method, the fourth equation is by the fact that $\bx_i$ is uniformly distributed in $\bE$, and the last equation is by Lemma~\ref{lemma:pi}. Thus, we can get $\bV^{(t'+1)}=\bV^{(t')}-\eta^{(t')}\EE[\nabla_{\bV} \ell(\theta^{(t')})] \propto \textbf{1}_{K \times K}$. 
We also have
\begin{align*}
\EE[\bA^{(t')}] &= \EE\left[\left(\sum_{i=1}^{N-1}\cS_i^{(t')}\bx_i\bx_i^{\top}(\bV^{(t')})^{\top}\be_y - \sum_{i_1=1}^{N-1}\sum_{i_2=1}^{N-1}\cS_{i_1}^{(t')}\cS_{i_2}^{(t')}\bx_{i_1}\bx_{i_2}^{\top}(\bV^{(t')})^{\top}\be_y\right)\bp_N^{\top}\right]\\
&= \EE\left[ \left( \alpha_1^{(t')}s^{(t')} \sum_{i=1}^{N-1}\bx_i\bx_i^{\top}\mathbf{1}_{K} - \alpha_1^{(t')}(s^{(t')})^2 \sum_{i_1=1}^{N-1}\sum_{i_2=1}^{N-1}\bx_{i_1}\bx_{i_2}^{\top}\mathbf{1}_{K}\right)\bp_N^{\top} \right]\\
&= \EE\left[ \left(\alpha_1^{(t')}s^{(t')} \sum_{i=1}^{N-1}\bx_i - \alpha_1^{(t')}(s^{(t')})^2 \sum_{i_1=1}^{N-1}\sum_{i_2=1}^{N-1}\bx_{i_1} \right)\bp_N^{\top} \right]\\
&= \left( \frac{\alpha_1^{(t')}s^{(t')}(N-1)}{K}\mathbf{1}_{K} - \frac{\alpha_1^{(t')}(s^{(t')})^2(N-1)^2}{K}\mathbf{1}_{K} \right) \bp_N^{\top},
\end{align*}
where the second equation is by the induction, and the fourth equation is by the fact that all the $\bx_i$ is uniformly distributed in $\bE$. And,
\begin{align*}
\EE[\bB^{(t')}] &= \EE\left[ \left( \sum_{i=1}^{N-1}\cS_i^{(t')}\bp_i\bx_i^{\top}(\bV^{(t')})^{\top}\be_y - \sum_{i=1}^{N}\cS_i^{(t')}\bp_i \cdot \sum_{i=1}^{N-1}\cS_i^{(t')}\bx_i^{\top}(\bV^{(t')})^{\top}\be_y \right) \bp_N^{\top} \right]\\
&= \EE\left[ \left( \alpha_1^{(t')}s^{(t')}\sum_{i=1}^{N-1}\bp_i\bx_i^{\top}\textbf{1}_K - \alpha_1^{(t')}(s^{(t')})^2\sum_{i=1}^{N}\bp_i \cdot \sum_{i=1}^{N-1}\bx_i^{\top}\textbf{1}_K \right) \bp_N^{\top} \right]\\
&= \left( \alpha_1^{(t')}s^{(t')}\sum_{i=1}^{N-1}\bp_i - \alpha_1^{(t')}(s^{(t')})^2(N-1)\sum_{i=1}^{N}\bp_i \right) \bp_N^{\top},
\end{align*}
where the second equation is by the induction. Therefore, we can get 
\begin{align*}
\bW_{12}^{(t'+1)} &= \bW_{12}^{(t')} - \eta \EE[\nabla_{\bW} \ell(\theta^{(t')})]_{12}\\
&= \alpha_2^{(t')}\textbf{1}_K\bp_N^{\top} - \eta \ell'(\theta^{(t')}) \EE[\bA^{(t')}]\\
&= \alpha_2^{(t')}\textbf{1}_K\bp_N^{\top} - \eta \ell'(\theta^{(t')})\left( \frac{\alpha_1^{(t')}s^{(t')}(N-1)}{K} - \frac{\alpha_1^{(t')}(s^{(t')})^2(N-1)^2}{K} \right) \textbf{1}_K\bp_N^{\top}\\
&:= \alpha_2^{(t'+1)}\textbf{1}_K\bp_N^{\top},
\end{align*}
and
\begin{align*}
\bW_{22}^{(t'+1)} &= \bW_{22}^{(t')} - \eta \EE[\nabla_{\bW} \ell(\theta^{(t')})]_{22}\\
&= \left(\alpha_3^{(t')}\sum_{i=1}^{N-1}\bp_i-\alpha_4^{(t')}\bp_N\right)\bp_N^{\top} - \eta \ell'(\theta^{(t')}) \EE[\bB^{(t')}]\\
&= \left(\alpha_3^{(t')}\sum_{i=1}^{N-1}\bp_i-\alpha_4^{(t')}\bp_N\right)\bp_N^{\top} \\
&\quad - \eta \ell'(\theta^{(t')}) \left( \alpha_1^{(t')}s^{(t')}\sum_{i=1}^{N-1}\bp_i - \alpha_1^{(t')}(s^{(t')})^2(N-1)\sum_{i=1}^{N}\bp_i \right) \bp_N^{\top}\\
&:= \left(\alpha_3^{(t'+1)}\sum_{i=1}^{N-1}\bp_i-\alpha_4^{(t'+1)}\bp_N\right)\bp_N^{\top}.
\end{align*}
Therefore, by induction, we can conclude that for all $t \geq 2$, $\bV^{(t)}=\alpha_1^{(t)}\textbf{1}_{K \times K}$, $\bW_{12}^{(t)}=\alpha_2^{(t)}\textbf{1}_K\bp_N^{\top}$, and $\bW_{22}^{(t)}=\left(\alpha_3^{(t)}\sum_{i=1}^{N-1}\bp_i-\alpha_4^{(t)}\bp_N\right)\bp_N^{\top}$. Similar to \eqref{eq:softmax_induction}, we have $[\Tilde{\bX}\bW^{(t)}\Tilde{\bx}_N]_1=[\Tilde{\bX}\bW^{(t)}\Tilde{\bx}_N]_2=\cdots=[\Tilde{\bX}\bW^{(t)}\Tilde{\bx}_N]_{N-1}$, which implies that $\cS_1^{(t)}=\cS_2^{(t)}=\cdots=\cS_{N-1}^{(t)}$.
\end{proof}

\section{Auxiliary Lemmas}
In this section, we present some auxiliary lemmas. The following lemma states the properties of $\bPi_0$.

\begin{lemma}\label{lemma:pi}
By the definition of $\bPi_{0}$, it holds that $\bPi_0^K=\Ib_K$, $\bPi_0\bPi_0^{\top}=\Ib_K$, and $\sum_{k=1}^{K}\bPi_0^k=\textbf{1}_{K \times K}$.
\end{lemma}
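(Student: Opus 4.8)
The plan is to identify $\bPi_0$ with the cyclic shift permutation acting on the standard basis $\be_1,\ldots,\be_K$ of $\RR^K$, and then read off all three identities from this action. From the displayed form of $\bPi_0$ one sees that its $j$-th column is $\be_{j+1}$ for $j<K$ and $\be_1$ for $j=K$; equivalently $[\bPi_0]_{i,j}=\ind\{i\equiv j+1\pmod K\}$, so that $\bPi_0\be_j=\be_{\langle j+1\rangle_K}$ for every $j\in[K]$. In particular $\bPi_0$ is a permutation matrix realizing the $K$-cycle $j\mapsto\langle j+1\rangle_K$, and all three claims will follow from tracking this cycle.

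For $\bPi_0^K=\Ib_K$, I would iterate the shift: $\bPi_0^K\be_j=\be_{\langle j+K\rangle_K}=\be_j$ for all $j$, hence $\bPi_0^K=\Ib_K$. For $\bPi_0\bPi_0^{\top}=\Ib_K$, I note that the transpose of a permutation matrix is its inverse (permutation matrices are orthogonal); concretely $\bPi_0^{\top}\be_j=\be_{\langle j-1\rangle_K}$, so $\bPi_0\bPi_0^{\top}\be_j=\be_j$ and therefore $\bPi_0\bPi_0^{\top}=\Ib_K$. For $\sum_{k=1}^{K}\bPi_0^k=\textbf{1}_{K\times K}$, fix $j$ and observe that as $k$ ranges over $1,\ldots,K$ the index $\langle j+k\rangle_K$ ranges over every element of $[K]$ exactly once; hence $\sum_{k=1}^{K}\bPi_0^k\be_j=\sum_{k=1}^{K}\be_{\langle j+k\rangle_K}=\sum_{i=1}^{K}\be_i=\textbf{1}_K$. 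Since every column of $\sum_{k=1}^{K}\bPi_0^k$ equals $\textbf{1}_K$, we conclude $\sum_{k=1}^{K}\bPi_0^k=\textbf{1}_{K\times K}$.

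This is an entirely elementary lemma and I do not anticipate any genuine obstacle; the only bookkeeping to be careful about is keeping the modular index notation $\langle\cdot\rangle_K$ consistent with the convention fixed in Section~\ref{sec:setup}, so that the ``wrap-around'' entry in the top-right corner of $\bPi_0$ is correctly accounted for in each of the three computations.
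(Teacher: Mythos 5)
Your proof is correct and follows essentially the same route as the paper's: both arguments reduce to observing that $\bPi_0$ is the cyclic-shift permutation (the paper phrases it via the entries $[\bPi_0^k]_{i+k,i}=1$, you via the action $\bPi_0\be_j=\be_{\langle j+1\rangle_K}$ on basis vectors), and the only cosmetic difference is that the paper derives $\bPi_0\bPi_0^{\top}=\Ib_K$ through $\bPi_0^{\top}=\bPi_0^{K-1}$ while you use orthogonality of permutation matrices directly.
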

\begin{proof}[\textbf{Proof of Lemma~\ref{lemma:pi}}]
In this proof, the index $i$ larger than $K$ represents $i-K$. For $\bPi_0$, only $[\bPi_0]_{i+1,i}=1$ for $i\in [K]$ and other elements are 0. We can get that for $\bPi_0^k$, only $[\bPi_0^k]_{i+k,i}=1$ for $i \in [K]$ and other elements are 0. By this observation, we can derive that $\bPi_0^K=\Ib_K$ and $\sum_{k=1}^{K}\bPi_0^k=\textbf{1}_{K \times K}$. Also, we have $\bPi_0^{\top}=\bPi_0^{K-1}$, so we can get $\bPi_0\bPi_0^{\top}=\bPi_0^K=\Ib_K$.
\end{proof}

Further, the following three lemmas show some properties of $\bPi$ with transition probability $p$ satisfying $0<p<1$.

\begin{lemma}\label{lemma:entry_odd}
Assume that $K$ is odd. It holds that
\begin{align*}
\left| \left[\bPi^R\right]_{i,j}-\frac{1}{K} \right| \leq \exp\left(-\frac{8p(1-p)R}{K^2}\right).
\end{align*}
\end{lemma}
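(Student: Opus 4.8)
The plan is to diagonalize $\bPi$ by the discrete Fourier transform on $\mathbb{Z}/K\mathbb{Z}$ and then bound every non-trivial eigenvalue strictly away from modulus one. Recall $\bPi = p\bPi_0^{\top} + (1-p)\bPi_0$, where $\bPi_0$ is the cyclic shift matrix with $\bPi_0^K = \Ib_K$ (Lemma~\ref{lemma:pi}); hence $\bPi$ is a circulant, and therefore normal, matrix whose common orthonormal eigenbasis is $\mathbf{u}_\ell = K^{-1/2}\big(1,\omega^\ell,\dots,\omega^{(K-1)\ell}\big)^{\top}$, $\ell = 0,1,\dots,K-1$, with $\omega = e^{2\pi i/K}$. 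Since $\bPi_0\mathbf{u}_\ell = \omega^{-\ell}\mathbf{u}_\ell$, the eigenvalue of $\bPi$ attached to $\mathbf{u}_\ell$ is $\lambda_\ell = p\,\omega^{\ell} + (1-p)\,\omega^{-\ell}$ (the opposite orientation convention only relabels the $\lambda_\ell$'s and changes nothing below), and in particular $\lambda_0 = 1$ with $\mathbf{u}_0 \propto \mathbf{1}$. Writing the spectral expansion of $\bPi^R$ and peeling off the $\ell = 0$ term gives the exact identity
\begin{align*}
    [\bPi^R]_{i,j} - \frac{1}{K} = \frac{1}{K}\sum_{\ell=1}^{K-1}\lambda_\ell^{\,R}\,\omega^{(i-j)\ell},
\end{align*}
so that, since each $|\omega^{(i-j)\ell}| = 1$,
\begin{align*}
    \Big|[\bPi^R]_{i,j} - \tfrac{1}{K}\Big| \;\le\; \frac{K-1}{K}\max_{1\le\ell\le K-1}|\lambda_\ell|^{\,R} \;<\; \max_{1\le\ell\le K-1}|\lambda_\ell|^{\,R}.
\end{align*}
It remains only to bound $|\lambda_\ell|$ for $\ell \neq 0$.

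A one-line computation gives $|\lambda_\ell|^2 = p^2 + (1-p)^2 + 2p(1-p)\cos(4\pi\ell/K) = 1 - 4p(1-p)\sin^2(2\pi\ell/K)$. This is the only place the parity of $K$ is used: for odd $K$ and $\ell \in \{1,\dots,K-1\}$ one has $2\ell \not\equiv 0 \pmod K$, so the distance from $2\ell/K$ to the nearest integer is at least $1/K$, whence $|\sin(2\pi\ell/K)| \ge \sin(\pi/K) \ge 2/K$ by Jordan's inequality $\sin x \ge 2x/\pi$ on $[0,\pi/2]$. Substituting, $|\lambda_\ell|^2 \le 1 - 16p(1-p)/K^2$ for every $\ell \neq 0$, and since $16p(1-p)/K^2 \le 4/K^2 \le 1$ we may use $\sqrt{1-x} \le e^{-x/2}$ to get $|\lambda_\ell| \le \exp\big(-8p(1-p)/K^2\big)$. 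Raising to the $R$-th power and plugging into the display above yields $\big|[\bPi^R]_{i,j} - 1/K\big| \le \exp\big(-8p(1-p)R/K^2\big)$, which is the claim.

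No genuine obstacle arises; this is a standard spectral mixing estimate for the random walk on a cycle. The only subtlety — and the reason the estimate is split into an odd case here and an even case in Lemma~\ref{lemma:entry_even} — is that when $K$ is even the Fourier mode $\ell = K/2$ yields $\lambda_{K/2} = -1$, so $\bPi^R$ does not converge to the uniform matrix and one has to work with $\bPi^{2R}$ (or track the two bipartite classes) instead. For odd $K$ the bound $|\sin(2\pi\ell/K)| \ge 2/K$ rules this out, and the argument goes through verbatim. The remaining items — checking that $\bPi$ is normal (it is a polynomial in the unitary matrix $\bPi_0$) and fixing the $\bPi_0$ versus $\bPi_0^{\top}$ orientation convention — are routine and affect none of the magnitude bounds.
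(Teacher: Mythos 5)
Your proof is correct and follows essentially the same route as the paper: both diagonalize the circulant matrix $\bPi$ in the Fourier basis, compute $|\lambda_\ell|^2 = 1-4p(1-p)\sin^2(2\pi\ell/K)$, use oddness of $K$ to get $|\sin(2\pi\ell/K)|\ge \sin(\pi/K)\ge 2/K$ for $\ell\neq 0$, and sum the $K-1$ non-trivial modes with coefficients of modulus $1/K$ to obtain the $\exp(-8p(1-p)R/K^2)$ bound. Your write-up is, if anything, slightly cleaner (explicit Fourier inversion identity and the $\sqrt{1-x}\le e^{-x/2}$ step), but there is no substantive difference in method.
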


\begin{proof}[\textbf{Proof of Lemma~\ref{lemma:entry_odd}}]
$\bPi$ has eigenvalues $\lambda_{0},...,\lambda_{K-1}$, where
\begin{align*}
\lambda_{k} = pe^{-\frac{2\pi \mathrm{i} k}{K}} + (1-p)e^{\frac{2\pi \mathrm{i} k}{K}} = \cos\left(\frac{2\pi k}{K}\right) + \mathrm{i}(1-2p)\sin\left(\frac{2\pi k}{K}\right)
\end{align*}
with
\begin{align*}
|\lambda_{k}| = \sqrt{1-4p(1-p)\sin^{2}\left(\frac{2\pi k}{K}\right)} \leq \sqrt{1-\frac{16p(1-p)}{K^2}} \leq 1-\frac{8p(1-p)}{K^2}
\end{align*}
for $k \neq 0$, where the first inequality is by $\sin(2\pi k/K) \geq \sin(\pi/K) \geq 2/K$. The eigendecomposition of each entry in $\bPi$ can be written as
\begin{align*}
[\bPi]_{i,j} = \sum_{k=0}^{K-1} c_{k,i,j}\lambda_{k} = \frac{1}{K}\lambda_0 + \sum_{k\not=0} c_{k,i,j}\lambda_{k},
\end{align*}
where $c_{k,i,j} = \frac{1}{\sqrt{K}} e^{2\pi\mathrm{i}(i-1)k/K} \cdot \frac{1}{\sqrt{K}} e^{2\pi\mathrm{i}(j-1)k/K} = \frac{1}{K}e^{2\pi\mathrm{i}(i+j-2)k/K}$.
Then, we can get that 
\begin{align*}
[\bPi^{R}]_{i,j} &= \sum_{k=0}^{K-1} c_{k,i,j}\lambda_{k}^{R} = \frac{1}{K} + \sum_{k\not=0} c_{k,i,j}\lambda_{k}^{R}
\end{align*}

Thus, 
\begin{align*}
\left|[\bPi^{R}]_{i,j}-\frac{1}{K}\right| &= \left| \sum_{k\not=0} c_{k,i,j}\lambda_{k}^{R} \right| \\
&\leq \sum_{k\not=0} |c_{k,i,j}||\lambda_{k}|^{R} \\
&\leq \left(1-\frac{8p(1-p)}{K^2}\right)^{R}\sum_{k\not=0} |c_{k,i,j}| \\
&\leq \exp\left(-\frac{8p(1-p)R}{K^2}\right),
\end{align*}
where the second inequality is by the bound of the absolute values of the eigenvalues, and the last inequality is by $(1-t)^{R} \leq e^{-Rt}$ for any $0<t<1$ and $|c_{k,i,j}|=\frac{1}{K}$ for all $k,i,j$.
\end{proof}

\begin{lemma}\label{lemma:entry_even}
Assume that $K$ is even. For the case that $R$ is even,
\begin{align*}
&\left[\bPi^R\right]_{i,j} = 0 ~\text{for odd}~ (j-i);\\
&\left| \left[\bPi^R\right]_{i,j}-\frac{2}{K} \right| \leq \exp\left(-\frac{8p(1-p)R}{K^2}\right) ~\text{for even}~ (j-i).
\end{align*}
For the case that $R$ is odd,
\begin{align*}
&\left| \left[\bPi^R\right]_{i,j}-\frac{2}{K} \right| \leq \exp\left(-\frac{8p(1-p)R}{K^2}\right) ~\text{for odd}~ (j-i);\\
&\left[\bPi^R\right]_{i,j} = 0 ~\text{for even}~ (j-i).
\end{align*}
\end{lemma}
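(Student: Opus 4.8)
The plan is to follow the proof of Lemma~\ref{lemma:entry_odd} almost verbatim, accounting for the one structural difference that appears when $K$ is even. Recall that $\bPi$ is circulant with eigenvalues $\lambda_k = p e^{-2\pi\mathrm{i}k/K} + (1-p)e^{2\pi\mathrm{i}k/K} = \cos(2\pi k/K) + \mathrm{i}(1-2p)\sin(2\pi k/K)$ for $k = 0,\ldots,K-1$, and $[\bPi^R]_{i,j} = \sum_{k=0}^{K-1} c_{k,i,j}\lambda_k^R$ with $c_{k,i,j} = \frac1K e^{2\pi\mathrm{i}(i+j-2)k/K}$, so $|c_{k,i,j}| = 1/K$. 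When $K$ is even there are \emph{two} unit-modulus eigenvalues, $\lambda_0 = 1$ and $\lambda_{K/2} = p e^{-\pi\mathrm{i}} + (1-p)e^{\pi\mathrm{i}} = -1$, whereas for every $k \in \{1,\dots,K-1\}\setminus\{K/2\}$ one still has $|\sin(2\pi k/K)| \geq \sin(2\pi/K) \geq 2/K$ (using $\sin x \geq 2x/\pi$ on $[0,\pi/2]$ and $K\geq 4$), hence $|\lambda_k| = \sqrt{1-4p(1-p)\sin^2(2\pi k/K)} \leq \sqrt{1-16p(1-p)/K^2} \leq 1 - 8p(1-p)/K^2$, exactly as in the odd case.

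First I would dispose of the two ``$=0$'' assertions by a parity (bipartiteness) argument rather than through the spectral sum. For $K$ even the cycle is bipartite, with the even-ID and odd-ID nodes forming the two sides, and every single step of the walk switches sides; therefore a length-$R$ walk from $i$ can only reach $j$ with $j-i\equiv R\pmod 2$. Hence $[\bPi^R]_{i,j}=0$ whenever $j-i$ and $R$ have opposite parity, which is precisely the content of the two vanishing cases.

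For the remaining entries (those with $j-i\equiv R\pmod 2$) I would split off the two non-decaying modes:
\begin{align*}
[\bPi^R]_{i,j} &= \frac1K\lambda_0^R + \frac1K e^{\pi\mathrm{i}(i+j-2)}\lambda_{K/2}^R + \sum_{k\neq 0,\,K/2} c_{k,i,j}\lambda_k^R\\
&= \frac1K\bigl(1+(-1)^{R+j-i}\bigr) + \sum_{k\neq 0,\,K/2}c_{k,i,j}\lambda_k^R,
\end{align*}
where I used $\lambda_0=1$, $\lambda_{K/2}=-1$, and $e^{2\pi\mathrm{i}(i+j-2)(K/2)/K}=(-1)^{i+j-2}=(-1)^{j-i}$. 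Since $R+(j-i)$ is even, the leading term equals $2/K$, and bounding the tail gives
\begin{align*}
\Bigl|[\bPi^R]_{i,j}-\tfrac2K\Bigr| \leq \sum_{k\neq0,\,K/2}|c_{k,i,j}|\,|\lambda_k|^R \leq (K-2)\cdot\tfrac1K\cdot\Bigl(1-\tfrac{8p(1-p)}{K^2}\Bigr)^R \leq \exp\Bigl(-\tfrac{8p(1-p)R}{K^2}\Bigr),
\end{align*}
using $(1-t)^R\le e^{-Rt}$. Combined with the previous paragraph this yields all four cases of the statement.

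The only point requiring care is purely bookkeeping: one must correctly evaluate the phase $e^{2\pi\mathrm{i}(i+j-2)(K/2)/K}$ as $(-1)^{j-i}$ so that the $k=0$ and $k=K/2$ contributions reinforce to $2/K$ exactly when $R$ and $j-i$ share parity and cancel to $0$ otherwise, and one should justify the \emph{exact} vanishing via bipartiteness (it is not transparently visible from termwise cancellation in the spectral expansion). Beyond that, the argument is a direct transcription of the odd-$K$ case.
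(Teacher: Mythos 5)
Your proof is correct, and for the main estimate it is the same argument as the paper's: diagonalize the circulant $\bPi$, note that for $K$ even there are two unit-modulus eigenvalues $\lambda_0=1$ and $\lambda_{K/2}=-1$ while $|\lambda_k|\le 1-8p(1-p)/K^2$ for $k\neq 0,K/2$, split off the two non-decaying modes whose coefficients combine to $\frac1K\bigl(1+(-1)^{R+j-i}\bigr)$, and bound the remaining sum by $\exp(-8p(1-p)R/K^2)$ using $|c_{k,i,j}|=1/K$ and $(1-t)^R\le e^{-Rt}$. Where you genuinely differ is the exact-vanishing cases: you argue by bipartiteness of the even cycle (each step flips node parity, and since $K$ is even the mod-$K$ reduction preserves parity, so $[\bPi^R]_{i,j}>0$ forces $j-i\equiv R\pmod 2$), whereas the paper gets the zeros directly from the spectral expansion by pairing $k=t$ with $k=t+K/2$, using $\lambda_{t+K/2}=-\lambda_t$ and $c_{t+K/2,i,j}=(-1)^{i+j}c_{t,i,j}$ so that each pair cancels when $i+j+R$ is odd. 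Both routes are valid; your combinatorial argument is more elementary and makes the structural reason for the zeros transparent, while the paper's pairwise cancellation keeps everything inside the single eigendecomposition computation (so your parenthetical remark that the vanishing is ``not transparently visible from termwise cancellation'' is not quite fair --- it does follow by exactly such a pairing --- but this is a side comment, not a flaw in your proof). One small bookkeeping point in your favor: your phase evaluation $e^{2\pi\mathrm{i}(i+j-2)(K/2)/K}=(-1)^{i+j}=(-1)^{j-i}$ also reconciles the paper's parity condition on $i+j+R$ with the statement's condition on $j-i$, since $i+j$ and $j-i$ differ by the even number $2i$.
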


\begin{proof}[\textbf{Proof of Lemma~\ref{lemma:entry_even}}]
$\bPi$ has eigenvalues $\lambda_{0},...,\lambda_{K-1}$, where
\begin{align*}
\lambda_{k} = pe^{-\frac{2\pi \mathrm{i} k}{K}} + (1-p)e^{\frac{2\pi \mathrm{i} k}{K}} = \cos\left(\frac{2\pi k}{K}\right) + \mathrm{i}(1-2p)\sin\left(\frac{2\pi k}{K}\right)
\end{align*}
with
\begin{align*}
|\lambda_{k}| = \sqrt{1-4p(1-p)\sin^{2}\left(\frac{2\pi k}{K}\right)} \leq \sqrt{1-\frac{16p(1-p)}{K^2}} \leq 1-\frac{8p(1-p)}{K^2}.
\end{align*}
for $k \neq 0,K/2$, where the first inequality is by $\sin(2\pi k/K) \geq \sin(\pi/K) \geq 2/K$. The eigendecomposition of each entry in $\bPi$ can be written as
\begin{align*}
[\bPi]_{i,j} = \sum_{k=0}^{K-1} c_{k,i,j}\lambda_{k} = \frac{1}{K}\lambda_0 + \frac{(-1)^{i+j}}{K}\lambda_{K/2} + \sum_{k\not=0,K/2} c_{k,i,j}\lambda_{k},
\end{align*}
where $c_{k,i,j} = \frac{1}{\sqrt{K}} e^{2\pi\mathrm{i}(i-1)k/K} \cdot \frac{1}{\sqrt{K}} e^{2\pi\mathrm{i}(j-1)k/K} = \frac{1}{K}e^{2\pi\mathrm{i}(i+j-2)k/K}$.
Then, we can get that 
\begin{align*}
[\bPi^{R}]_{i,j} &= \sum_{k=0}^{K-1} c_{k,i,j}\lambda_{k}^{R} = \frac{1}{K}\lambda_0^{R} + \frac{(-1)^{i+j}}{K}\lambda_{K/2}^{R} + \sum_{k\not=0,K/2} c_{k,i,j}\lambda_{k}^{R} \\
&= \frac{1}{K} + \frac{(-1)^{i+j+R}}{K} + \sum_{k\not=0,K/2} c_{k,i,j}\lambda_{k}^{R}
\end{align*}

When $i+j+R$ is odd, it is easy to obtain that
\begin{align*}
c_{t,i,j}\lambda_{t}^{R} + c_{t+\frac{K}{2},i,j}\lambda_{t+\frac{K}{2}}^{R} &= \frac{1}{K}e^{2\pi\mathrm{i}(i+j-2)t/K}\lambda_{t}^{R} + \frac{1}{K}(-1)^{i+j-2}e^{2\pi\mathrm{i}(i+j-2)t/K}(-\lambda{t})^{R}\\
&= \frac{1}{K}e^{2\pi\mathrm{i}(i+j-2)t/K}\lambda_{t}^{R} + (-1)^{i+j+R}\cdot\frac{1}{K}e^{2\pi\mathrm{i}(i+j-2)t/K}\lambda_{t}^{R} \\
&= 0,
\end{align*}
which indicates that $[\bPi^{R}]_{i,j}=0$.

When $i+j+R$ is even, 
\begin{align*}
\left|[\bPi^{R}]_{i,j}-\frac{2}{K}\right| &= \left| \sum_{k\not=0,K/2} c_{k,i,j}\lambda_{k}^{R} \right| \\
&\leq \sum_{k\not=0,K/2} |c_{k,i,j}||\lambda_{k}|^{R} \\
&\leq \left(1-\frac{8p(1-p)}{K^2}\right)^{R}\sum_{k\not=0,K/2} |c_{k,i,j}| \\
&\leq \exp\left(-\frac{8p(1-p)R}{K^2}\right),
\end{align*}
where the second inequality is by the bound of the absolute values of the eigenvalues, and the last inequality is by $(1-t)^{R} \leq e^{-Rt}$ for any $0<t<1$ and $|c_{k,i,j}|=\frac{1}{K}$ for all $k,i,j$.
\end{proof}

\begin{lemma}\label{lemma:trace_bound}
For $\bPi$ with $0<p<1$, $N=\omega(p^{-1}(1-p)^{-1})$ and $k \in \{2,...,N-1\}$, it holds that
\begin{align*}
\left[\sum_{i=1}^{N-1}\bPi^{i}\cdot\bPi^{\top}\right]_{1,1} \geq \left[\sum_{i=1}^{N-1}\bPi^{i}\cdot(\bPi^{\top})^{k}\right]_{1,1} + C_p,
\end{align*}
where $C_p$ is a positive constant only depending on $p$.
\end{lemma}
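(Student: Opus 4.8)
The plan is to first translate the statement into a cleaner probabilistic form and then attack that. By Lemma~\ref{lemma:V1_2}, $(\bV^{(1)})^\top=\frac{\eta}{\epsilon NK}\bGamma$ with $\bGamma:=\sum_{i=1}^{N-1}\bPi^i$, so the quantity actually needed downstream is $[\bGamma\bPi^\top]_{1,1}-[\bGamma(\bPi^\top)^k]_{1,1}\ge C_p$ for all $k\in\{2,\dots,N-1\}$. Let $(X_t)_{t\ge0}$ be the cyclic random walk with $X_0=1$ and let $g_j:=\bGamma_{1,j}=\sum_{i=1}^{N-1}[\bPi^i]_{1,j}$ be its expected occupation measure (the expected number of visits to $j$ within $N-1$ steps). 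Since $\bPi_{1,j}=\PP(X_1=j)$ and $[\bPi^k]_{1,j}=\PP(X_k=j)$, we have $[\bGamma\bPi^\top]_{1,1}=\EE[g(X_1)]$ and $[\bGamma(\bPi^\top)^k]_{1,1}=\EE[g(X_k)]$, so the target is $D(k):=\EE[g(X_1)-g(X_k)]\ge C_p$ for $k\ge2$. Observe $D(1)=0$: the content is that going from one step to any number $\ge2$ of steps \emph{strictly} lowers the expected occupation. Heuristically, $g$ is concentrated on the arc near node $1$ in the drift direction, $X_1$ is always a neighbour of $1$, while $X_k$ with $k\ge2$ has had a chance to drift or diffuse off that arc.

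To make this quantitative I would render $g$ explicit via the circulant structure of $\bPi$. Diagonalizing as in Lemmas~\ref{lemma:entry_odd} and \ref{lemma:entry_even}, with $\omega=e^{2\pi\mathrm{i}/K}$ and $\lambda_l=p\omega^{-l}+(1-p)\omega^l$ (so $\lambda_0=1$, $\lambda_{K-l}=\overline{\lambda_l}$, and $\lambda_{K/2}=-1$ for even $K$), one has $[\bPi^i]_{1,j}=\frac1K\sum_l\omega^{(j-1)l}\lambda_l^i$, and hence the identity $[\bPi^i(\bPi^\top)^m]_{1,1}=\frac1K\sum_l\lambda_l^i\overline{\lambda_l}^{\,m}$. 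Summing the geometric series over $i$ (the $l=0$ mode cancels, and for even $K$ the $l=K/2$ mode contributes a term one checks directly to be $\ge0$) gives
\[ D(k)=\frac1K\sum_{l\neq0,\,K/2}\frac{|\lambda_l|^2(1-\lambda_l^{\,N-1})(1-\overline{\lambda_l}^{\,k-1})}{1-\lambda_l}\;+\;(\text{nonneg.}). \]
By Lemmas~\ref{lemma:entry_odd},~\ref{lemma:entry_even}, for $l\neq0,K/2$ we have $|\lambda_l|\le1-8p(1-p)/K^2$ and $|1-\lambda_l|\ge1-\cos(2\pi/K)$, so the $\lambda_l^{N-1}$-terms are $\mathcal{O}(K^4 e^{-8p(1-p)(N-1)/K^2})$ and hence negligible because $N=\omega(p^{-1}(1-p)^{-1})$. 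Pairing $l\leftrightarrow K-l$, it remains to prove
\[ \widehat D(k):=\frac2K\sum_{0<l<K/2}\frac{|\lambda_l|^2}{|1-\lambda_l|^2}\,\mathrm{Re}\!\big[(1-\lambda_l)(1-\lambda_l^{\,k-1})\big]\;\ge\;C_p>0\qquad\text{for all }k\ge2. \]

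The bound on $\widehat D(k)$ is the crux, and it is \emph{not} termwise: when $p\neq\frac12$ some of the small-index modes have $\mathrm{Re}[(1-\lambda_l)(1-\lambda_l^{k-1})]<0$ for suitable $k$ (the more so as $p\to0,1$, consistent with $C_p\to0$ at the endpoints), so positivity of $\widehat D(k)$ is a genuine cancellation across modes. I would control it back in the probabilistic picture: the telescoping identity $(\Ib-\bPi)\bGamma=\bPi-\bPi^N$ shows that $g-\frac{N-1}{K}\mathbf1$ is, up to vanishing corrections, the bounded potential of the walk with unit sources at the one-step-reachable nodes $\{2,K\}$; and from the decomposition $D(k)=\sum_{j'}\bPi_{1,j'}\big(g_{j'}-\EE[g(X_{k-1})\mid X_0=j']\big)$ one reduces to showing this potential at $\{2,K\}$ beats its $(k-1)$-step average by a $p$-dependent margin. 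The key input is the elementary fact that for every $k\ge2$ the $(k-1)$-step law of the walk puts mass $\Omega(\min\{p,1-p\}^2)$ on nodes that are \emph{not} neighbours of $1$ — proved by a short case analysis on the net displacement $X_{k-1}-X_0$ modulo $K$, where the hypothesis $K>4$ keeps the non-neighbour set well behaved — which stops the $(k-1)$-step smoothing from recovering the full value $\EE[g(X_1)]$.

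The principal obstacle is making this estimate uniform over the entire admissible range $k\in\{2,\dots,N-1\}$, with an honest dependence of $C_p$ on $p$ near the endpoints. The two extremes are comparatively easy — small $k$, where $X_k$ is still near $1$ but so is $X_1$, so the gap is tiny and merely has to be shown strictly positive; and large $k$, where $X_k$ has mixed and the gap collapses to the fixed positive quantity $\EE[g(X_1)]-\frac{N-1}{K}$ — whereas the intermediate values of $k$, where one must genuinely bound the cancellation in $\widehat D(k)$ (equivalently, how an arbitrary $(k-1)$-step kernel resamples the potential $g$), are where I expect the bulk of the technical effort to lie.
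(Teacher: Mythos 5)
Your reformulation $D(k)=\EE[g(X_1)]-\EE[g(X_k)]$ with $g_j=\big[\sum_{i=1}^{N-1}\bPi^i\big]_{1,j}$, the circulant diagonalization, and the removal of the $\lambda_l^{N-1}$ tails via the mixing bounds are all correct, but the argument stops exactly where the lemma's content lies: the uniform-in-$k$ lower bound $\widehat D(k)\geq C_p$ (equivalently, $g_{j'}-\EE_{j'}[g(X_{k-1})]\geq C_p$ for the neighbours $j'\in\{2,K\}$) is described as a plan rather than proved, and you acknowledge this in your final paragraph. Moreover, the one concrete ingredient you do state — that for every $k\geq 2$ the $(k-1)$-step law puts mass $\Omega(\min\{p,1-p\}^2)$ off the neighbours of node $1$ — is not sufficient on its own: mass off the neighbours only produces a loss if you also know that $g$ there is smaller by a $p$-dependent constant, and it can be offset by the mass the walk places on sites where $g$ is comparable to, or larger than, its value at the sources (node $1$ itself being the obvious candidate). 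No quantitative control of the profile of $g$ (its values at $\{2,K\}$ versus node $1$ and versus distant nodes, uniformly in $N$ and $k$) is established, so the delicate cancellation you correctly identify as the crux remains unresolved. As it stands this is a proof sketch with the central estimate missing.

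For comparison, the paper sidesteps the mode-by-mode cancellation entirely through the factorization $\sum_{i=1}^{N-1}\bPi^i(\bPi^\top)^k=\be_1^\top\bGamma(N)\cdot\bPi(\bPi^\top)^k\be_1$ with $\bGamma(N)=\sum_{i=0}^{N-2}\bPi^i$, and two elementary inductions: (i) diagonal dominance $\bGamma(N)_{1,1}\geq\bGamma(N)_{i,j}+(1-p)^{N-2}$, upgraded to a constant gap by cutting the sum at a constant $N_0=\omega(p^{-1}(1-p)^{-1})$ and invoking Lemmas~\ref{lemma:entry_odd} and~\ref{lemma:entry_even}, which also show the off-diagonal entries of the first row of $\bGamma(N)$ are nearly equal; and (ii) the comparison $[\bPi\bPi^\top]_{1,1}-[\bPi(\bPi^\top)^k]_{1,1}\geq(p^2+(1-p)^2)(1-3p(1-p))$ for all $k\geq 2$, proved by an induction that pairs consecutive powers to handle parity. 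These two facts are then combined directly in the bilinear form to produce $C_p$. Step (ii) is precisely the uniform-in-$k$ statement your outline lacks; if you wish to stay within your probabilistic/potential-theoretic picture, you would need to prove an estimate of that strength (a constant gap between the one-step and $k$-step backward-return weights, uniform over $k\in\{2,\dots,N-1\}$), after which the rest of your argument could be completed.
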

\begin{proof}[\textbf{Proof of Lemma~\ref{lemma:trace_bound}}]
Without loss of generality, we assume $\frac{1}{2} \leq p<1$. We observe that 
\begin{align*}
&\left[\sum_{i=1}^{N-1}\bPi^{i}\bPi^{\top}\right]_{1,1} =  \be_{1}^{\top}\sum_{i=1}^{N-1}\bPi^{i}\bPi^{\top}\be_{1} = \be_{1}^{\top}\sum_{i=0}^{N-2}\bPi^{i}\cdot\bPi\bPi^{\top}\be_{1}, \\
&\left[\sum_{i=1}^{N-1}\bPi^{i}(\bPi^{\top})^{k}\right]_{1,1} =  \be_{1}^{\top}\sum_{i=1}^{N-1}\bPi^{i}(\bPi^{\top})^{k}\be_{1} = \be_{1}^{\top}\sum_{i=0}^{N-2}\bPi^{i}\cdot\bPi(\bPi^{\top})^{k}\be_{1}.
\end{align*}
Denote $\bGamma(N)=\sum_{i=0}^{N-2}\bPi^{i}$. We use induction to prove that 
\begin{align*}
\bGamma(N)_{1,1} \geq \bGamma(N)_{i,j}+(1-p)^{N-2} \text{ for } i \neq j.
\end{align*}
It can be easily obtained that for $N=2$, $\bGamma(2)_{1,1}=\bGamma(2)_{i,j}+1$ for any $i \neq j$. Suppose that the induction hypothesis holds for $\bGamma(N-1)$. Then, for $\bGamma(N)$, by the definition of $\bGamma(N)$, we can get $\bGamma(N)=\bPi\cdot\bGamma(N-1)+\Ib$. Thus, based on the definition of $\bPi$, we obtain that for $i \neq j$, 
\begin{align*}
\bGamma(N)_{i,j} = p \cdot \bGamma(N-1)_{i,j-1} + (1-p)\cdot \bGamma(N-1)_{i,j+1}.
\end{align*}
When $|i-j| \geq 2$, by induction, we have
\begin{align*}
\bGamma(N)_{i,j} &= p \cdot \bGamma(N-1)_{i,j-1} + (1-p)\cdot \bGamma(N-1)_{i,j+1} \\
&\leq p(\bGamma(N-1)_{1,1}-(1-p)^{N-3}) + (1-p)(\bGamma(N-1)_{1,1}-(1-p)^{N-3}) \\
&= \bGamma(N-1)_{1,1} - (1-p)^{N-3} \\
&\leq \bGamma(N-1)_{1,1} - (1-p)^{N-2}.
\end{align*}
When $i=j+1$, we have
\begin{align*}
\bGamma(N)_{i,j} &= p \cdot \bGamma(N-1)_{i,j-1} + (1-p)\cdot \bGamma(N-1)_{i,j+1} \\
&\leq p(\bGamma(N-1)_{1,1}-(1-p)^{N-3}) + (1-p)\bGamma(N-1)_{1,1} \\
&= \bGamma(N-1)_{1,1} - p(1-p)^{N-3} \\
&\leq \bGamma(N-1)_{1,1} - (1-p)^{N-2}.
\end{align*}
When $i=j-1$, we have
\begin{align*}
\bGamma(N)_{i,j} &= p \cdot \bGamma(N-1)_{i,j-1} + (1-p)\cdot \bGamma(N-1)_{i,j+1} \\
&\leq p\bGamma(N-1)_{1,1} + (1-p)(\bGamma(N-1)_{1,1}-(1-p)^{N-3}) \\
&= \bGamma(N-1)_{1,1} - (1-p)^{N-2}.
\end{align*}
Obviously, $\bGamma(N-1)_{1,1} \leq \bGamma(N)_{1,1}$ as $\bGamma(N)=\bGamma(N-1)+\bPi^{N-2}$. Therefore, we prove that the induction hypothesis holds for $\bGamma(N)$, which completes the induction. Thus, for all $N \geq 2$, the diagonal element is the largest entry in $\bGamma(N)$. We analyze this result from the perspective of random walks. Given that $\bPi^{k}$ is the $k$-step transition matrix in the random walk task, the entry $\bGamma(N)_{i,j}=\sum_{l=0}^{N-2}\bPi^{l}$ can be regarded as the expected number of visits to state $j$ within $N-1$ steps, starting from state $i$. Since the largest entry in $\bGamma(N)$ is found at the diagonal, we can conclude that the expected number of visits back to state $i$ within $N-1$ steps, starting from state $i$, is the largest. 
From Lemma~\ref{lemma:entry_odd} and \ref{lemma:entry_even}, we know that for any $i,j$, 
\begin{align*}
\left| [\bPi^{R}]_{i,j} + [\bPi^{R+1}]_{i,j}-\frac{2}{K} \right| \leq \exp\left( -\frac{8p(1-p)R}{K^2} \right)+\exp\left( -\frac{8p(1-p)(R+1)}{K^2} \right).
\end{align*}
Hence, we get that for any $i,j$, 
\begin{align*}
\left| [\bPi^{R}]_{1,1}+[\bPi^{R+1}]_{1,1}-[\bPi^{R}]_{i,j}-[\bPi^{R+1}]_{i,j} \right| \leq 2\left[\exp\left( -\frac{8p(1-p)R}{K^2} \right)+\exp\left( -\frac{8p(1-p)(R+1)}{K^2} \right)\right].
\end{align*}
Thus, for a constant integer $N_0=\omega(p^{-1}(1-p)^{-1})$ and $N>N_0$, we can get for $i \neq j$, 
\begin{align*}
\bGamma(N)_{1,1}-\bGamma(N)_{i,j} \geq (1-p)^{N_0-2} - c_{p0}\exp(-N_0),
\end{align*}
where $c_{p0}$ are constants related to $p$. In addition, we can get $|\bGamma(N)_{1,j_1}-\bGamma(N)_{1,j_2}| \leq c_{p0}\exp(-N_0)$ for any $j_1 \neq 1, j_2 \neq 1$.

Next, we focus on $\bPi(\bPi^{\top})^{k}$ with $k=1,2,...,N-1$. Denote $\bGamma_2(k)=\bPi(\bPi^{\top})^{k}$. Similar to the analysis of induction above, we can use induction to prove that for $i \neq j$,
\begin{align*}
\bGamma_2(2k-1)_{1,1}+\bGamma_2(2k)_{1,1} \geq \bGamma_2(2k-1)_{i,j}+\bGamma_2(2k)_{i,j} - (1-p^2)^{k}.
\end{align*}
The proof can be directly extended from the proof for $\bGamma(N)$ above. Then, we use this result to prove 
\begin{align*}
[\bGamma_2(1)]_{1,1}+[\bGamma_2(2)]_{1,1} \geq [\bGamma_2(2k-1)]_{1,1}+[\bGamma_2(2k)]_{1,1}
\end{align*}
for any $k \geq 2$. We also use induction to prove this. It is easily checked that $[\bGamma_2(1)]_{1,1}+[\bGamma_2(2)]_{1,1} = p^2+(1-p)^2 \geq 3p^3(1-p)+3p(1-p)^3 = [\bGamma_2(3)]_{1,1}+[\bGamma_2(4)]_{1,1}$. Suppose that $[\bGamma_2(2k-1)]_{1,1}+[\bGamma_2(2k)]_{1,1} \geq [\bGamma_2(2k+1)]_{1,1}+[\bGamma_2(2k+2)]_{1,1}$. Then, by $\bGamma_2(2k+1)+\bGamma_2(2k+2)=(\bGamma_2(2k-1)+\bGamma_2(2k))(\bPi^{\top})^{2}$, we can get 
\begin{align*}
\bGamma_2(2k+1)_{1,1}+\bGamma_2(2k+2)_{1,1} &= p^2 [\bGamma_2(2k-1)+\bGamma_2(2k)]_{1,K-1} \\
&\quad+ 2p(1-p)[\bGamma_2(2k-1)+\bGamma_2(2k)]_{1,1} \\
&\quad+ (1-p)^2[\bGamma_2(2k-1)+\bGamma_2(2k)]_{1,3} \\
&\leq \bGamma_2(2k-1)_{1,1}+\bGamma_2(2k)_{1,1},
\end{align*}
where the inequality is by the result demonstrated before. Therefore, we complete the induction and get that for $k \geq 1$,
\begin{align*}
\bGamma_2(1)_{1,1}+\bGamma_2(2)_{1,1}-\bGamma_2(2k+1)_{1,1}-\bGamma_2(2k+2)_{1,1} &\geq \bGamma_2(1)_{1,1}+\bGamma_2(2)_{1,1}-\bGamma_2(3)_{1,1}-\bGamma_2(4)_{1,1} \\
&= (p^2+(1-p)^2)(1-3p(1-p)).
\end{align*}
Since $\bGamma_2(2)_{1,1}=0$, we obtain for 
$k\geq2$,
\begin{align*}
\bGamma_2(1)_{1,1}-\bGamma_2(k)_{1,1} \geq (p^2+(1-p)^2)(1-3p(1-p)):=c_{p1}.
\end{align*}
We provide an intuitive explanation from the perspective of random walks. $\bPi(\bPi^{\top})^{k}$ represents first taking a step according to the transition $\bPi$, followed by $k$ steps according to the transition $\bPi^{\top}$. With increasing $k$, the distribution of the possible state after $k+1$ steps is closer to uniform across all states, regardless of the starting state chosen, which can also be shown by Lemma~\ref{lemma:entry_odd} and \ref{lemma:entry_even}. Thus, $\bPi(\bPi^{\top})^{k}\be_1$ will converge to a vector corresponding uniform distribution, and $\bPi\bPi^{\top}\be_1$ represents the most sparse case with the largest value concentrating on the first entry.

Combining all the results obtained above, we can conclude that 
\begin{align*}
&\left[\sum_{i=1}^{N-1}\bPi^{i}\cdot\bPi^{\top}\right] - \left[\sum_{i=1}^{N-1}\bPi^{i}\cdot(\bPi^{\top})^{k}\right] \\
&\quad = \be_{1}^{\top} \bGamma(N) \cdot \bGamma_2(1) \be_{1} - \be_{1}^{\top} \bGamma(N) \cdot \bGamma_2(k) \be_{1} \\
&\quad = \bGamma(N)_{1,1}\bGamma_2(1)_{1,1} + \sum_{i=2}^{K} \bGamma(N)_{1,i}\bGamma_2(1)_{i,1} - \bGamma(N)_{1,1}\bGamma_2(k)_{1,1} - \sum_{i=2}^{K} \bGamma(N)_{1,i}\bGamma_2(k)_{i,1} \\
&\quad \geq \bGamma(N)_{1,1}(\bGamma_2(1)_{1,1}-\bGamma_2(k)_{1,1}) + \min_{2 \leq i \leq K}\{ \bGamma(N)_{1,i} \} (1-\bGamma_2(1)_{1,1}) - \max_{2 \leq i \leq K}\{ \bGamma(N)_{1,i} \} (1-\bGamma_2(k)_{1,1}) \\
&\quad = \bGamma_2(1)_{1,1}(\bGamma(N)_{1,1}-\min_{2 \leq i \leq K}\{ \bGamma(N)_{1,i} \}) - \bGamma_2(k)_{1,1}(\bGamma(N)_{1,1}-\max_{2 \leq i \leq K}\{ \bGamma(N)_{1,i} \}) \\
&\qquad + \min_{2 \leq i \leq K}\{ \bGamma(N)_{1,i} \} - \max_{2 \leq i \leq K}\{ \bGamma(N)_{1,i} \} \\
&\quad \geq (\bGamma_2(1)_{1,1}-\bGamma_2(k)_{1,1}) (\bGamma(N)_{1,1}-\min_{2 \leq i \leq K}\{ \bGamma(N)_{1,i} \}) + \min_{2 \leq i \leq K}\{ \bGamma(N)_{1,i} \} - \max_{2 \leq i \leq K}\{ \bGamma(N)_{1,i} \} \\
&\quad \geq c_{p1} \left( (1-p)^{N_0-2} - c_{p0}\exp(-N_0) \right) - c_{p0}\exp(-N_0) \\
&\quad \geq C_p,
\end{align*}
where $C_p$ is a positive constant related to $p$.
\end{proof}

The following lemma shows the basic property of the positional embedding.

\begin{lemma}\label{lemma:position_prop}
Assume that 
\begin{align*}
\bp_i = \left[ \sin\left(\frac{i\pi}{M+1}\right), \sin\left(\frac{2i\pi}{M+1}\right), \ldots, \sin\left(\frac{Mi\pi}{M+1}\right) \right]^{\top}
\end{align*}
for $i\in[M]$. It holds that 
\begin{align*}
\bp_{i_1}^{\top}\bp_{i_2} = \begin{cases}
    \frac{M+1}{2} ~\text{for}~ i_1 = i_2;\\
    0 ~\text{for}~ i_1 \neq i_2.
\end{cases}
\end{align*}
\end{lemma}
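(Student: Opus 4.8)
The plan is to expand the inner product as a trigonometric sum and reduce it to the classical discrete orthogonality of sines via a telescoping argument. First I would write
\begin{align*}
\bp_{i_1}^{\top}\bp_{i_2} = \sum_{k=1}^{M}\sin\!\left(\frac{k i_1\pi}{M+1}\right)\sin\!\left(\frac{k i_2\pi}{M+1}\right) = \frac{1}{2}\sum_{k=1}^{M}\left[\cos\!\left(\frac{k(i_1-i_2)\pi}{M+1}\right) - \cos\!\left(\frac{k(i_1+i_2)\pi}{M+1}\right)\right],
\end{align*}
using the product-to-sum identity $\sin A\sin B = \tfrac12[\cos(A-B)-\cos(A+B)]$. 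Everything then reduces to evaluating $C(m) := \sum_{k=1}^{M}\cos\!\big(km\pi/(M+1)\big)$ for the two integers $m = i_1-i_2$ and $m = i_1+i_2$.

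Next I would establish the auxiliary identity: $C(0) = M$, and for any integer $m$ with $m\not\equiv 0 \pmod{2(M+1)}$,
\begin{align*}
C(m) = \frac{-1-(-1)^m}{2}, \qquad\text{i.e.}\qquad C(m) = -1 \text{ if } m \text{ is even}, \quad C(m) = 0 \text{ if } m \text{ is odd}.
\end{align*}
The case $m=0$ is immediate since every summand equals $1$. For $m\not\equiv 0\pmod{2(M+1)}$, set $\theta = m\pi/(M+1)$, so that $\sin(\theta/2)\neq 0$; multiplying the telescoping identity $2\sin(\theta/2)\cos(k\theta) = \sin\!\big((k+\tfrac12)\theta\big) - \sin\!\big((k-\tfrac12)\theta\big)$ and summing over $k=1,\ldots,M$ gives $2\sin(\theta/2)\,C(m) = \sin\!\big((M+\tfrac12)\theta\big) - \sin(\theta/2)$. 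Since $(M+\tfrac12)\theta = m\pi - \theta/2$, we have $\sin\!\big((M+\tfrac12)\theta\big) = -(-1)^m\sin(\theta/2)$, and dividing by $2\sin(\theta/2)$ yields the claimed formula.

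Finally I would split into the two cases. If $i_1=i_2$, then in the expansion the term $m=i_1-i_2=0$ contributes $C(0)=M$, while $m=i_1+i_2 = 2i_1$ satisfies $2\le m\le 2M<2(M+1)$, so it is a nonzero residue mod $2(M+1)$ and is even, contributing $C(m)=-1$; hence $\bp_{i_1}^{\top}\bp_{i_2} = \tfrac12\big(M-(-1)\big) = \tfrac{M+1}{2}$. If $i_1\neq i_2$, then $m_- = i_1-i_2$ satisfies $1\le|m_-|\le M-1$ and $m_+ = i_1+i_2$ satisfies $2\le m_+\le 2M$, so both are nonzero residues mod $2(M+1)$; moreover $m_+ - m_- = 2i_2$ is even, so $m_+$ and $m_-$ have the same parity, which forces $C(m_-) = C(m_+)$ and therefore $\bp_{i_1}^{\top}\bp_{i_2} = \tfrac12\big(C(m_-)-C(m_+)\big) = 0$.

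This argument is entirely elementary; the only points requiring care — and the closest thing to an obstacle — are the bookkeeping on the index ranges (checking that $i_1\pm i_2$ never hits a nonzero multiple of $2(M+1)$, so that $\sin(\theta/2)\neq 0$ and the telescoping is valid) and the parity observation $m_+\equiv m_-\pmod 2$ that makes the $i_1\neq i_2$ case collapse. An alternative to the telescoping step is to compute $\sum_{k=0}^{M} e^{\mathrm{i}k\theta}$ as a geometric series and take real parts, which gives the same value of $C(m)$; I would use whichever is cleaner in the write-up.
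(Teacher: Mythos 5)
Your proof is correct. It takes a somewhat different route from the paper's: the paper expands each product of sines via $\sin x = \frac{\exp(\mathrm{i}x)-\exp(-\mathrm{i}x)}{2\mathrm{i}}$, sums the resulting four complex geometric series directly, and then splits into explicit cases according to whether $i_1+i_2$ is even or odd (using $\exp(\mathrm{i}\pi k)=\pm 1$ and, in the odd case, the identity $\frac{1}{\exp(x)-1}+\frac{1}{\exp(-x)-1}=-1$), whereas you first apply the real product-to-sum identity, reduce everything to the single cosine sum $C(m)=\sum_{k=1}^{M}\cos\big(km\pi/(M+1)\big)$, evaluate it in closed form by a telescoping (Dirichlet-kernel) identity, and then dispose of all $i_1\neq i_2$ cases at once via the parity observation $C(i_1-i_2)=C(i_1+i_2)$. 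What your version buys is that it stays entirely in real arithmetic, isolates the reusable fact $C(m)\in\{M,-1,0\}$, and avoids the paper's separate even/odd case computations; the paper's version is a more direct brute-force evaluation, and your mentioned alternative (summing $\sum_{k=0}^{M}e^{\mathrm{i}k\theta}$ and taking real parts) is essentially the paper's computation in disguise. The index bookkeeping you flag is handled correctly: for $i_1,i_2\in[M]$ one has $|i_1-i_2|\leq M-1$ and $i_1+i_2\leq 2M<2(M+1)$, so $\sin(\theta/2)\neq 0$ whenever needed; the only cosmetic point worth adding is that $C$ is even in $m$ (or that the telescoping derivation is valid for negative $m$ as well), so using $m_-=i_1-i_2$ without taking absolute values is legitimate.
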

\begin{proof}[\textbf{Proof of Lemma~\ref{lemma:position_prop}}]
When $i_1 \neq i_2$ and $i_1+i_2$ are even, we have
\begin{align*}
\bp_{i_1}^{\top}\bp_{i_2} &= \sum_{j=1}^{M} \sin\left(\frac{ji_1\pi}{M+1}\right)\sin\left(\frac{ji_2\pi}{M+1}\right)\\
&= \sum_{j=0}^{M} \sin\left(\frac{ji_1\pi}{M+1}\right)\sin\left(\frac{ji_2\pi}{M+1}\right)\\
&= \frac{1}{4}\sum_{j=0}^{M} \bigg[ \exp\left(\mathrm{i}\pi\frac{i_1-i_2}{M+1}j\right) + \exp\left(-\mathrm{i}\pi\frac{i_1-i_2}{M+1}j\right) \\
&\qquad\qquad- \exp\left(\mathrm{i}\pi\frac{i_1+i_2}{M+1}j\right) - \exp\left(-\mathrm{i}\pi\frac{i_1+i_2}{M+1}j\right) \bigg]\\
&= \frac{1}{4} \cdot \frac{\exp\left(\mathrm{i}\pi(i_1-i_2)\right)-1}{\exp\left(\mathrm{i}\pi\frac{i_1-i_2}{M+1}\right)-1} + \frac{1}{4} \cdot \frac{\exp\left(-\mathrm{i}\pi(i_1-i_2)\right)-1}{\exp\left(-\mathrm{i}\pi\frac{i_1-i_2}{M+1}\right)-1} \\
&\quad- \frac{1}{4} \cdot \frac{\exp\left(\mathrm{i}\pi(i_1+i_2)\right)-1}{\exp\left(\mathrm{i}\pi\frac{i_1+i_2}{M+1}\right)-1} - \frac{1}{4} \cdot \frac{\exp\left(-\mathrm{i}\pi(i_1+i_2)\right)-1}{\exp\left(-\mathrm{i}\pi\frac{i_1+i_2}{M+1}\right)-1}\\
&= 0,
\end{align*}
where the third equation is by $\sin(x)=\frac{\exp(\mathrm{i}x)-\exp(-\mathrm{i}x)}{2\mathrm{i}}$, and the last inequality is by $\exp(\mathrm{i}\pi k)=1$ for even $k$.
When $i_1 \neq i_2$ and $i_1+i_2$ are odd, we have
\begin{align*}
\bp_{i_1}^{\top}\bp_{i_2} &= \sum_{j=1}^{M} \sin\left(\frac{ji_1\pi}{M+1}\right)\sin\left(\frac{ji_2\pi}{M+1}\right)\\
&= \sum_{j=0}^{M} \sin\left(\frac{ji_1\pi}{M+1}\right)\sin\left(\frac{ji_2\pi}{M+1}\right)\\
&= \frac{1}{4}\sum_{j=0}^{M} \bigg[ \exp\left(\mathrm{i}\pi\frac{i_1-i_2}{M+1}j\right) + \exp\left(-\mathrm{i}\pi\frac{i_1-i_2}{M+1}j\right) \\
&\qquad\qquad- \exp\left(\mathrm{i}\pi\frac{i_1+i_2}{M+1}j\right) - \exp\left(-\mathrm{i}\pi\frac{i_1+i_2}{M+1}j\right) \bigg]\\
&= \frac{1}{4} \cdot \frac{\exp\left(\mathrm{i}\pi(i_1-i_2)\right)-1}{\exp\left(\mathrm{i}\pi\frac{i_1-i_2}{M+1}\right)-1} + \frac{1}{4} \cdot \frac{\exp\left(-\mathrm{i}\pi(i_1-i_2)\right)-1}{\exp\left(-\mathrm{i}\pi\frac{i_1-i_2}{M+1}\right)-1} \\
&\quad- \frac{1}{4} \cdot \frac{\exp\left(\mathrm{i}\pi(i_1+i_2)\right)-1}{\exp\left(\mathrm{i}\pi\frac{i_1+i_2}{M+1}\right)-1} - \frac{1}{4} \cdot \frac{\exp\left(-\mathrm{i}\pi(i_1+i_2)\right)-1}{\exp\left(-\mathrm{i}\pi\frac{i_1+i_2}{M+1}\right)-1}\\
&= -\frac{1}{2} \left( \frac{1}{\exp\left(\mathrm{i}\pi\frac{i_1-i_2}{M+1}\right)-1} + \frac{1}{\exp\left(-\mathrm{i}\pi\frac{i_1-i_2}{M+1}\right)-1} \right) \\
&\quad+ \frac{1}{2} \left( \frac{1}{\exp\left(\mathrm{i}\pi\frac{i_1+i_2}{M+1}\right)-1} + \frac{1}{\exp\left(-\mathrm{i}\pi\frac{i_1+i_2}{M+1}\right)-1} \right)\\
&= 0,
\end{align*}
where the third equation is by $\sin(x)=\frac{\exp(\mathrm{i}x)-\exp(-\mathrm{i}x)}{2\mathrm{i}}$, the fifth inequality is by $\exp(\mathrm{i}\pi k)=-1$ for odd $k$, and the last equation is by $\frac{1}{\exp(x)-1}+\frac{1}{\exp(-x)-1}=-1$. 
When $i_1=i_2$, we have
\begin{align*}
\bp_{i_1}^{\top}\bp_{i_2} &= \sum_{j=1}^{M} \sin\left(\frac{ji_1\pi}{M+1}\right)\sin\left(\frac{ji_2\pi}{M+1}\right)\\
&= \sum_{j=0}^{M} \sin\left(\frac{ji_1\pi}{M+1}\right)\sin\left(\frac{ji_2\pi}{M+1}\right)\\
&= \frac{1}{4}\sum_{j=0}^{M} \bigg[ \exp\left(\mathrm{i}\pi\frac{i_1-i_2}{M+1}j\right) + \exp\left(-\mathrm{i}\pi\frac{i_1-i_2}{M+1}j\right) \\
&\qquad\qquad- \exp\left(\mathrm{i}\pi\frac{i_1+i_2}{M+1}j\right) - \exp\left(-\mathrm{i}\pi\frac{i_1+i_2}{M+1}j\right) \bigg]\\
&= \frac{M+1}{2} - \frac{1}{4}\sum_{j=0}^{M} \left[ \exp\left(\mathrm{i}\pi\frac{i_1+i_2}{M+1}j\right) + \exp\left(-\mathrm{i}\pi\frac{i_1+i_2}{M+1}j\right) \right]\\
&= \frac{M+1}{2} - \frac{1}{4} \cdot \frac{\exp\left(\mathrm{i}\pi(i_1+i_2)\right)-1}{\exp\left(\mathrm{i}\pi\frac{i_1+i_2}{M+1}\right)-1} - \frac{1}{4} \cdot \frac{\exp\left(-\mathrm{i}\pi(i_1+i_2)\right)-1}{\exp\left(-\mathrm{i}\pi\frac{i_1+i_2}{M+1}\right)-1}\\
&= \frac{M+1}{2},
\end{align*}
where the third equation is by $\sin(x)=\frac{\exp(\mathrm{i}x)-\exp(-\mathrm{i}x)}{2\mathrm{i}}$, and the last inequality is by $\exp(\mathrm{i}\pi k)=1$ for even $k$.
\end{proof}

\section{Additional Experiments}

\subsection{Additional Experiments on Random/Deterministic Walks}

In this subsection, we provide additional experiments on synthetic data with $(K,N)=(20,101)$. We consider the transformer model introduced in Section~\ref{sec:setup} with the length of the position embedding M = 1000. To train the model, we utilize gradient descent starting with zero initialization, where the learning rate $\eta=1$ and the constant $\epsilon$ in the log-loss is set as $\epsilon=0.1$. And, we run the gradient descent algorithm for T = 50 training epochs. Figure~\ref{fig:results_random_big} and Figure~\ref{fig:results_deterministic_2} illustrate the experiments for $p=0.5$ and $p=1$ respectively. These experimental results match Theorem~\ref{theorem:random} and Theorem~\ref{theorem:deterministic}, which also strongly supports our theoretical results.

\begin{figure}[ht]
	\begin{center}
 		\hspace{-0.2in}
            
            \subfigure[Accuracy]{\includegraphics[width=0.24\textwidth]{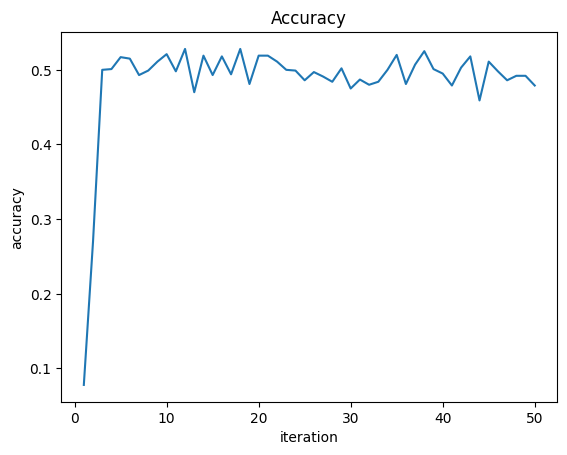}\label{subfig:accuracy_random_2}}            \subfigure[Visualization of $\bV^{(T)}$]{\includegraphics[width=0.24\textwidth]{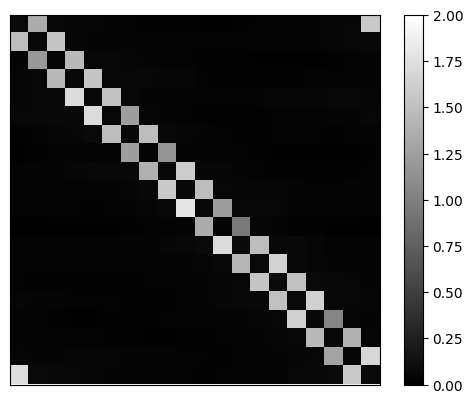}\label{subfig:V_random_2}}
            \subfigure[The average attention]{\includegraphics[width=0.24\textwidth]{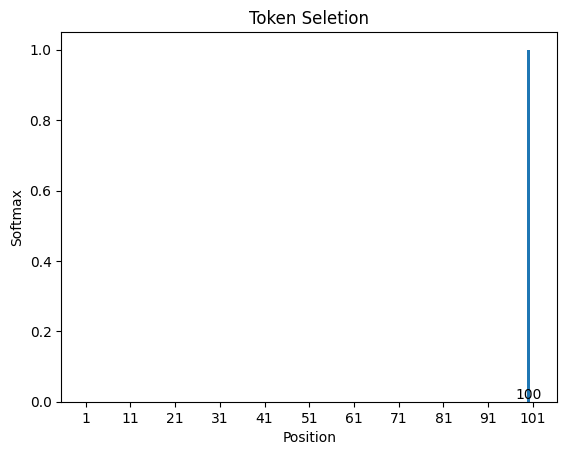}\label{subfig:softmax_random_2}}
            \subfigure[Part of the attention]{\includegraphics[width=0.24\textwidth]{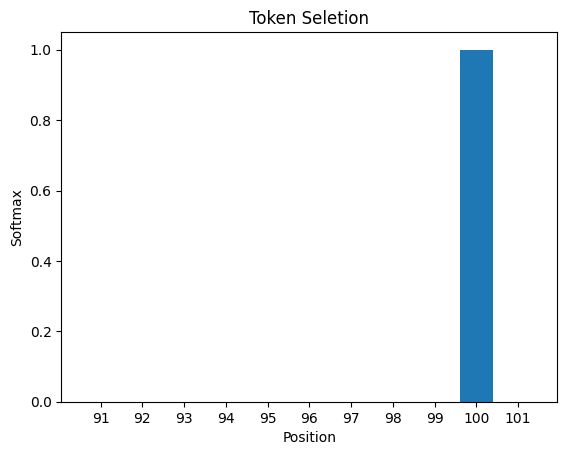}\label{subfig:softmax_random2_part}}
	\end{center}
	\vskip -12pt
        \caption{The results of the experiment for $p=0.5$ with $(K,N)=(20,101)$: (a) is the test accuracy; (b) is the visualization of $\bV^{(T)}$; (c) and (d) present the average attention of the test data with x-axis representing the position of the token and y-axis representing the attention score.}
	\label{fig:results_random_big}
\end{figure}

\begin{figure}[ht]
	\begin{center}
 		\hspace{-0.2in}
            \subfigure[Accuracy]{\includegraphics[width=0.31\textwidth]{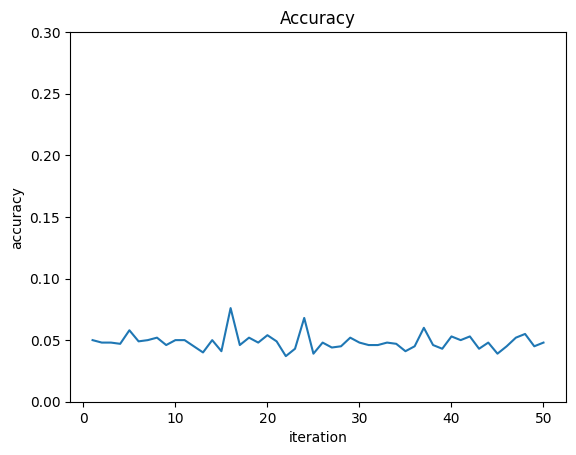}\label{subfig:accuracy_deterministic_2}}
            \subfigure[Visualization of $\bV^{(T)}$]{\includegraphics[width=0.31\textwidth]{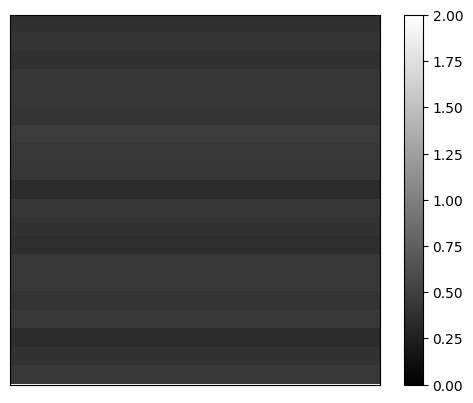}\label{subfig:V_deterministic_2}}
            \subfigure[Average attention of test data]{\includegraphics[width=0.31\textwidth]{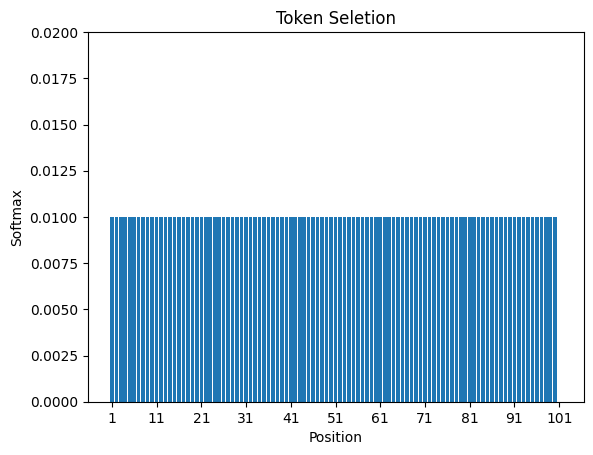}\label{subfig:softmax_deterministic_2}}
	\end{center}
	\vskip -12pt
        \caption{The results of the experiment for $p=1$ with $K=20,N=101$. (a) is the prediction accuracy with $x$-axis representing the iteration and $y$-axis representing the accuracy. (b) is the visualization of $\bV$. (c) is the average attention of the test data with $x$-axis representing the position of the token and $y$-axis representing the attention score.}
	\label{fig:results_deterministic_2}
\end{figure}

\subsection{Additional Experiments on the Question Answering Tasks in Section~\ref{sec:nlp}}

In this section, we conduct some additional experiments for Task 1 and Task 2 discussed in Section~\ref{sec:nlp}. We conduct some additional experiments extending the one-layer transformer model to a more complicated model by adding a fully connected layer with ReLU activation to the transformer model. The new model has the form
\begin{align}\label{eq:complicated_model}
    f_{\theta}(\bX) = \bA\cdot\text{ReLU}\left(\bV \bX\text{Softmax}\left(\tilde{\bX}^{\top} \bW \tilde{\bx}_{N}\right)\right),
\end{align}
where $\bA \in \RR^{K \times m}$, $\bV \in \RR^{m \times K}$, $\bW \in \RR^{(K+M) \times (K+M)}$ are the trainable parameter matrices, and $m$ is the number of neurons in the fully connected layer. For Task 1 and Task 2, the length of the vocabulary $K$ and the length of each input sequence $N$ are set as $(K,N)=(19,17),(19,19)$ respectively. In addition, we set the positional embedding $M=1000$ and the number of neurons $m=19$. To train the model, we consider the Gaussian random initialization  $\bA_{ij}^{(0)},\bV_{ij}^{(0)},\bW_{ij}^{(0)} \sim N(0,\sigma^2)$ with $\sigma=0.01$, and use gradient descent with learning rate $\eta=0.1$. The constant $\epsilon$ in the log-loss is set as $\epsilon=0.1$. Both experiments are conducted on 1024 training data and 1024 test data. Here, most of the settings remain the same as in the previous experiments in Section~\ref{sec:nlp}.

\begin{figure}[ht]
	\begin{center}
 		\vspace{-0.1in}
            \subfigure[Accuracy]{\includegraphics[width=0.24\textwidth]{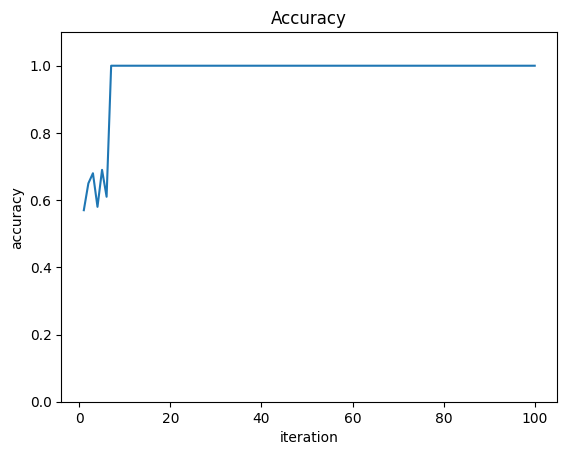}\label{subfig:accuracy_random_real_twolayer}}
            \subfigure[KL-Divergence]{\includegraphics[width=0.24\textwidth]{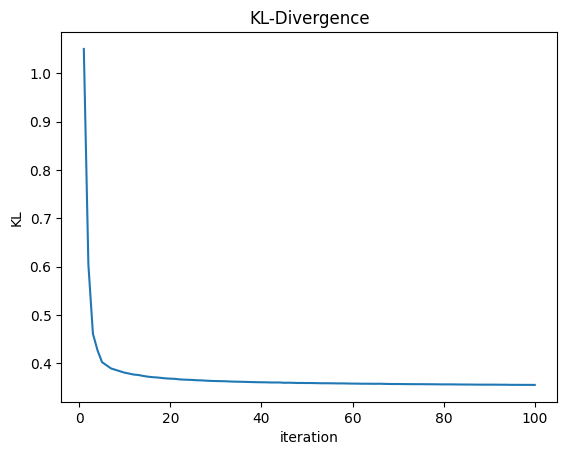}\label{subfig:KL_random_real_twolayer}}
            \subfigure[Accuracy]{\includegraphics[width=0.24\textwidth]{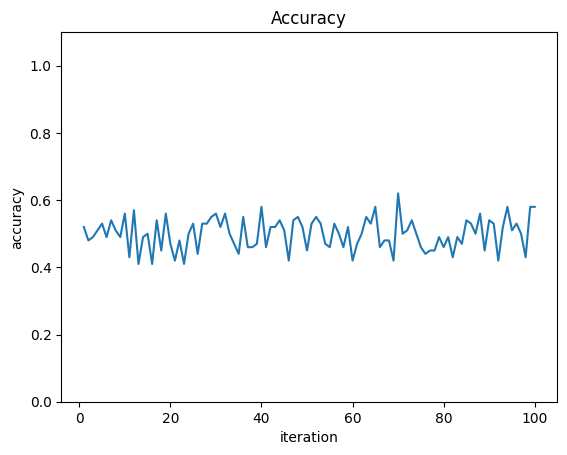}\label{subfig:accuracy_deterministic_real_twolayer}}
            \subfigure[KL-Divergence]{\includegraphics[width=0.24\textwidth]{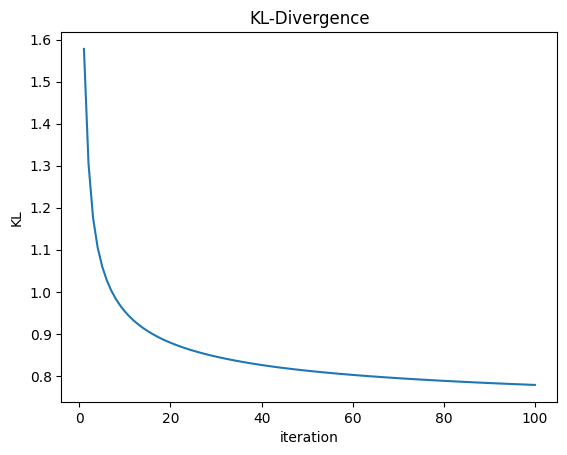}\label{subfig:KL_deterministic_real_twolayer}}
	\end{center}
	\vskip -12pt
        \caption{The results of the experiment conducted using a more complicated transformer for Task 1 and Task 2: (a) and (b) correspond to the experiment for Task 1; (c) and (d) correspond to the experiment for Task 2.}
	\label{fig:real_twolayer}
	\vspace{-.05in}
\end{figure}

Figure~\ref{fig:real_twolayer} shows the experiment results using the more complicated transformer in \eqref{eq:complicated_model} to learn Task 1 and Task 2. In Figure~\ref{subfig:accuracy_random_real_twolayer} and Figure~\ref{subfig:accuracy_deterministic_real_twolayer}, we present the test accuracy achieved by the transformer model in learning Task 1 and Task 2 respectively. In Figure~\ref{subfig:KL_random_real_twolayer} and Figure~\ref{subfig:KL_deterministic_real_twolayer}, we first normalize the output of the trained transformer model to get a $K$-dimensional vector, representing the prediction distribution of $K$ words. Then, we report the KL-divergence between this prediction distribution and the true distribution of $y|\bx_1,\bx_2,...,\bx_{N-1}$. The experiment results show a clear difference between the performances of the transformer model in the two tasks. In Task 1, the trained transformer model can successfully approach the optimal accuracy (100\%) within 100 iterations. However, in Task 2, the test accuracy always remains around 50\%, which is the accuracy of a random guess.

Despite using a more complicated transformer model with an additional feedforward layer of nonlinearities compared to the one considered in our theoretical analysis and previous experiments, the experimental results are still similar to those reported in Section~\ref{sec:nlp}. These results demonstrate that more complex transformer models may still struggle with the relatively 'simple' Task 2 but excel at the relatively 'difficult' Task 1. This indicates that our findings can be applied to cases involving additional nonlinearities, implying their applicability to more complex and general conditions.

\subsection{Visualizations of the value matrix and Softmax scores corresponding to Figure~\ref{fig:random_randominitial}}

\begin{figure}[ht]
	\begin{center}
 		\vspace{-0.2in}
            \subfigure[Value matrix]{\includegraphics[width=0.48\textwidth]{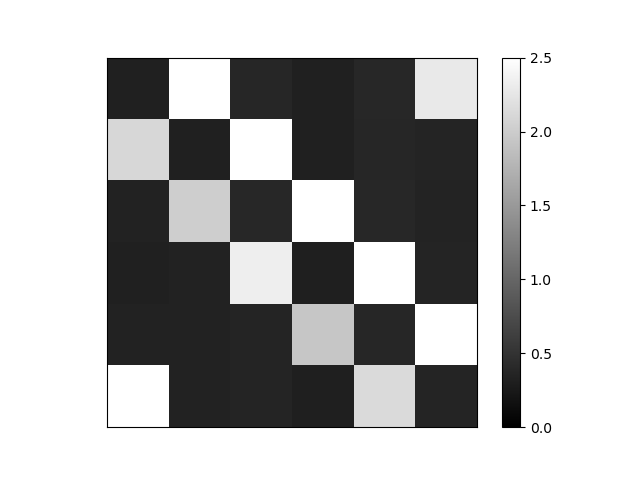}\label{subfig:V_random_training}} 
            \subfigure[Softmax scores]{\includegraphics[width=0.48\textwidth]{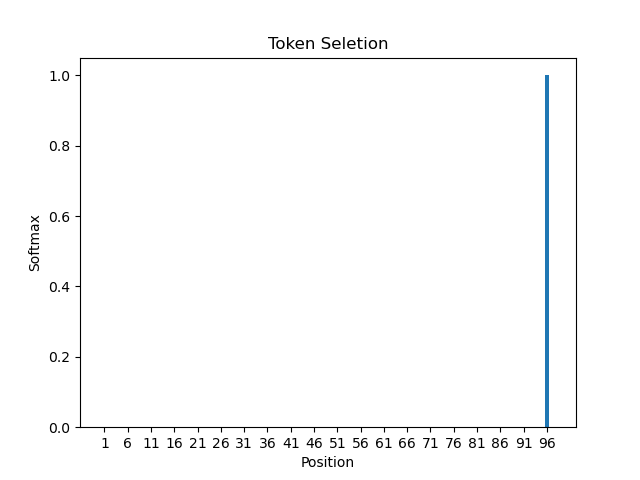}\label{subfig:softmax_random_training}}
	\end{center}
	\vskip -12pt
        \caption{Visualizations of the trained value matrix and the softmax scores corresponding to Figure~\ref{subfig:accuracy_RandomWalk} and Figure~\ref{subfig:KL_RandomWalk}.} \label{fig:random_training}
	\vspace{-.05in}
\end{figure}

\begin{figure}[ht]
	\begin{center}
 		\vspace{-0.2in}
            \subfigure[100 training steps]{\includegraphics[width=0.48\textwidth]{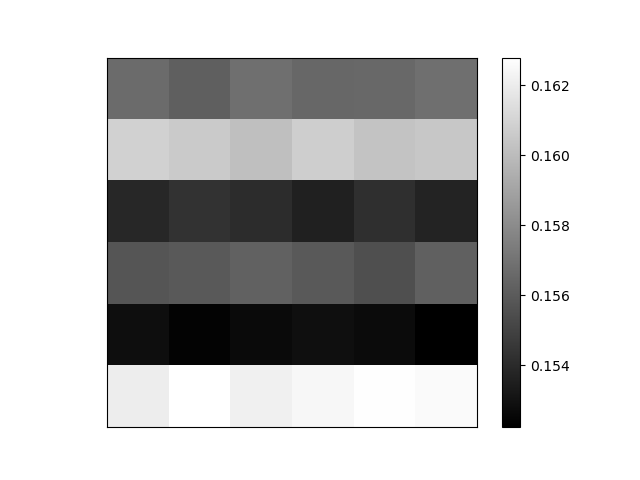}\label{subfig:V_deterministic_100}}            
            \subfigure[400 training steps]{\includegraphics[width=0.48\textwidth]{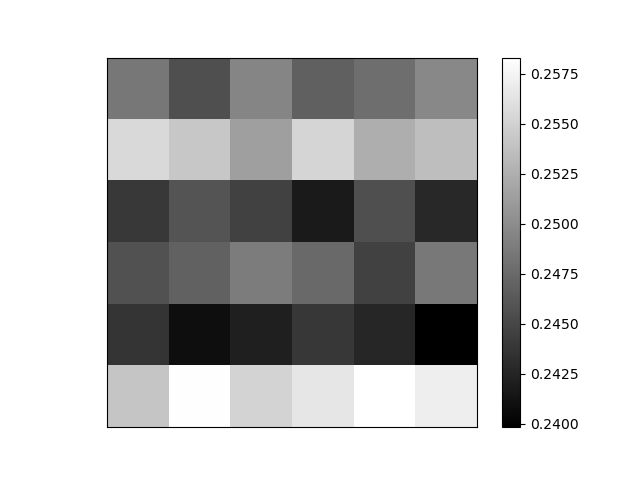}\label{subfig:V_deterministic_400}}
            \subfigure[700 training steps]{\includegraphics[width=0.48\textwidth]{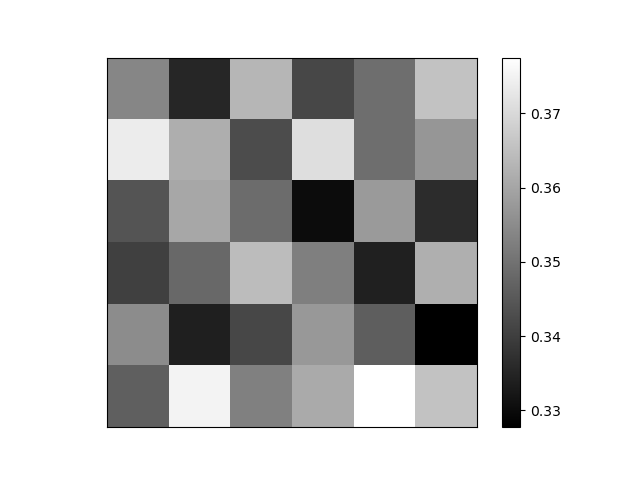}\label{subfig:V_deterministic_700}}
            \subfigure[1000 training steps]{\includegraphics[width=0.48\textwidth]{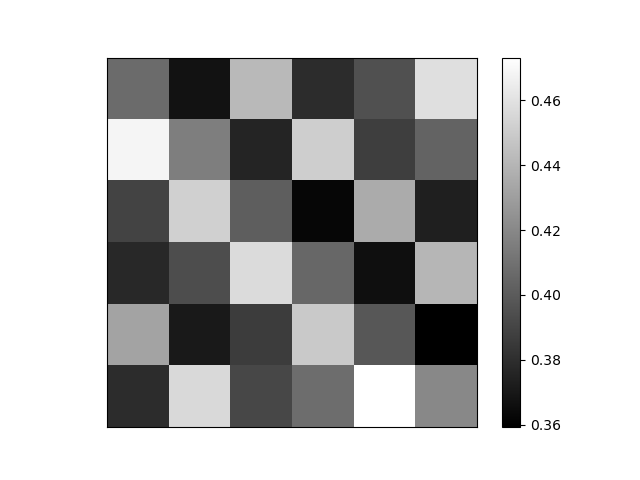}\label{subfig:V_deterministic_1000}}
	\end{center}
	\vskip -12pt
        \caption{Visualizations of the value matrix corresponding to Figure~\ref{subfig:accuracy_DeterministicWalk} and Figure~\ref{subfig:KL_DeterministicWalk} at 100, 400, 700, and 1000 training steps, respectively.} \label{fig:V_deterministic_training}
	\vspace{-.1in}
\end{figure}

\begin{figure}[h]
	\begin{center}
            \includegraphics[width=1.0\textwidth]{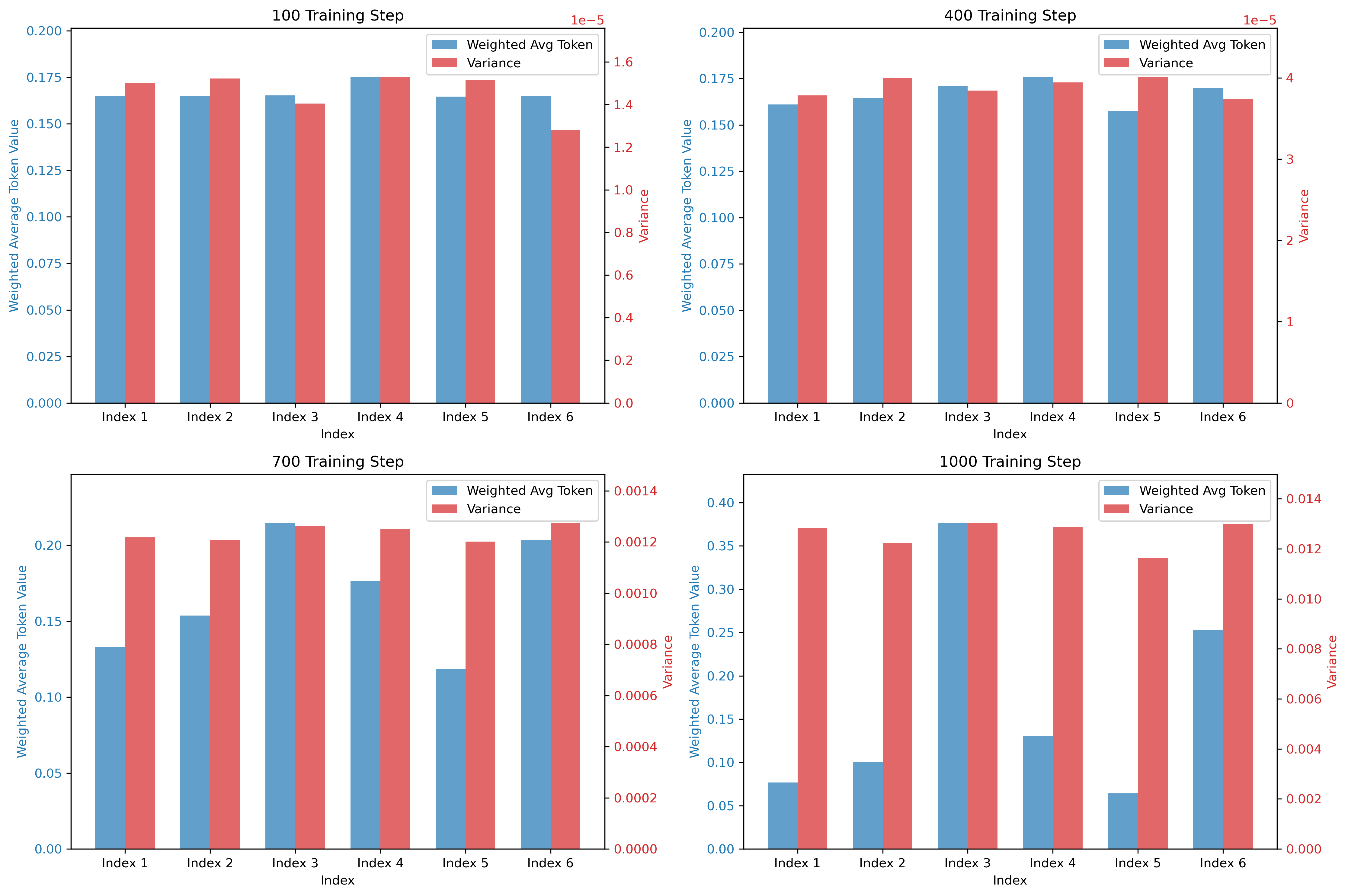}
	\end{center}
	\vskip -13pt
        \caption{Visualizations of the weighted average token $\bX\cdot\cS(\tilde{\bX}^{\top} \bW \tilde{\bx}_{N})$ corresponding to Figure~\ref{subfig:accuracy_DeterministicWalk} and Figure~\ref{subfig:KL_DeterministicWalk} at 100, 400, 700, and 1000 training steps, respectively.} \label{fig:softmax_deterministic_training}
	\vspace{-.05in}
\end{figure}

In this section, we present visualizations of the value matrix and softmax scores corresponding to Figure~\ref{fig:random_randominitial}. 

Figure~\ref{fig:random_training} visualizes the trained value matrix and the softmax scores attached to all tokens, corresponding to Figure~\ref{subfig:accuracy_RandomWalk} and Figure~\ref{subfig:KL_RandomWalk}. It shows that $\bV$ can recover the transition matrix $\bPi^{\top}$, and the softmax score attached to the direct parent token is the largest and close to 1. These findings demonstrate that our positive results for learning random walks in Theorem~\ref{theorem:random} are fairly robust to random initialization.

Figure~\ref{fig:V_deterministic_training} and Figure~\ref{fig:softmax_deterministic_training} display the value matrix and the weighted average token $\bX\cdot\cS(\tilde{\bX}^{\top} \bW \tilde{\bx}_{N})$ corresponding to Figure~\ref{subfig:accuracy_DeterministicWalk} and Figure~\ref{subfig:KL_DeterministicWalk} at 100, 400, 700, and 1000 training steps, respectively. In Figure~\ref{fig:softmax_deterministic_training}, the weighted average token embedding is calculated based on one sequence X as an example, while the variance of each dimension of the weighted average token is calculated across 1000 sequences. We can observe that when considering the case for $p=1$ with random initialization, the value matrix and the weighted average token embedding remain approximately proportional to the all-one matrix and the all-one vector within 700 iterations, aligning with the results stated in Theorem 4.1. Additionally, the variances of all dimensions are close to 0, which shows that this result is general for different sequences. However, after 1000 iterations, $\bV$ is no longer proportional to an all-one matrix, and the softmax score for one arbitrary token becomes much larger than the others. As discussed in Section~\ref{sec:pitfall}, the failure case for $p=0,1$ is indeed due to an unbreakable symmetry caused by zero initialization. The random initialization may break this symmetry, enabling the transformer to successfully learn the optimal predictor.

\bibliography{ref}

\begin{thebibliography}{35}
\expandafter\ifx\csname natexlab\endcsname\relax\def\natexlab#1{#1}\fi
\expandafter\ifx\csname url\endcsname\relax
  \def\url#1{\texttt{#1}}\fi
\expandafter\ifx\csname urlprefix\endcsname\relax\def\urlprefix{URL }\fi

\bibitem[{Abbe et~al.(2024)Abbe, Bengio, Boix-Adsera, Littwin and Susskind}]{abbe2024transformers}
\textsc{Abbe, E.}, \textsc{Bengio, S.}, \textsc{Boix-Adsera, E.}, \textsc{Littwin, E.} and \textsc{Susskind, J.} (2024).
\newblock Transformers learn through gradual rank increase.
\newblock \textit{Advances in Neural Information Processing Systems} \textbf{36}.

\bibitem[{Achiam et~al.(2023)Achiam, Adler, Agarwal, Ahmad, Akkaya, Aleman, Almeida, Altenschmidt, Altman, Anadkat et~al.}]{achiam2023gpt}
\textsc{Achiam, J.}, \textsc{Adler, S.}, \textsc{Agarwal, S.}, \textsc{Ahmad, L.}, \textsc{Akkaya, I.}, \textsc{Aleman, F.~L.}, \textsc{Almeida, D.}, \textsc{Altenschmidt, J.}, \textsc{Altman, S.}, \textsc{Anadkat, S.} \textsc{et~al.} (2023).
\newblock Gpt-4 technical report.
\newblock \textit{arXiv preprint arXiv:2303.08774} .

\bibitem[{Bietti et~al.(2024)Bietti, Cabannes, Bouchacourt, Jegou and Bottou}]{bietti2024birth}
\textsc{Bietti, A.}, \textsc{Cabannes, V.}, \textsc{Bouchacourt, D.}, \textsc{Jegou, H.} and \textsc{Bottou, L.} (2024).
\newblock Birth of a transformer: A memory viewpoint.
\newblock \textit{Advances in Neural Information Processing Systems} \textbf{36}.

\bibitem[{Cao et~al.(2025)Cao, He, Wu, Chen, Fan and Liu}]{cao2025transformers}
\textsc{Cao, Y.}, \textsc{He, Y.}, \textsc{Wu, D.}, \textsc{Chen, H.-Y.}, \textsc{Fan, J.} and \textsc{Liu, H.} (2025).
\newblock Transformers simulate mle for sequence generation in bayesian networks.
\newblock \textit{arXiv preprint arXiv:2501.02547} .

\bibitem[{Chen et~al.(2021)Chen, Lu, Rajeswaran, Lee, Grover, Laskin, Abbeel, Srinivas and Mordatch}]{chen2021decision}
\textsc{Chen, L.}, \textsc{Lu, K.}, \textsc{Rajeswaran, A.}, \textsc{Lee, K.}, \textsc{Grover, A.}, \textsc{Laskin, M.}, \textsc{Abbeel, P.}, \textsc{Srinivas, A.} and \textsc{Mordatch, I.} (2021).
\newblock Decision transformer: Reinforcement learning via sequence modeling.
\newblock \textit{Advances in neural information processing systems} \textbf{34} 15084--15097.

\bibitem[{Chen et~al.(2024)Chen, Sheen, Wang and Yang}]{chen2024unveiling}
\textsc{Chen, S.}, \textsc{Sheen, H.}, \textsc{Wang, T.} and \textsc{Yang, Z.} (2024).
\newblock Unveiling induction heads: Provable training dynamics and feature learning in transformers.
\newblock \textit{Advances in Neural Information Processing Systems} \textbf{37} 66479--66567.

\bibitem[{D'Angelo et~al.(2025)D'Angelo, Croce and Flammarion}]{d2025selective}
\textsc{D'Angelo, F.}, \textsc{Croce, F.} and \textsc{Flammarion, N.} (2025).
\newblock Selective induction heads: How transformers select causal structures in context.
\newblock In \textit{The Thirteenth International Conference on Learning Representations}.

\bibitem[{Dosovitskiy et~al.(2020)Dosovitskiy, Beyer, Kolesnikov, Weissenborn, Zhai, Unterthiner, Dehghani, Minderer, Heigold, Gelly et~al.}]{dosovitskiy2020image}
\textsc{Dosovitskiy, A.}, \textsc{Beyer, L.}, \textsc{Kolesnikov, A.}, \textsc{Weissenborn, D.}, \textsc{Zhai, X.}, \textsc{Unterthiner, T.}, \textsc{Dehghani, M.}, \textsc{Minderer, M.}, \textsc{Heigold, G.}, \textsc{Gelly, S.} \textsc{et~al.} (2020).
\newblock An image is worth 16x16 words: Transformers for image recognition at scale.
\newblock \textit{arXiv preprint arXiv:2010.11929} .

\bibitem[{Edelman et~al.(2024)Edelman, Tsilivis, Edelman, Malach and Goel}]{edelman2024evolution}
\textsc{Edelman, E.}, \textsc{Tsilivis, N.}, \textsc{Edelman, B.}, \textsc{Malach, E.} and \textsc{Goel, S.} (2024).
\newblock The evolution of statistical induction heads: In-context learning markov chains.
\newblock \textit{Advances in Neural Information Processing Systems} \textbf{37} 64273--64311.

\bibitem[{He and Su(2025)}]{he2025law}
\textsc{He, H.} and \textsc{Su, W.~J.} (2025).
\newblock A law of next-token prediction in large language models.
\newblock \textit{Physical Review E} \textbf{112} 035317.

\bibitem[{Huang et~al.(2024)Huang, Cheng and Liang}]{huangcontext}
\textsc{Huang, Y.}, \textsc{Cheng, Y.} and \textsc{Liang, Y.} (2024).
\newblock In-context convergence of transformers.
\newblock In \textit{Forty-first International Conference on Machine Learning}.

\bibitem[{Ildiz et~al.(2024)Ildiz, HUANG, Li, Rawat and Oymak}]{ildizself}
\textsc{Ildiz, M.~E.}, \textsc{HUANG, Y.}, \textsc{Li, Y.}, \textsc{Rawat, A.~S.} and \textsc{Oymak, S.} (2024).
\newblock From self-attention to markov models: Unveiling the dynamics of generative transformers.
\newblock In \textit{Forty-first International Conference on Machine Learning}.

\bibitem[{Janner et~al.(2021)Janner, Li and Levine}]{janner2021offline}
\textsc{Janner, M.}, \textsc{Li, Q.} and \textsc{Levine, S.} (2021).
\newblock Offline reinforcement learning as one big sequence modeling problem.
\newblock \textit{Advances in neural information processing systems} \textbf{34} 1273--1286.

\bibitem[{Jelassi et~al.(2022)Jelassi, Sander and Li}]{jelassi2022vision}
\textsc{Jelassi, S.}, \textsc{Sander, M.} and \textsc{Li, Y.} (2022).
\newblock Vision transformers provably learn spatial structure.
\newblock \textit{Advances in Neural Information Processing Systems} \textbf{35} 37822--37836.

\bibitem[{Jumper et~al.(2021)Jumper, Evans, Pritzel, Green, Figurnov, Ronneberger, Tunyasuvunakool, Bates, {\v{Z}}{\'\i}dek, Potapenko et~al.}]{jumper2021highly}
\textsc{Jumper, J.}, \textsc{Evans, R.}, \textsc{Pritzel, A.}, \textsc{Green, T.}, \textsc{Figurnov, M.}, \textsc{Ronneberger, O.}, \textsc{Tunyasuvunakool, K.}, \textsc{Bates, R.}, \textsc{{\v{Z}}{\'\i}dek, A.}, \textsc{Potapenko, A.} \textsc{et~al.} (2021).
\newblock Highly accurate protein structure prediction with alphafold.
\newblock \textit{nature} \textbf{596} 583--589.

\bibitem[{Li et~al.(2024{\natexlab{a}})Li, Huang, Ildiz, Rawat and Oymak}]{li2024mechanics}
\textsc{Li, Y.}, \textsc{Huang, Y.}, \textsc{Ildiz, M.~E.}, \textsc{Rawat, A.~S.} and \textsc{Oymak, S.} (2024{\natexlab{a}}).
\newblock Mechanics of next token prediction with self-attention.
\newblock In \textit{International Conference on Artificial Intelligence and Statistics}. PMLR.

\bibitem[{Li et~al.(2023)Li, Li and Risteski}]{li2023transformers}
\textsc{Li, Y.}, \textsc{Li, Y.} and \textsc{Risteski, A.} (2023).
\newblock How do transformers learn topic structure: Towards a mechanistic understanding.
\newblock In \textit{International Conference on Machine Learning}. PMLR.

\bibitem[{Li et~al.(2024{\natexlab{b}})Li, Cao, Gao, He, Liu, Jason, Fan and Wang}]{li2024transformernearestneighbor}
\textsc{Li, Z.}, \textsc{Cao, Y.}, \textsc{Gao, C.}, \textsc{He, Y.}, \textsc{Liu, H.}, \textsc{Jason, K.}, \textsc{Fan, J.} and \textsc{Wang, M.} (2024{\natexlab{b}}).
\newblock One-layer transformer provably learns one-nearest neighbor in context.
\newblock In \textit{Advances in Neural Information Processing Systems}.

\bibitem[{Lu et~al.(2024)Lu, Shi, Liu, Hu, Du and Xu}]{lurethinking}
\textsc{Lu, C.}, \textsc{Shi, R.}, \textsc{Liu, Y.}, \textsc{Hu, K.}, \textsc{Du, S.~S.} and \textsc{Xu, H.} (2024).
\newblock Rethinking transformers in solving pomdps.
\newblock In \textit{Forty-first International Conference on Machine Learning}.

\bibitem[{Mahankali et~al.(2024)Mahankali, Hashimoto and Ma}]{mahankalione}
\textsc{Mahankali, A.~V.}, \textsc{Hashimoto, T.} and \textsc{Ma, T.} (2024).
\newblock One step of gradient descent is provably the optimal in-context learner with one layer of linear self-attention.
\newblock In \textit{The Twelfth International Conference on Learning Representations}.

\bibitem[{Makkuva et~al.(2025)Makkuva, Bondaschi, Girish, Nagle, Jaggi, Kim and Gastpar}]{makkuva2025attention}
\textsc{Makkuva, A.~V.}, \textsc{Bondaschi, M.}, \textsc{Girish, A.}, \textsc{Nagle, A.}, \textsc{Jaggi, M.}, \textsc{Kim, H.} and \textsc{Gastpar, M.} (2025).
\newblock Attention with markov: A curious case of single-layer transformers.
\newblock In \textit{The Thirteenth International Conference on Learning Representations}.

\bibitem[{Nichani et~al.(2024)Nichani, Damian and Lee}]{nichanitransformers}
\textsc{Nichani, E.}, \textsc{Damian, A.} and \textsc{Lee, J.~D.} (2024).
\newblock How transformers learn causal structure with gradient descent.
\newblock In \textit{Forty-first International Conference on Machine Learning}.

\bibitem[{Radford et~al.(2019)Radford, Wu, Child, Luan, Amodei, Sutskever et~al.}]{radford2019language}
\textsc{Radford, A.}, \textsc{Wu, J.}, \textsc{Child, R.}, \textsc{Luan, D.}, \textsc{Amodei, D.}, \textsc{Sutskever, I.} \textsc{et~al.} (2019).
\newblock Language models are unsupervised multitask learners.
\newblock \textit{OpenAI blog} \textbf{1} 9.

\bibitem[{Rao et~al.(2021)Rao, Zhao, Liu, Lu, Zhou and Hsieh}]{rao2021dynamicvit}
\textsc{Rao, Y.}, \textsc{Zhao, W.}, \textsc{Liu, B.}, \textsc{Lu, J.}, \textsc{Zhou, J.} and \textsc{Hsieh, C.-J.} (2021).
\newblock Dynamicvit: Efficient vision transformers with dynamic token sparsification.
\newblock \textit{Advances in neural information processing systems} \textbf{34} 13937--13949.

\bibitem[{Svete and Cotterell(2024)}]{svete2024transformers}
\textsc{Svete, A.} and \textsc{Cotterell, R.} (2024).
\newblock Transformers can represent n-gram language models.
\newblock In \textit{Proceedings of the 2024 Conference of the North American Chapter of the Association for Computational Linguistics: Human Language Technologies (Volume 1: Long Papers)}.

\bibitem[{Tarzanagh et~al.(2023{\natexlab{a}})Tarzanagh, Li, Thrampoulidis and Oymak}]{tarzanagh2023transformers}
\textsc{Tarzanagh, D.~A.}, \textsc{Li, Y.}, \textsc{Thrampoulidis, C.} and \textsc{Oymak, S.} (2023{\natexlab{a}}).
\newblock Transformers as support vector machines.
\newblock \textit{arXiv preprint arXiv:2308.16898} .

\bibitem[{Tarzanagh et~al.(2023{\natexlab{b}})Tarzanagh, Li, Zhang and Oymak}]{ataee2023max}
\textsc{Tarzanagh, D.~A.}, \textsc{Li, Y.}, \textsc{Zhang, X.} and \textsc{Oymak, S.} (2023{\natexlab{b}}).
\newblock Max-margin token selection in attention mechanism.
\newblock \textit{Advances in Neural Information Processing Systems} \textbf{36} 48314--48362.

\bibitem[{Thrampoulidis(2024)}]{thrampoulidis2024implicit}
\textsc{Thrampoulidis, C.} (2024).
\newblock Implicit optimization bias of next-token prediction in linear models.
\newblock \textit{Advances in Neural Information Processing Systems} \textbf{37} 22624--22656.

\bibitem[{Tian et~al.(2023)Tian, Wang, Chen and Du}]{tian2023scan}
\textsc{Tian, Y.}, \textsc{Wang, Y.}, \textsc{Chen, B.} and \textsc{Du, S.~S.} (2023).
\newblock Scan and snap: Understanding training dynamics and token composition in 1-layer transformer.
\newblock \textit{Advances in Neural Information Processing Systems} \textbf{36} 71911--71947.

\bibitem[{Tian et~al.(2024)Tian, Wang, Zhang, Chen and Du}]{tianjoma}
\textsc{Tian, Y.}, \textsc{Wang, Y.}, \textsc{Zhang, Z.}, \textsc{Chen, B.} and \textsc{Du, S.~S.} (2024).
\newblock Joma: Demystifying multilayer transformers via joint dynamics of mlp and attention.
\newblock In \textit{The Twelfth International Conference on Learning Representations}.

\bibitem[{Vaswani et~al.(2017)Vaswani, Shazeer, Parmar, Uszkoreit, Jones, Gomez, Kaiser and Polosukhin}]{vaswani2017attention}
\textsc{Vaswani, A.}, \textsc{Shazeer, N.}, \textsc{Parmar, N.}, \textsc{Uszkoreit, J.}, \textsc{Jones, L.}, \textsc{Gomez, A.~N.}, \textsc{Kaiser, L.} and \textsc{Polosukhin, I.} (2017).
\newblock Attention is all you need.
\newblock \textit{Advances in Neural Information Processing Systems} .

\bibitem[{Wang et~al.(2024)Wang, Wei, Hsu and Lee}]{wangtransformers}
\textsc{Wang, Z.}, \textsc{Wei, S.}, \textsc{Hsu, D.} and \textsc{Lee, J.~D.} (2024).
\newblock Transformers provably learn sparse token selection while fully-connected nets cannot.
\newblock In \textit{Forty-first International Conference on Machine Learning}.

\bibitem[{Zhang et~al.(2024{\natexlab{a}})Zhang, Meng and Cao}]{zhangtransformer}
\textsc{Zhang, C.}, \textsc{Meng, X.} and \textsc{Cao, Y.} (2024{\natexlab{a}}).
\newblock Transformer learns optimal variable selection in group-sparse classification.
\newblock In \textit{The Thirteenth International Conference on Learning Representations}.

\bibitem[{Zhang et~al.(2025)Zhang, Meng and Cao}]{zhang2025transformer}
\textsc{Zhang, C.}, \textsc{Meng, X.} and \textsc{Cao, Y.} (2025).
\newblock Transformer learns optimal variable selection in group-sparse classification.
\newblock In \textit{The Thirteenth International Conference on Learning Representations}.

\bibitem[{Zhang et~al.(2024{\natexlab{b}})Zhang, Frei and Bartlett}]{zhang2024trained}
\textsc{Zhang, R.}, \textsc{Frei, S.} and \textsc{Bartlett, P.~L.} (2024{\natexlab{b}}).
\newblock Trained transformers learn linear models in-context.
\newblock \textit{Journal of Machine Learning Research} \textbf{25} 1--55.

\end{thebibliography}

\bibliographystyle{ims}

\end{document}